\theoremstyle{plain}
\newtheorem{thm}{Theorem}
\newtheorem*{thm*}{Theorem}
\newtheorem*{definition*}{Definition}
\theoremstyle{definition}
\theoremstyle{remark}
\theoremstyle{plain}
\newtheorem{fact}{Fact}
\newcommand{\E}{\mathbb{E}}
\newcommand{\R}{\mathbb{R}}
\newcommand{\Supp}{{\text{Supp}}}
\newcommand{\pit}{{\pi_\theta}}
\newcommand{\nabt}{{\nabla_{\theta}}}
\newcommand{\pRL}{p_\mathrm{RLKL}}
\newcommand{\JRL}{J_\mathrm{RLKL}}
\newcommand{\pbin}{p_\mathrm{GDC\_bin}}
\newcommand{\pdistr}{p_\mathrm{GDC\_dist}}
\newcommand{\KL}[2]{\mathrm{KL}(#1||#2)}
\newcommand{\JS}[2]{\mathrm{JS}(#1||#2)}
\newcommand{\TV}[2]{\mathrm{TV}(#1||#2)}
\newcommand{\RLKL}{{RLKL}}
\newif\ifCommentedI
\newcommand{\ptdyI}[1]{\textcolor{teal}{#1}} % Proposed Text by DongYoung 
\newcommand{\fdyI}[1] % colored footnote comment by DongYoung
{{\color{teal}\footnote{\textcolor{teal}{DY: #1}}}}
\newcommand{\pttkI}[1]{\textcolor{red}{#1}} % Tomek
\newcommand{\fbtkI}[1]{{\color{red}\footnote{\textcolor{red}{TK: #1}}}}
\newcommand{\ptgkI}[1]{\textcolor{orange}{#1}} % German
\newcommand{\fgkI}[1]{{\color{orange}\footnote{\textcolor{orange}{GK: #1}}}}
\newcommand{\ptmdI}[1]{\textcolor{blue}{#1}} % Marc
\newcommand{\fmdI}[1]{{\color{blue}\footnote{\textcolor{blue}{MD: #1}}}}
\newcommand{\fjrI}[1]{{\color{yellow}\footnote{\textcolor{yellow}{JR: #1}}}}
\newcommand{\ptdyI}[1]{#1}
\newcommand{\fdyI}[1]{}
\newcommand{\pttkI}[1]{#1}
\newcommand{\fbtkI}[1]{}
\newcommand{\ptgkI}[1]{#1}
\newcommand{\fgkI}[1]{}
\newcommand{\ptmdI}[1]{#1}
\newcommand{\fmdI}[1]{}
\newcommand{\fjrI}[1]{}
\icmltitlerunning{Aligning Language Models with
Preferences through $f$-divergence Minimization}
\begin{document}

\twocolumn[
\icmltitle{Aligning Language Models with Preferences \\ through $f$-divergence Minimization}

\icmlsetsymbol{equal}{*}

\begin{icmlauthorlist}
\icmlauthor{Dongyoung Go}{navercorp,yonsei}
\icmlauthor{Tomasz Korbak}{sussex}
\icmlauthor{Germán Kruszewski}{nle}
\icmlauthor{Jos Rozen}{nle}
\icmlauthor{Nahyeon Ryu}{navercorp}
\icmlauthor{Marc Dymetman}{marc}
\end{icmlauthorlist}

\icmlaffiliation{navercorp}{Naver Corp}
\icmlaffiliation{yonsei}{Yonsei University}
\icmlaffiliation{nle}{Naver Labs Europe}
\icmlaffiliation{sussex}{University of Sussex}
\icmlaffiliation{marc}{Independent Researcher}

\icmlcorrespondingauthor{Dongyoung Go}{dongyoung.go@navercorp.com}

\icmlkeywords{Machine Learning, ICML}

\vskip 0.3in
]

\printAffiliationsAndNotice{}  %

\begin{abstract}
Aligning language models with preferences can be posed as approximating a target distribution representing some desired behavior. Existing approaches differ both in the 
 functional form of the target distribution and {the {algorithm} used %
 to approximate it.} 
 For instance, Reinforcement Learning from Human Feedback (RLHF) corresponds to minimizing a reverse KL from an \emph{implicit} target distribution arising from a KL penalty in the objective.
On the other hand, Generative Distributional Control (GDC) has an \emph{explicit}  target distribution and minimizes a forward KL from it using the Distributional Policy Gradient (DPG) algorithm.
In this paper, we propose a new approach, $f$-DPG, which allows the use of \emph{any} $f$-divergence to approximate \emph{any} 
target distribution \ptgkI{that can be evaluated.}
$f$-DPG unifies both %
frameworks (RLHF, GDC) and the approximation methods (DPG, RL with KL penalties). We show
the practical benefits of various choices of divergence objectives and demonstrate that there is no universally optimal objective %
{but} \ptdyI{that different divergences \ptmdI{present} different alignment and diversity 
\ptmdI{trade-offs}. 
We \ptmdI{show} that Jensen-Shannon divergence strikes a good balance 
\ptmdI{between these objectives,}
and frequently outperforms forward KL divergence by a wide margin, leading to significant improvements over prior work. \ptmdI{These} distinguishing characteristics 
\ptmdI{between}
divergences persist as the model size increases, highlighting the importance of selecting appropriate divergence objectives.}%
\begin{comment}
that  different divergences are good for approximating different targets. For instance, we discover that for GDC, the Jensen-Shannon divergence  frequently outperforms forward KL divergence by a wide margin, leading to significant improvements over prior work.
\end{comment}
%
%
%
%
\end{abstract}

\section{Introduction}

\begin{figure}[ht]
\centering
\centerline{\includegraphics[width=\columnwidth]{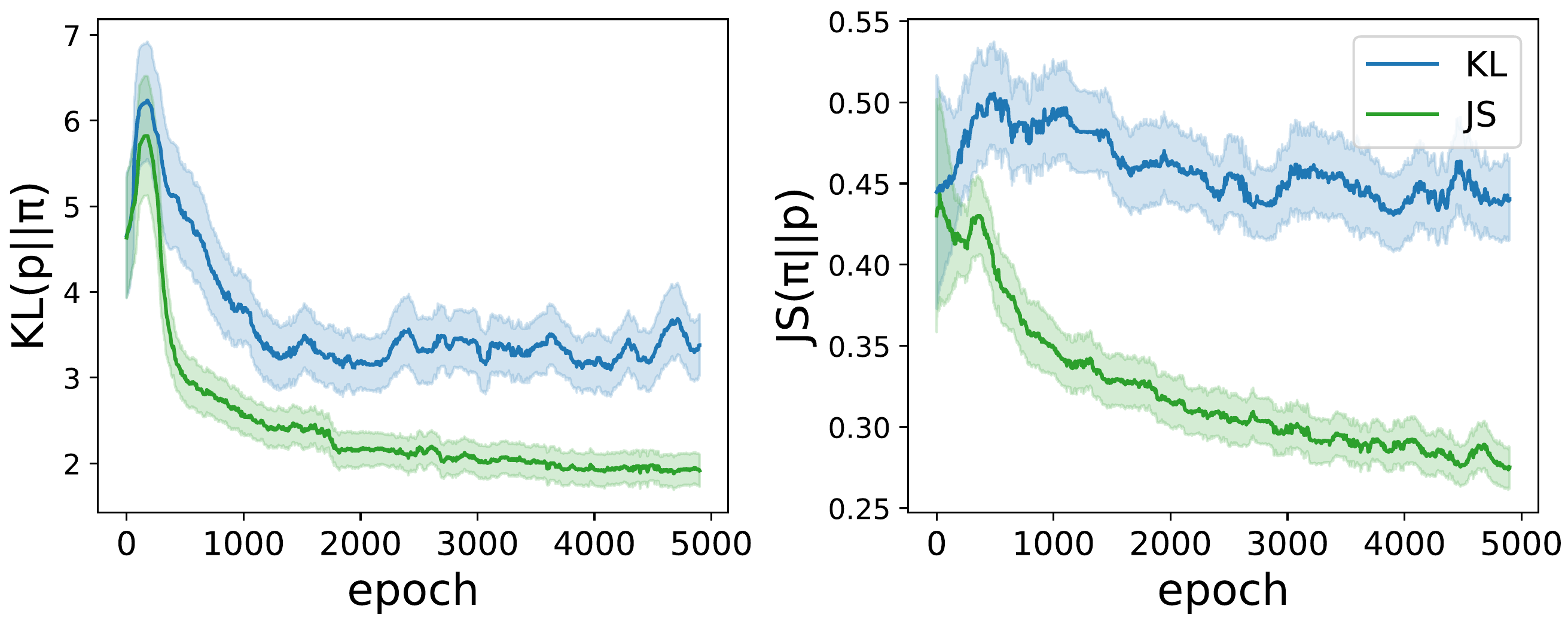}}
\caption{
{On many target distributions, the Jensen-Shannon (JS) divergence (green) outperforms the Kullback-Leibler (KL) divergence (blue) as an 
{\emph{objective,}}
even {when} %
performance is measured in terms of KL from the target $p$ (left panel, $\downarrow$ better). {See Sec. \ref{sec:pointwise}}.%
}%
}\label{fig:gdc-EBM_average_intro}
\end{figure}
{Language models (LMs)}
{have recently revolutionized the field of Natural Language Processing thanks to their generative capabilities, which are useful in a vast number of tasks~\citep{brown:etal:2020,Srivastava:etal:2022}. However, %
{generated texts} }
{can {also} violate widely-held human preferences, e.g. helpfulness \citep{lab}, non-offensiveness \cite{rtp}, truthfulness \cite{lin2021truthfulqa} or equal treatment \cite{cao-etal-2022-theory}.} 
Aligning LMs with human preferences is the problem of %
adapting the LM in such a way that generated content is perceived to match the human's intent~\citep{ouyang2022training} or that it is helpful, honest, and harmless \citep{lab,bai_constitutional}.
{Fundamentally,} an aligned LM can be seen as a desired target distribution that we would like to 
{generate from}~\cite{korbak2022rlBayesian}.
Some approaches leave this distribution {implicit}, to be defined as a side-effect of the proposed intervention. %
{These} include prompting with natural language instructions or demonstrations~\citep{lab}, %
using scorers or safety filters while decoding~\citep{recipes,xu2021bot}, supervised fine-tuning %
on curated data \citep{solaiman2021process,ngo2021_mitigating_harm,weibl2021,chung2022_scaling_instruction} or selected samples from the model %
\cite{star,scheurer2022,cascades}, and fine-tuning the language model using reinforcement learning with a learned reward function that approximates human feedback \citep[Reinforcement Learning from Human Feedback or RLHF;][]{ziegler2019fine,bai2022training,ouyang2022training}.
Instead, \citet{khalifa2021distributional} propose a framework that they 
{name}
Generation with Distributional Control (GDC), {where} they 
define the target distribution {$p$} that represents the aligned LM 
\ptmdI{as an EBM (Energy Based Model), namely an unnormalized version of $p$ that can be evaluated over any input $x$.}
\ptdyI{They then}
{train {the generative model} $\pit$ to approximate $p$}
via methods such as {D}istributional {P}olicy {G}radients~\citep[DPG;][]{parshakova2019distributional}, which 
\ptmdI{minimize}
the forward Kullback-Leibler (KL) divergence $\mathrm{KL}(p|| \pit)$ of 
{$p$} to 
{$\pit$}.
The advantage of such an approach is that it decouples the problem of describing the aligned LM from the problem of approximating it.
Furthermore, even if RL with KL penalties~\citep{todorov,kappen2012optimal,KL_Jaques17,KL_jaquesK19}, %
{the method used to fine-tune a LM in} RLHF, is defined only in terms of reward maximization, it has also been shown to be equivalent to minimizing the \emph{reverse} KL divergence $\mathrm{KL}(\pit|| p)$ of 
{$\pit$} %
to a target distribution {$p$} that can also {be written} explicitly in closed-form~\citep{korbak2022reinforcement}.%

The possibility of 
{approximating various} 
distributions 
{according to} different divergence measures begs the question: Does the choice of a divergence measure matter? In principle, all divergences {lead to} the same optimum, namely the target distribution $p$. However, when we restrict $\pit$ to a certain parametric family that does not include $p$ (i.e., the search space is \emph{mis-specified}), then the minimum {can} be found at different points, leading to optimal models with different properties. Moreover, different %
{divergences} present different loss landscapes: some might make it easier for stochastic gradient descent to find good minima. Finally, the space of possible divergence measures and forms of target distributions is a vast and largely uncharted terrain. Prior work {has} largely failed to decouple the form of a target distribution and the algorithm used for approximating it. 

Here, we introduce $f$-DPG, a new {framework} \ptmdI{for} fine-tuning {an LM} %
to approximate any given 
\ptmdI{target EBM,}
\ptmdI{%
by exploiting}
any \ptmdI{given} divergence in the $f$-divergences family, which includes \ptmdI{not only} the forward {KL} and the reverse KL cited above, but also Total Variation (TV) distance, Jensen-Shannon (JS) {divergence}, among others. 
$f$-DPG generalizes existing approximation techniques \ptmdI{both} DPG and RL with KL penalties algorithms, thus allowing us to investigate new %
{ways}
to approximate the target distributions defined by the GDC and RLHF frameworks.
In particular, we explore the approximation of various target distributions representing different alignment goals, which include imposing {lexical constraints}, {reducing social bias with respect to gender and religion, }%
enforcing factual consistency in summarization, and enforcing compilability of generated code.
We focus our experiments on four instantiations of $f$-DPG, namely KL-DPG, RKL-DPG, TV-DPG and JS-DPG, whose objective is to minimize the forward KL, reverse KL, TV and JS divergences, respectively, and evaluate 
{each experiment}
in terms of approximation quality as measured by all of these 
$f$-divergences.
We show that we can obtain significantly improved results over the {original KL-DPG algorithm \cite{parshakova2019distributional}} by minimizing other $f$-divergences, even when the approximation quality is evaluated under the lens of the forward KL.
Furthermore, we observe that while there is no single best optimization objective for all cases, JS-DPG often strikes a good balance {and significantly improves upon prior work \cite{khalifa2021distributional,korbak2021controlling}{, as illustrated in Fig. \ref{fig:gdc-EBM_average_intro}.}}
\ptdyI{Lastly, we \ptmdI{find} that f-DPG with an optimal objective continues to outperform suboptimal objectives as we scale model size from 127M parameters to 1.5B parameters (Sec.~\ref{sec:scaling_trend}). The smooth and gradual scaling trend observed with increasing model size suggests that our findings will generalize to even larger LMs.}

Overall, the contributions of the paper include:
\vspace{-10px}
\begin{enumerate}
\itemsep0em 
    \item Introducing $f$-DPG, a unifying framework for approximating any \ptdyI{EBM} target distribution by minimizing any $f$-divergence (Sec.~\ref{sec:fdpg}), and deriving a 
    {universal}
    formula for gradient descent with $f$-divergences (Theorem~\ref{thm:f-dpg}).
    \item Extending $f$-DPG to include baselines for variance reduction (Fact~\ref{fact:baseline}); and handling conditional target distributions (Fact~\ref{fact:conditional}).
    \item Investigating the performance of $f$-DPG on a {diverse} array of {thirteen LM alignment} tasks, three forms of target distributions, four $f$-divergence objectives and eight metrics.
\end{enumerate}

\section{Background}\label{sec:RM_DM_prelim}

{We can organize} approaches to LM alignment %
along two axes: how the target distribution is {constructed} and how it is approximated. The first problem roughly corresponds to representing human preferences through the specification of a probability distribution and the second to
{allowing the production of samples}
from that distribution. 

\subsection{Defining a Target Distribution}

The target distribution expresses {an ideal notion of an LM, incorporating human preferences,
as probabilities $p(x)$ over texts $x$ according to} {how well they satisfy the preferences.}
{Formally, $p(x)$ is often defined through a non-negative function $P(x)$ (aka an \emph{energy-based model} or EBM \ptmdI{\citep{lecun2006tutorial}}) such that $p(x)\propto P(x)$. 
\ptmdI{The model} $P(x)$ {(and $p(x)$ after normalization)} can be used to score samples, but not to directly \ptmdI{produce} them because it lacks an autoregressive form}.
In the rest of the paper, %
we will focus on 
target distributions modeling {three types of preferences} prominently employed in recent literature {about} GDC~\cite{khalifa2021distributional} and RLHF~\cite{ziegler2019fine,stiennon2020,ouyang2022training,menick_2022_sparrow,bai2022training}.

\paragraph{Binary preferences} For human preferences naturally expressible as a binary constraint $b(x) \in \{0,1\}$ (e.g. a sample $x$ must never contain a curse word), \citet{khalifa2021distributional} proposed the following target distribution:
\begin{equation}
    \pbin(x) \propto a(x)b(x), \label{eq:p_binary}
\end{equation}
where $a$ is a pretrained LM and $b(x) = {0}$ if $x$ contains a curse and $b(x) = {1}$ otherwise. 
{$\pbin$ is the distribution enforcing that all samples match the binary constraint, which deviates minimally from $a$ as measured by $\KL{\pbin}{a}$}.

\paragraph{Scalar preferences}\label{eq:rlhf}
Some human preferences, such as helpfulness, are more naturally expressed as scalar scores. Alignment with respect to these is typically addressed {with} RLHF~\citep{stiennon2020,ziegler2019fine,ouyang2022training}, which consists of, first, capturing human preferences as a reward function {$r(x)$} (e.g. scores given a reward model trained to predict human preferences) and second, {applying \textrm{RL with KL penalties}~\citep{todorov,kappen2012optimal,KL_Jaques17,KL_jaquesK19} to maximize this reward }%
while penalizing departure from $a(x)$:
{\begin{align}
\JRL(\theta) = \mathbb{E}_{x \sim {\pit} } \left[{r}(x) - \beta \log \frac{\pit(x)}{a(x)}\right]. \label{eq:J_RLKL}
\end{align}}%
This objective can be equivalently framed as
{minimizing the reverse KL, {$\KL \pit \pRL$, where the target distribution $\pRL$ is defined as}}: 
\begin{equation}
\pRL(x)  \propto a(x)\exp({r}(x)/\beta),\label{eq:p_RLKL}
\end{equation}%
where $\beta$ is a hyperparameter \citep{korbak2022reinforcement}. 

\paragraph{Distributional preferences} Finally, there is a class of distributional %
{preferences} \cite{ethical_lm_deepmind} that cannot be expressed as a function of a single sample $x$ but depend on the entire distribution, e.g. a particular gender distribution of persons mentioned in LM samples. \citet{khalifa2021distributional} {model such preferences} through distributional constraints using the following exponential family target distribution 
\begin{equation}
\pdistr(x) \propto a(x)\exp\Big[{\sum_i \lambda_i \phi_i(x)}\Big], \label{eq:p_distributional}
\end{equation}
where $\phi_i$ are features defined %
{over texts} (e.g. 
{the most frequent gender}
of people mentioned in $x$) and $\lambda_i$ are coefficients {chosen} %
{so that the expected values $\E_{x\sim p}\left[\phi_i(x)\right]$ match some desired values $\bar{\mu}_{i}$ (e.g., 50\% gender balance)}.
{The resulting distribution $p_\text{GDC-d}$ matches the target feature moments, while deviating minimally from  $a$ as measured by $\KL{\pdistr}{a}$.}

\subsection{Approximating the target distribution}

Drawing samples from a target distribution $p$ constitutes the inference problem. There are broadly two approaches to this problem: (i) augmenting decoding from $a$ at inference time to obtain samples from $p$ and (ii) training a new parametric model $\pi_\theta$ to approximate $p$ which can then be sampled from directly. %
The first family of approaches includes guided decoding methods \cite{pplm,qin2022cold}, {Monte Carlo sampling techniques such as rejection sampling to sample from simple distributions like $\pbin$~\cite{recipes,ziegler2022}, and Quasi Rejection Sampling (QRS)~\cite{eikema2022an} or MCMC techniques~\citep{miao2019cgmh,GoyalDB22} to sample from more complex distributions, such as $\pdistr$. %
}%
In the rest of the paper, we will focus on the second family: methods that train a new model $\pi_\theta$ to approximate $p$ by minimizing a divergence measure from $p$, $D(\pi_\theta|| p)$.
{\citet{khalifa2021distributional} uses 
Distributional Policy Gradients %
\citep[DPG; ][]{parshakova2019distributional}
to approximate the target distribution by minimizing $\KL{p}{\pit}$, or equivalently, $\mathrm{CE}(p, \pit)$:%
{
\begin{align}
\nabla_\theta \mathrm{CE}(p, \pit) = -\mathbb{E}_{x\sim \pit}{\frac{p(x)}{\pit(x)}\nabla_\theta}\log \pit(x). \label{eq:dpg}
\end{align}%
}}%
\vspace{-10px}
\section{Formal Aspects}

In this section, we describe the $f$-divergence family, and introduce a generic technique, $f$-DPG, for minimizing the $f$-divergence between a target distribution $p$ and a model $\pi_{\theta}$. We then describe the application of $f$-DPG to aligning language models with human preferences.

\subsection{$f$-divergences}
Consider a convex function 
$f:(0,\infty)\rightarrow\mathbb{R}$ with $f(1) = 0$. Let $f(0) \doteq \lim_{t\rightarrow0}f(t)$ and $f^{'}(\infty)\doteq\lim_{t\rightarrow0} t f(\frac{1}{t})$.%
\footnote{The limits are well-defined and take values in $(-\infty,\infty]$. The convention {for $f^{'}(\infty)$} is motivated by the fact that $\lim_{t\rightarrow \infty} f'(t) = \lim_{t\rightarrow 0} t f(\frac{1}{t})$ \cite{hiriart2013convex}.
}
Let $p_1, p_2$ be two distributions
over a discrete set $\mathcal{X}$.
The $f$-divergence 
between $p_1$ and $p_2$
can be defined as
\begin{align}
D_{f}(p_1||p_2)\doteq\E_{x\sim p_2}\left[f\left(\frac{p_1(x)}{p_2(x)}\right)\right]+f^{'}(\infty)\ p_1(p_2=0)
\label{Eq.:f-divergence} 
\end{align}
where $p_1(p_2=0)$
is the $p_1$-mass of the set $\{x\in \mathcal{X}:p_2(x)=0\}$ \cite{polyanski,liese2006divergences}.
The function $f$ is called a generator of $D_f$. 
By convention, if $p_1(p_2=0)=0$, the last term of Eq. \eqref{Eq.:f-divergence} is 
set to $0$
regardless of the value of $f^{'}(\infty)$ (which  
can be infinite).%
\footnote{Based on the commonly made 
assumption that the support of $p_1$ is dominated by the support of $p_2$ ($Supp(p_1) \subset Supp(p_2)$), Eq. \eqref{Eq.:f-divergence} simplifies to
{$D_{f}(p_1||p_2)=\E_{x\sim p_2}\left[f\left(\frac{p_1(x)}{p_2(x)}\right)\right]$.}
}
It can be shown that $D_{f}(p_1||p_2)\ge 0$ for any $p_1$
and $p_2$, {with equality if $p_1=p_2$; conversely,}  if $D_{f}(p_1||p_2) = 0$ and $f$ is strictly convex at $1$, then $p_1=p_2$.

The $f$-divergence family includes many important divergence
measures, in particular KL divergence $\text{KL}(p_1||p_2)$, reverse KL divergence $\text{KL}(p_2||p_1)$, Jensen-Shannon divergence, and Total Variation distance. We list these $f$-divergences and their generators in Tab. \ref{tab:f-dpg-loss}. For more details about notations and properties of $f$-divergences, see App. \ref{app:generalized_divergence} and also \citet{liese2006divergences,polyanski, sason2016f,sason2018f}.
\subsection{Distributional alignment with $f$-divergences}\label{sec:fdpg}

{Let $\mathcal{X}$ be} a discrete {countable or finite} set, {in our case a set of texts.}
Given a target probability distribution $p(x)$ over elements $x\in \mathcal{X}$, our goal is 
to approximate $p$ with a generative model (aka policy) $\pi_\theta$. %
On the other hand, the generative model $\pi_\theta$ is a parametric model, typically an autoregressive neural network, from which we can (i) directly sample and (ii) evaluate probabilities $\pi_\theta(x)$.  

We approach this problem by attempting to minimize the $f$-divergence of
$\pi_{\theta}$ to $p$:%
\footnote{We could 
have chosen to do $\text{min}_{\theta\in\Theta}D_{f}(p||\pi_{\theta})$. However the \emph{perspective transform} $f^{*}(t)\doteq t\ f(\frac{1}{t})$ 
allows interchangeability of arguments: $D_f(\pi_\theta||p)=D_{f^{*}}(p||\pi_\theta)$, 
{making either form possible.}
The form in Eq.~\eqref{Eq.:fdpg-loss} permits a simpler statement of our main theorem. See App. \ref{app:generalized_divergence}, \ref{app:proofs} for details.}%
\begin{align}
\text{min}_{\theta\in\Theta}D_{f}(\pi_{\theta}||p),
\label{Eq.:fdpg-loss}
\end{align}
where $\theta$ varies inside the parametric family $\Theta$. 
Note that when the family $\pit, \theta \in \Theta$ is ``well-specified'', 
i.e., when $\exists \theta_{0}\ \text{s.t.}\ p=\pi_{\theta_{0}}$, the true minimum of Eq~\eqref{Eq.:fdpg-loss} is $0$,
attained at $\theta_{0}$, whatever divergence $D_f$ is chosen. In contrast, when the family is ``mis-specified'' 
i.e. does not include $p$, the distribution $\pit$ with minimal divergence can be strongly dependent on the chosen divergence $D_f$.

Eq.~\eqref{Eq.:fdpg-loss} 
might be solved approximately using stochastic optimization with samples drawn from the distribution $p$, 
as the definition of $D_{f}(\pi_{\theta}||p)$ involves taking the expectation with respect to $p$.
However, it is often not possible to sample directly from $p$, while it is possible to sample from $\pit$. 
Our optimization technique is then based on the following core result, which we prove in App.~\ref{app:proofs}.
\begin{thm}\label{thm:f-dpg}
Let $p$ and $\pit$ be distributions over a discrete set $\mathcal{X}$ such that at least one of the following  conditions holds: 
{(i) $\forall \theta \in \Theta, \Supp(p)\subset \Supp(\pit)$, or 
(ii) $\Supp(\pit)$ does not depend on $\theta$.
}
Then:
\begin{align}
    \nabla_{\theta}D_{f}(\pi_{\theta}||p)=\E_{x\sim\pi_{\theta}}\left[f^{'}\left(\frac{\pi_{\theta}(x)}{p(x)}\right)\nabla_{\theta}\log\pi_{\theta}(x)\right].
\label{Eq.:fdpg-gradient}
\end{align}
\end{thm}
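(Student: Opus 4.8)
The plan is to differentiate the explicit form of $\Df{\pit}{p}$ term by term over $\mathcal{X}$ and then trade the dependence on $\pit$ inside each term for the score function via the log-derivative (``REINFORCE'') identity $\nabt\pit(x)=\pit(x)\,\nabt\log\pit(x)$. Instantiating Eq.~\eqref{Eq.:f-divergence} with $p_1=\pit$ and $p_2=p$, I would first write the divergence as a sum split according to whether $p$ vanishes:
\begin{align}
\Df{\pit}{p} ={}& \sum_{x:\,p(x)>0} p(x)\,f\!\left(\frac{\pit(x)}{p(x)}\right) \nonumber\\
&+ f'(\infty)\sum_{x:\,p(x)=0}\pit(x),
\end{align}
and then handle the two sums separately, interchanging $\nabt$ with the (possibly countable) summation.

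On $\{p>0\}$ the chain rule gives $\nabt\big[p(x)\,f(\pit(x)/p(x))\big]=f'(\pit(x)/p(x))\,\nabt\pit(x)$, and the log-derivative identity rewrites this as $\pit(x)\,f'(\pit(x)/p(x))\,\nabt\log\pit(x)$. On $\{p=0\}$ the derivative of $f'(\infty)\,\pit(x)$ is $f'(\infty)\,\nabt\pit(x)=f'(\infty)\,\pit(x)\,\nabt\log\pit(x)$. Because the stated convention sets $f'(\infty)=\lim_{t\to\infty}f'(t)$, the coefficient $f'(\pit(x)/p(x))$ equals $f'(\infty)$ exactly on $\{p=0,\ \pit>0\}$, so the two sums recombine into a single sum over all $x$, namely $\sum_x \pit(x)\,f'(\pit(x)/p(x))\,\nabt\log\pit(x)=\E_{x\sim\pit}\!\big[f'(\pit(x)/p(x))\,\nabt\log\pit(x)\big]$, which is precisely Eq.~\eqref{Eq.:fdpg-gradient}.

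The two places where hypotheses (i) and (ii) earn their keep both concern points at the boundary of the supports, and I expect this to be the main obstacle rather than the core computation. First, the chain rule on $\{p>0\}$ needs $f$ differentiable at $\pit(x)/p(x)$; condition (i), $\Supp(p)\subset\Supp(\pit)$, keeps this ratio strictly positive, so $f'$ is evaluated on $(0,\infty)$ and never at the possibly singular endpoint $f'(0)$ (which is $-\infty$ for the forward KL) paired with a vanishing $\nabt\pit(x)$, an indeterminate $0\cdot\infty$ form. Second, the log-derivative substitution is literally valid only where $\pit(x)>0$; at a point with $\pit(x)=0$ the product $\pit(x)\,\nabt\log\pit(x)$ must be read as $\nabt\pit(x)$, which is itself $0$ because $\theta\mapsto\pit(x)\ge 0$ attains its minimum value $0$ there, forcing $\nabt\pit(x)=0$ in the interior of $\Theta$. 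Condition (ii), that $\Supp(\pit)$ is fixed, is the alternative route: any $x$ with $p(x)>0$ but $\pit(x)=0$ then contributes a constant term $p(x)\,f(0)$ with zero gradient, consistent with its absence from the expectation under $\pit$. The remaining obligations are routine: justifying the interchange of $\nabt$ with the (possibly infinite) sum — immediate for finite $\mathcal{X}$ and otherwise via dominated convergence — and, for generators with a kink such as the Total Variation one, interpreting $f'$ as a selected subgradient on the measure-zero set where $\pit(x)=p(x)$.
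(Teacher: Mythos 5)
Your proof is correct and follows essentially the same route as the paper's proof in App.~\ref{app:proofs}: differentiate the explicit sum form of $D_f(\pit||p)$ term by term, apply the chain rule together with $\nabt\pit(x)=\pit(x)\,\nabt\log\pit(x)$, absorb the $\{p=0,\ \pit>0\}$ contribution via the convention $f'(\infty)=\lim_{t\to 0}t f(1/t)$, and invoke condition (i) or (ii) to kill the term supported on $\{p>0,\ \pit=0\}$. The only cosmetic difference is that you start from Eq.~\eqref{Eq.:f-divergence} with the sum over all of $\{p>0\}$, whereas the paper uses the symmetric form \eqref{eq:generalized_divergence} that isolates those points as a separate $f(0)\,p(\pit=0)$ term; your handling of them (empty under (i), constant under (ii)) is exactly the paper's.
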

Note that it may happen in Eq~\ref{Eq.:fdpg-gradient} that $p(x) = 0$ and $\pit(x) > 0$, hence $\frac{\pit(x)}{p(x)} = \infty$, in which case the expression $f^{'}\left(\frac{\pi_{\theta}(x)}{p(x)}\right)$ should be understood as denoting the value $f'(\infty)$ as defined earlier.\footnote{
The derivative $f'(t)$ of any convex function $f(t)$ is defined almost everywhere, with the possible exception of a countable number of non-differentiable points, at which a subgradient can be used instead \cite{hiriart2013convex,rockafellar1970convex}. See also App. \ref{app: non-differentiability}.}

In the context of LMs, our domain of application, we will use Thm.~\ref{thm:f-dpg} in situations where $\pit$, being a standard softmax-based autoregressive model, has full support over $\mathcal{X}$ (i.e. $\Supp(\pit)= \mathcal{X}$) for all $\theta$'s, while the support of $p$ might be strictly included in $\mathcal{X}$ in some experiments (Sec.~\ref{sec:pointwise}, \ref{sec:conditional}).

It is instructive to consider Thm.~\ref{thm:f-dpg} in relation to rewards in RL. In the standard policy gradient algorithm \cite{williams1992simple}, to find the model that maximizes the average reward $\E_{x\sim\pi_\theta}\left[r(x)\right]$, 
one computes the gradient of the loss using the formula
$\nabla_{\theta}\E_{x\sim\pi_\theta}\left[r(x)\right]=\E_{x\sim\pi_\theta}\left[r(x)\nabla_{\theta}\log\pi_{\theta}(x)\right]$. 
The gradient in Eq. \ref{Eq.:fdpg-gradient} is very similar, 
with a ``pseudo-reward'' $r_\theta(x) = -f^{'}(\frac{\pi_{\theta}(x)}{p(x)})$, one difference being that now $r_\theta$ depends on $\theta$ (see \cite{korbak2022reinforcement} for related remarks). We refer to the approach in Eq. \ref{Eq.:fdpg-gradient} under the name \emph{$f$-DPG}, {in} reference to the original DPG (Distributional Policy Gradient) approach introduced in \cite{parshakova2019distributional}, which can be seen as a {special case} of $f$-DPG (``KL-DPG'')  {with} $D_f(\pit||p)$ {set} to $\text{KL}(p||\pit)$ {as discussed in Sec. \ref{sec:recovering_other_methods}}.

\subsection{{Adding} a baseline}\label{sec:baseline}
Based on the similarity {to policy gradients,} we adopt the widely used \emph{baseline} technique from RL, as previously studied in \citet{williams1992simple, baxter2001infinite, schulman2015high} and in the context of DPG in \cite{korbak2022reinforcement}. This technique involves subtracting a constant $B$ from the reward term, and does not introduce bias in the estimate of the gradient at a given $\theta$. %
In our {case}, with $r_\theta(x) \doteq -f^{'}(\frac{\pi_{\theta}(x)}{p(x)})$, we can write 
$\nabt D_{f}(\pit||p) = \E_{x\sim\pit} r_\theta(x) \nabt\log\pit(x) = \E_{x\sim\pit} (r_\theta(x) - B)\ \nabt\log\pit(x)${, based on the observation that $\E_{x\sim\pit} \ \nabt\log\pit(x) =0$ (see also App. \ref{sec:baseline-alternative-derivation}).}
\begin{fact}\label{fact:baseline}
Subtracting $B$ from $r_{\theta}(x)$ does not introduce bias into $f$-DPG gradient estimates.
\end{fact}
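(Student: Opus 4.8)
The plan is to show that the expectation of the term we are adding vanishes, so that subtracting it leaves the gradient unchanged. Concretely, I want to prove that
\begin{align}
\E_{x\sim\pit}\!\left[(r_\theta(x) - B)\,\nabt\log\pit(x)\right] = \E_{x\sim\pit}\!\left[r_\theta(x)\,\nabt\log\pit(x)\right],
\label{eq:baseline-goal}
\end{align}
for any constant $B$ not depending on $x$. Since $B$ is constant, by linearity of expectation the difference between the two sides is exactly $B\,\E_{x\sim\pit}\!\left[\nabt\log\pit(x)\right]$, so the whole claim reduces to the single identity $\E_{x\sim\pit}\!\left[\nabt\log\pit(x)\right] = 0$, which is precisely the observation already flagged in the text just before the statement.

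First I would establish that score-function identity directly. Using the log-derivative trick $\nabt\log\pit(x) = \nabt\pit(x)/\pit(x)$, I can write
\begin{align}
\E_{x\sim\pit}\!\left[\nabt\log\pit(x)\right] = \sum_{x\in\mathcal{X}} \pit(x)\,\frac{\nabt\pit(x)}{\pit(x)} = \sum_{x\in\mathcal{X}} \nabt\pit(x) = \nabt \sum_{x\in\mathcal{X}} \pit(x) = \nabt 1 = 0,
\label{eq:score-identity}
\end{align}
where the interchange of the gradient and the sum in the penultimate step is justified because $\pit$ is a normalized probability distribution for every $\theta$, so the total mass is the constant $1$ and its gradient is zero. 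Combining \eqref{eq:score-identity} with the linearity argument above immediately yields \eqref{eq:baseline-goal}, proving that subtracting $B$ leaves the gradient estimate unbiased at the given $\theta$.

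The step I expect to require the most care is the interchange of differentiation and summation in \eqref{eq:score-identity}, i.e. the move $\sum_x \nabt\pit(x) = \nabt\sum_x \pit(x)$. For a finite $\mathcal{X}$ this is trivial, but for a countably infinite set of texts one needs a dominated-convergence or uniform-convergence justification to swap the limit implicit in the gradient with the infinite sum. In practice $\pit$ is a softmax autoregressive model with smooth dependence on $\theta$, and the relevant regularity conditions hold, so I would either invoke the standard smoothness assumptions on the parametric family or note that this is the same interchange already relied upon implicitly in Theorem~\ref{thm:f-dpg}. A secondary subtlety worth flagging is that the word ``bias'' here is pointwise in $\theta$: the identity guarantees the estimator is unbiased for the true gradient at each fixed $\theta$, which is exactly what Fact~\ref{fact:baseline} asserts, even though a $\theta$-dependent choice of baseline $B(\theta)$ would still be admissible since the identity holds for every fixed $\theta$ separately.
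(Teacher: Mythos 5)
Your proof is correct and takes essentially the same route as the paper, which likewise reduces the claim to the score-function identity $\E_{x\sim\pit}\left[\nabt\log\pit(x)\right]=0$ stated immediately before the Fact. (The paper's appendix additionally gives an alternative derivation via a change of generator $g(t)=f(t)-B(t-1)$, which leaves $D_f$ unchanged while shifting the pseudo-reward by $B$, but the main-text argument is the one you reproduce, with your remarks on the sum--gradient interchange being a reasonable extra precaution.)
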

Typically, $B$ is chosen to be the average of the rewards, $B \doteq \E_{x\sim\pi_\theta}\left[r_\theta(x)\right]$.
In the experiments of Sec.~\ref{sec:experiments}, we use the baseline technique where $B$ is an estimate of the average of pseudo-rewards, unless otherwise specified.

\begin{table*}
\begin{centering}
\begin{tabular}{ccccc}
\toprule
$D_f(\pit||p)$ & {\footnotesize{}$f$} & {\footnotesize{}$f'$} & {\footnotesize{}$f^{'}\left(\frac{\pi_{\theta}(x)}{p(x)}\right)$}& {\footnotesize{}$f^{'}(\infty)$}\tabularnewline
\midrule
{\footnotesize{}Forward KL $\left(\text{KL}(p||\pit)\right)$} & {\footnotesize{}$f(t)=-\log t$} & {\footnotesize{}$f^{'}(t)=-\frac{1}{t}$} & {\footnotesize{}$-\frac{p(x)}{\pi_{\theta}(x)}$}& {\footnotesize{}$0$}\tabularnewline
{\footnotesize{}Reverse KL $\left(\text{KL}(\pit||p)\right)$} & {\footnotesize{}$f(t)=t\log t$} & {\footnotesize{}$f^{'}(t)=\log t+1$} & {\footnotesize{}$-\left(\log\frac{p(x)}{\pi_{\theta}(x)}\right)+1$}& {\footnotesize{}$\infty$}\tabularnewline
{\footnotesize{}Total Variation $\left(\text{TV}(\pit||p)\right)$} & {\footnotesize{}$f(t)=0.5\ |1-t|$} & {\footnotesize{}$f^{'}(t)=\begin{cases}
0.5 & \text{for }t>1\\
-0.5 & \text{for }t<1
\end{cases}$} & {\footnotesize{}$\begin{cases}
0.5 & \text{for }\frac{\pi_{\theta}(x)}{p(x)}>1\\
-0.5 & \text{for }\frac{\pi_{\theta}(x)}{p(x)}<1
\end{cases}$}& {\footnotesize{}$0.5$}\tabularnewline
{\footnotesize{}Jensen-Shannon $\left(\text{JS}(\pit||p)\right)$} & {\footnotesize{}$f(t)=t\log\frac{2t}{t+1}+\log\frac{2}{t+1}$} & {\footnotesize{}$f^{'}(t)=\log\frac{2t}{t+1}$} & {\footnotesize{}$\log 2-\log\left(1+\frac{p(x)}{\pi_{\theta}(x)}\right)$}& {\footnotesize{}$\log 2$}\tabularnewline
\bottomrule
\end{tabular}
\par\end{centering}
\caption{Some common $f$-divergences $D_f(\pi_\theta||p)$. 
In the convention of this table, the $f$ shown corresponds to the order of arguments $D_f(\pit||p)$. Thus the forward KL between the target $p$ and the model, $\text{KL}(p||\pit)$, corresponds to $D_{-\!\log t} (\pit||p)$, and similarly for the reverse KL, $\text{KL}(\pit||p)$, which corresponds to $D_{t \log t} (\pit||p)$, etc. Note that for symmetric divergences (TV and JS) the order of arguments is indifferent: $\text{TV}(\pit||p)=\text{TV}(p||\pit)$, $\text{JS}(\pit||p)=\text{JS}(p||\pit)$.
\label{tab:f-dpg-loss}}
\end{table*}

\subsection{Recovering Some Existing Methods}\label{sec:recovering_other_methods}
Various existing methods for aligning LM with preferences can be included in the $f$-DPG framework.
\paragraph{GDC} In GDC, fitting the policy $\pit$ to the target $p$ {(which is given by either one of Eq.~\ref{eq:p_binary} or Eq.~\ref{eq:p_distributional}}) is done using DPG \cite{parshakova2019distributional}, namely {by} minimizing the \textbf{forward KL}, $\text{KL}(p||\pit)$. In the $f$-DPG framework, $\text{KL}(p||\pit) = D_f(\pit||p)$ with $f(t)=-\log t$, $f'(t) = -1/t$, and Thm.~\ref{thm:f-dpg} leads to the formula:
\[
\nabt D_f(\pit||p) = \E_{x\sim\pit}-\frac{p(x)}{\pit(x)}\nabla_{\theta}\log\pit(x),
\]
which is equivalent to Eq. \ref{eq:dpg}.%

\paragraph{RL with KL penalties}
{
Let's rewrite the target distribution of Eq. \eqref{eq:p_RLKL} as $p(x) \doteq \pRL(x) = 1/Z\ a(x)\ e^{r(x)/\beta}$, where $Z$ is a normaliser. Then $\KL \pit p = D_f(\pit||p)$, with $f(t)=t\log t$ corresponding to \textbf{reverse KL}, and $f'(t) = 1 + \log t$. 
Thm.~\ref{thm:f-dpg} implies that:
\begin{align*}
    &\nabt D_{f}(\pit||p)\\
    &= \E_{x\sim\pit}\left(1+ \log\frac{\pit(x)}{Z^{-1} a(x)\exp(r(x)/\beta)}\right)\nabt\log\pit(x)\\
    &= \E_{x\sim\pit}\left(-\frac{r(x)}{\beta} + \log\frac{\pit(x)}{a(x)}\right)\nabt\log\pit(x),
\end{align*}
where we have exploited the fact that $1+\log Z$ is a constant, hence $\E_{x\sim\pit}
(1+\log Z)\ \nabt \log\pit(x) = 0$.
Up to the constant {factor} $\beta$, this form recovers the usual formula for estimating the gradient of the loss defined in Eq. \eqref{eq:J_RLKL}:}
$
    \nabt J_{\mathrm{\RLKL}}(\theta) = \E_{x\sim\pit}\left(r(x)-\beta\log\frac{\pit(x)}{a(x)}\right)\nabt\log\pit(x).
$
\subsection{Estimating $Z$}
The target distribution $p$ is often defined as $p(x) \propto P(x)$, where $P(x)$ is a non-negative function over $\mathcal{X}$. %
The distribution $p$ can then be computed as $p(x) = 1/Z\ P(x)$, where $Z$ is the normalizing constant (partition function) defined by $\sum_{x\in \mathcal{X}} P(x)$.
An estimate of $Z$ can be obtained by importance sampling, using samples from the current $\pit$, based on the identity $Z = \E_\pit \frac{P(x)}{\pit(x)}$. Each such estimate is unbiased, and by averaging the estimates based on different $\pit$'s, one can obtain a more precise estimate of $Z$, exploiting \emph{all} the samples obtained so far. For details about the estimate of $Z$, see Algorithm \ref{alg:f_dgp} in App.~\ref{app:proofs}, as well as the ablation study in App.~\ref{app:Z_ablation}. 

\subsection{Conditional Target Distributions}
For a conditional task such as machine translation, summarization or dialogue, {where $\pit$ is defined as a conditional distribution $\pit(x|c)$, } {we adapt the conditional generalization of DPG} introduced in \citet{korbak2021controlling}. Given a distribution over contexts $\tau(c)$ {and a map from a context $c$ to a target distribution $p_c$}, we have ({see App. \ref{app:conditional} for details}): 
\begin{fact}\label{fact:conditional}
{$f$-DPG is generalized to the conditional case by optimizing the loss}
\begin{equation}
\E_{c\sim\tau(c)}\left[\nabla_{\theta}D_{f}(\pit(\cdotp|c)||p_{c}(\cdotp))\right].
\end{equation}
\end{fact}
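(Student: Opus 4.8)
The plan is to define the natural conditional objective as the $\tau$-averaged divergence $J(\theta) \doteq \E_{c\sim\tau(c)}\big[\Df{\pit(\cdot|c)}{p_c(\cdot)}\big]$ and to show that its gradient is precisely the expression displayed in Fact~\ref{fact:conditional}. This reduces the conditional case to the unconditional Theorem~\ref{thm:f-dpg} applied context-by-context, so no new analytic machinery is needed.

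First I would interchange the gradient with the expectation over contexts. Since the context distribution $\tau(c)$ is fixed and does not depend on $\theta$, differentiation passes through the sum (for a countable context set) or under the integral (for a continuous one, via dominated convergence), giving $\nabt J(\theta) = \E_{c\sim\tau(c)}\big[\nabt \Df{\pit(\cdot|c)}{p_c(\cdot)}\big]$. This is exactly the quantity the statement instructs us to optimize, establishing that the displayed formula is the gradient of the expected-divergence loss.

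Next, for each fixed context $c$, I would invoke Theorem~\ref{thm:f-dpg} with the pair $\big(\pit(\cdot|c),\, p_c(\cdot)\big)$ playing the role of $(\pit, p)$ over $\mathcal{X}$. This requires one of the theorem's two support hypotheses to hold on each slice, which is immediate in the LM setting: a softmax autoregressive $\pit(\cdot|c)$ has full support on $\mathcal{X}$ for every $\theta$ and every $c$, so condition (i), $\Supp(p_c)\subset\Supp(\pit(\cdot|c))$, holds automatically. The theorem then yields $\nabt\Df{\pit(\cdot|c)}{p_c(\cdot)} = \E_{x\sim\pit(\cdot|c)}\big[f'(\pit(x|c)/p_c(x))\,\nabt\log\pit(x|c)\big]$ for each $c$. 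Substituting this back produces the nested-expectation estimator $\nabt J(\theta) = \E_{c\sim\tau(c)}\E_{x\sim\pit(\cdot|c)}\big[f'(\pit(x|c)/p_c(x))\,\nabt\log\pit(x|c)\big]$, which is what makes the method implementable: sample $c\sim\tau$, then $x\sim\pit(\cdot|c)$, and form the pseudo-reward-weighted score.

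The only genuine subtlety, and thus the main obstacle, is justifying the interchange of differentiation and the $\tau$-expectation in the continuous-context case; this is routine under a dominated-convergence argument, given that $\pit(\cdot|c)$ is smooth in $\theta$ and the per-context gradients are uniformly integrable, and it is trivial when the context set is finite or countable. Everything else is a direct, context-wise application of Theorem~\ref{thm:f-dpg}, so the conditional generalization inherits its correctness from the unconditional result and carries no additional content.
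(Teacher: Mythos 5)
Your proposal is correct and follows essentially the same route as the paper's Appendix treatment: define the conditional loss as $\E_{c\sim\tau(c)}\left[D_{f}(\pit(\cdot|c)||p_{c}(\cdot))\right]$, pass the gradient through the expectation over the $\theta$-independent context distribution $\tau$, and apply Theorem~\ref{thm:f-dpg} context-by-context to obtain the nested-expectation estimator. Your added remarks on verifying the support hypothesis per slice and on justifying the interchange for non-finite context sets are slightly more careful than the paper's presentation but do not constitute a different argument.
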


\section{Experiments}\label{sec:experiments}
{We study four instantiations of $f$-DPG, namely KL-DPG, RKL-DPG, TV-DPG and JS-DPG, corresponding to minimizing the forward KL, reverse KL, Total Variation, and Jensen-Shannon divergences, respectively.
We use an exponential moving average baseline with weight $\alpha=0.99$ for all, except for KL-DPG, where we use the analytically computed value of the pseudo-reward expectation, which amounts to {$1$} 
~\citep{korbak2022reinforcement}.}
{We evaluate them on a diverse array of tasks  {including} imposing {sentiment constraints} (Sec.~\ref{sec:scalar}), {lexical constraints} (Sec.~\ref{sec:pointwise}), debias{ing} genders' prevalence and religious groups' regard (Sec.~\ref{sec:distributional}), and context-conditioned tasks, such as enforcing factual consistency in summarization (Sec.~\ref{sec:conditional}) or compilability of generated code (see App. \ref{app:conditional-additional-info}).}
{Unless specified otherwise}, we use a pretrained GPT-2 {``small''}  \cite{radford2019language} with 117M parameters for the initial model. \ptdyI{\ptgkI{Yet,} we demonstrate in Sec.~\ref{sec:scaling_trend} that the observations continue to hold for 
models of larger size. }Implementation details and hyper-parameters are available in App. \ref{app:implement_details}.

\paragraph{Metrics} We report the following 
key metrics. We add task-specific metrics if needed.
\vspace{-5px}
\begin{enumerate}
\itemsep0em 
\item $D_{f}(\pi_{\theta}||p)$, the $f$-divergence between $p$ and $\pit$, with four different $f${'s} corresponding to forward KL, $\KL{p}{\pit}$; reverse KL, $\KL{\pit}{p}$; {T}otal {V}ariation, $\TV{\pit}{p}$; and Jensen-Shannon, $\JS{\pit}{p}$. We use importance sampling to estimate {these divergences}. %
\item {$\text{KL}(\pi_{\theta}||a)$, a measure of the divergence from original {LM} $a$ \cite{ziegler2019fine,khalifa2021distributional}.}
\item {\ptdyI{Alignment score, measured by }moments $\mathbb{E}_{x\sim \pit}{\phi(x)}$ of a feature of interest $\phi(x)$.}
\item {Normalized }Entropy~\citep{berger1996maximum}, a measure of diversity in probability distribution normalized by number of tokens.
\item {{Standard deviation} of a minibatch's pseudo-rewards, $\mathrm{std}(r_\theta(x))$, where $r_\theta$ is defined as in Sec.~\ref{sec:baseline}.}
\end{enumerate}

\subsection{Alignment with Scalar Preference{s}}\label{sec:scalar}
\paragraph{Task} {We begin with the task of maximizing a scalar preference with KL penalties, whose target distribution, $\pRL$, is defined in Eq. \ref{eq:p_RLKL}. We set} %
{$r(x)=\log \phi(x)$} where $\phi(x)$ is the probability {returned by}
a sentiment classifier fine-tuned from Distil-BERT \cite{hf_canonical_model_maintainers_2022}. {This reward function is optimal {for modeling}
a decision-maker which given $k$ different samples $x_1,\dots,x_k$, will pick $x_i$ with probability proportional to $\phi(x_i)$~(see Appendix \ref{app:choice_model})}. %
We set $\beta=0.1${, which is in line with the range of values explored by} \citet{ziegler2019fine}. 
Note that {applying} RKL-DPG {on $\pRL$} %
is equivalent to the RL with KL penalties method, as described in Sec.~\ref{sec:recovering_other_methods}. However, {through $f$-DPG we can explore alternative objectives to approximate the same target.}  %
\paragraph{Results} 
Fig. \ref{fig:scalar-klc_EBM} 
shows the evolution of {the above-mentioned metrics. }%
Further details are given in Fig. \ref{fig:scalar_preference_all} in the Appendix. 
{We observe that whereas RKL{-DPG} achieves by far the best performance in terms of reverse KL, $\KL{\pit}{p}$ {(top-right)}, it fails to minimize all other divergence metrics. This shows that minimizing one divergence does not necessarily imply that other divergences will follow. 
Notably, }
RKL-DPG yields the highest value of \ptdyI{alignment score} $\E_{\pi_\theta}[\phi(x)]$
at the cost of a significant departure from $a$.
{We connect this to the strong influence that low values $p(x)$ have on RKL{-DPG}, which induces a large pseudo-reward for strongly reducing $\pit(x)$ on those samples (see Sec \ref{sec:reward_comparision}) and produces the spike at the beginning of training in $\mathrm{std(rewards)}$. This can lead $\pit(x)$ to concentrate on high-probability regions of $p(x)$, at the cost of diversity, which can also be seen in {the} low entropy of the generated samples.}
{Interestingly, the three remaining variants of DPG (KL, TV and JS) consistently minimize all four tracked divergences, with JS{-DPG} performing best overall.}%

{In App. \ref{app:gen_quality}, we show }{additional metrics on generated sentences{, which} show low diversity 
but high quality 
for RKL-DPG, compared to other $f$-DPGs, suggesting it captures a subset %
of the target distribution (``mode collapse''), as commonly observed in other generative models \cite{Huszar15,CheLJBL17,mescheder2018training}}.%
\begin{figure}[ht]
\centering
\centerline{\includegraphics[width=\columnwidth]{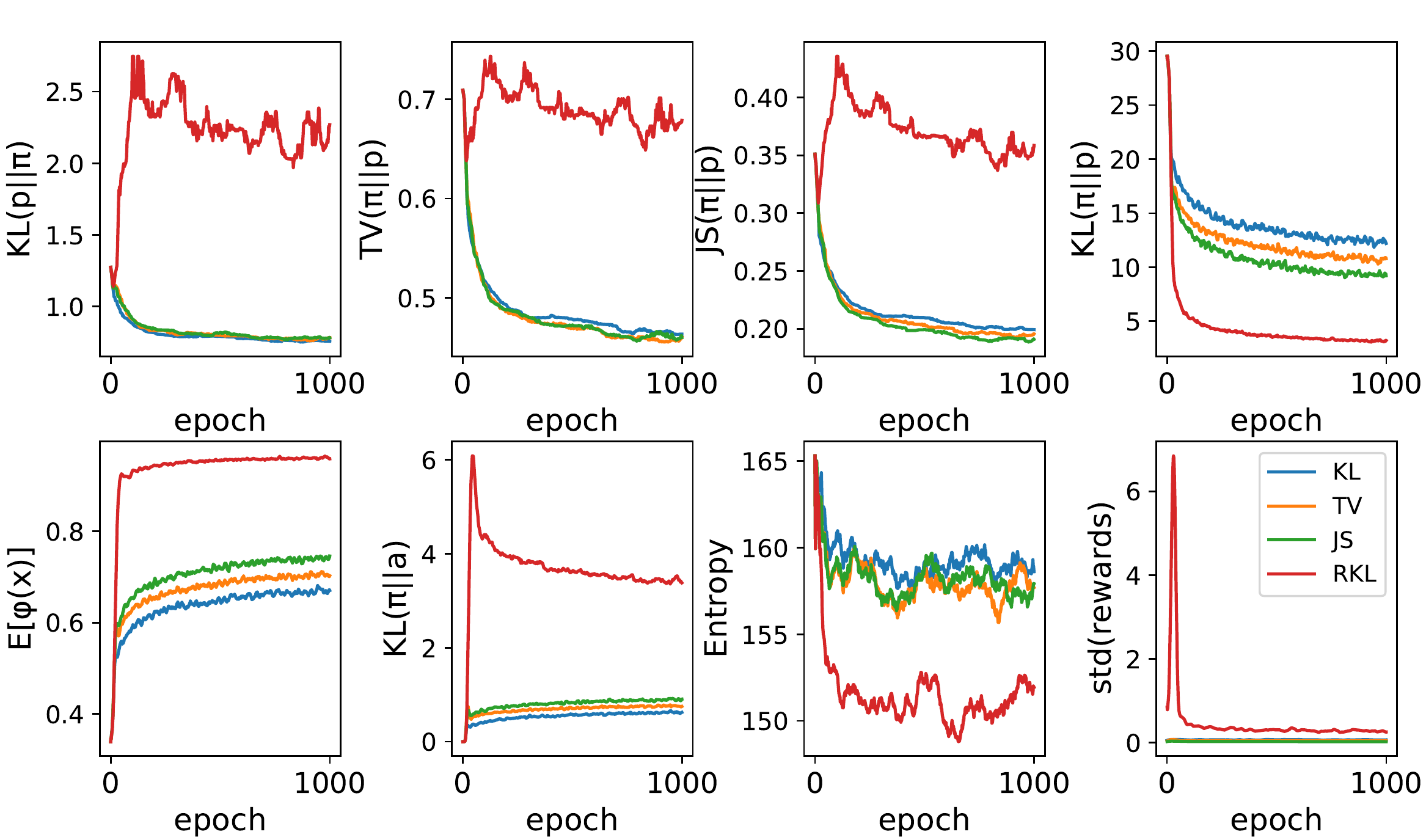}}
\caption{
Comparison of $f$-DPG on sentiment preference. Evaluation metrics: four $f$-divergences $D_f(\pi_\theta||p)$ (↓ better), \ptdyI{alignment score }$\E_{\pi_\theta}[\phi(x)]$ (↑ better), entropy (↑ better), standard deviation of pseudo-reward {$\mathrm{std}(r_\theta(x))$}.}\label{fig:scalar-klc_EBM}
\end{figure}
\subsection{Alignment with {Lexical Constraint}{s}}\label{sec:pointwise}
\paragraph{Task} 
In this task, we constrain the presence of a specific word in the generated text. 
{Following \citet{khalifa2021distributional}, we formulate this goal as a binary preference on the LM by using a target distribution $\pbin$, where $b(x)=1$ iff the target word appears in the sequence $x$, and using a scalar preference target distribution $\pRL$ where $r(x)$ is set in the same way as $b(x)$ above.}
{
Note that 
in the GDC framework, $\pbin(x)=0$ when $b(x)=0$, implying that reverse KL, {namely $\KL \pit p$, becomes infinite},} so RKL-DPG cannot be used {(nor measured)} {for that target}. %
We use four words with {different occurrence frequency}: %
``amazing''($1\cdotp10^{-3}$), ``restaurant'' ($6\cdotp10^{-4}$), ``amusing'' ($6\cdotp10^{-5}$), and ``Wikileaks'' ($8\cdotp10^{-6}$).
\paragraph{Results} 
The aggregated evolution of the metrics 
{for both GDC and RL {with KL penalties} framework}
is presented in Fig.~\ref{fig:lexical_result_merge}
{(Fig.~\ref{fig:gdc-EBM_average_intro} shows a simplified view of Fig. \ref{fig:lexical_result_merge} (a)).}
{Disaggregated} results for each 
task are presented %
on App. \ref{app:additional_figures}.
{We see that all variants of $f$-DPG reduce the divergence from the target distribution across all measured $f$-divergences. Furthermore, as expected, convergence to the target is connected with the success ratio in producing the desired word, $\E_{\pit}\left[b(x)\right]$, {while balancing it with {{a} moderate divergence from $a$}, $\KL{\pit}{a}$}.%
} %
This reflects that approaching the optimal distribution $p$ translates into metrics in the downstream task. %
{Strinklingly, the original KL-DPG is outperformed by all other variants of $f$-DPG, even in terms of forward KL. We hypothesize that this is linked to the high variance of the pseudo-rewards in KL-DPG, %
{as visualized } 
in the last panel of 
{Fig.~\ref{fig:lexical_result_merge} (a) and 
(b)}.
In Sec. \ref{sec:reward_comparision}, we 
{suggest}
an interpretation for this.}
{We also observe that RKL-DPG tends to produce distributions with lower normalized entropy. Despite this effect, }
{w}e found no significant difference in diversity among the generated sentences {(see Tab. \ref{tab:gen_quality_pointwise} in App. \ref{app:gen_quality})}
\begin{figure}[ht]
\centering	
\begin{tabular}{cc}
{\footnotesize{}{(a) lexical constraint with $\pbin$}} \\ 
\centerline{\includegraphics[width=\columnwidth]{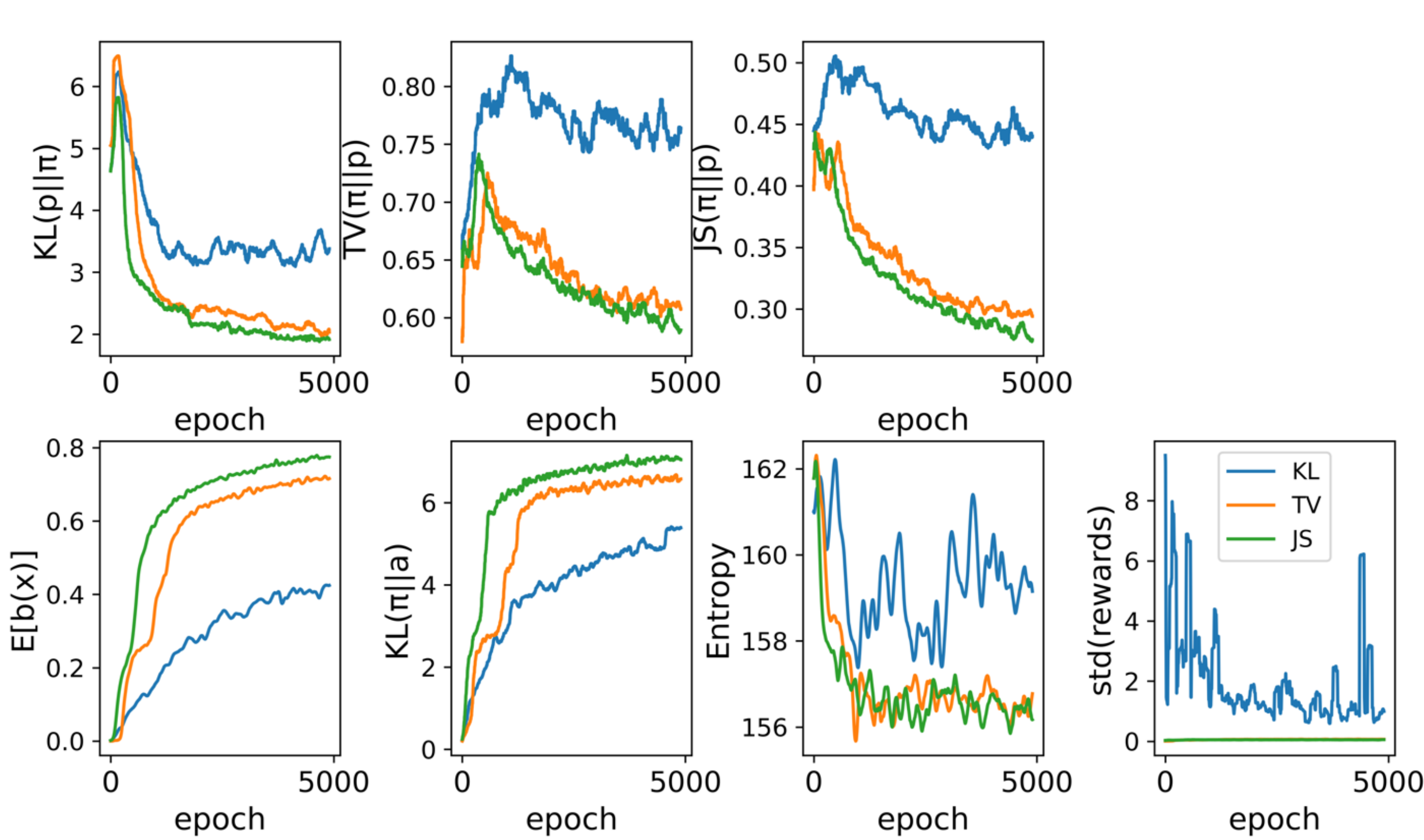}} \\
 {\footnotesize{(b) lexical constraint with $\pRL$}}\\ 
 \centerline{\includegraphics[width=\columnwidth]{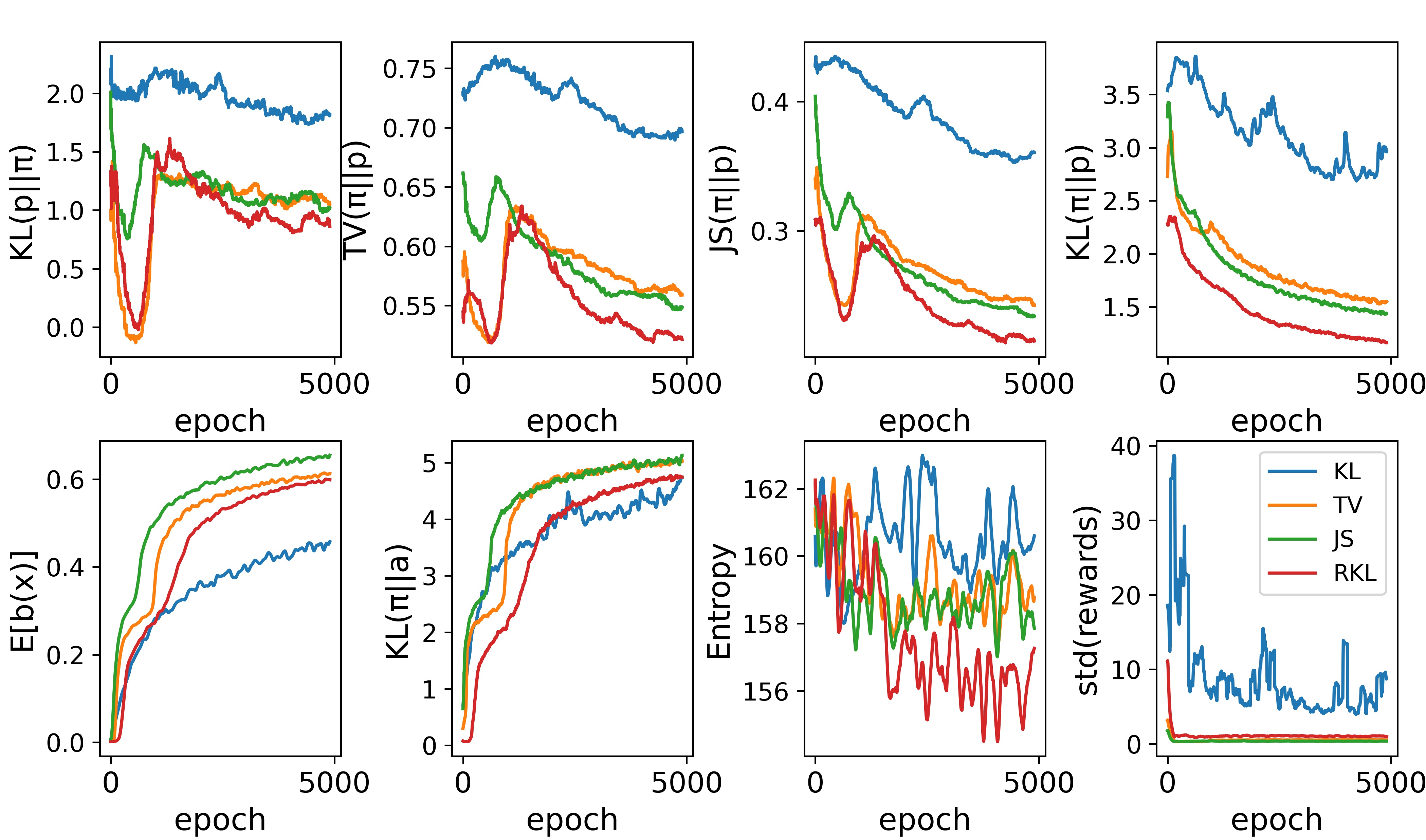}} 
\end{tabular}
\caption{Comparison of $f$-DPG aggregated on four lexical constraints. Standard deviations are suppressed for clarity. Evaluation metrics: four $f$-divergences $D_f(\pi_\theta||p)$ (↓ better), \ptdyI{alignment score }$\E_{\pi_\theta}[b(x)]$ (↑ better), entropy (↑ better), standard deviation of pseudo-reward {$\mathrm{std}(r_\theta(x))$}. }
\label{fig:lexical_result_merge}
\end{figure}
\subsection{Alignment with Distributional Constraint{s}}\label{sec:distributional}
\paragraph{Task} 
We {now investigate enforcing distributional preferences on {the LM}. We focus on debiasing the pretrained model on two kinds of preferences,
{namely} {%
genders' prevalence \cite{khalifa2021distributional} and regard relative to religious groups.}
}
{
{The preferences for the genders' debiasing task are defined as } $\phi_{1}(x)=1$ iff $x$ contains more female than male pronouns, with desired moment $\bar{\mu}_1=0.5$ and $\phi_{2}(x)=1$ iff $x$ contains at least one of the words in the `science' word list compiled by \citet{pplm}, with desired moment $\bar{\mu}_2=1$. 
{For regard debiasing}, we use a single distributional constraint where $0<\phi(x)<1$ is a regard score of the sentence when prompted with {\fontfamily{qcr}\selectfont
Muslims}, evaluated with a pretrained classifier \cite{sheng2019woman}. We set the desired moment $\bar{\mu}=0.568$, the regard score observed {\fontfamily{qcr}\selectfont
Christians}. The initial average regard score given {\fontfamily{qcr}\selectfont
Muslims} is $0.385$.
}
For the first experiment, we use GPT-2 small {as {the }initial model {$a$}}, additionally fine-tuned on the WikiBio dataset \cite{lebret2016neural}{, whereas for the last one we use vanilla GPT-2 small.}
\paragraph{Results} %
{
{We }report the results of {both experiments on} %
{Fig.~\ref{fig:distributional_result_merge}.}
}
For the regard score rebalancing, we {considerably reduce bias in }%
the regard score for two different demographic groups, from initial regard score ratio $\E\left[\phi(x)\right|\text{{\fontfamily{qcr}\selectfont
Christians}}]:\E\left[\phi(x)\right|\text{{\fontfamily{qcr}\selectfont
Muslims}}]=1:0.677$  to $\E\left[\phi(x)\right|\text{{\fontfamily{qcr}\selectfont
Christians}}]:\E\left[\phi(x)\right|\text{{\fontfamily{qcr}\selectfont
Muslims}}]=1:0.801$ {on average}. 
{Interestingly, this task showcases a weakness of TV-DPG: Because the original distribution is already close to the target, the hard-thresholded pseudo-reward has a large variance (last panel of Fig~\ref{fig:distributional_result_merge}(b)), inducing noisy gradient estimates and, consequently, sub-optimal convergence. }%
{Concerning the {gender debiasing} %
experiments, we can see that all other variants of $f$-DPG outperform the original KL-DPG explored in \citet{khalifa2021distributional}, with RKL-DPG 
giving the best results %
{and} better matching the pointwise constraint although {seemingly} at the cost of lower diversity as measured by the entropy.}

\begin{figure}[ht]
\centering	
\begin{tabular}{cc}
{\footnotesize{(a) distributional constraint for gender prevalence}} \\ 
\centerline{\includegraphics[width=\columnwidth]{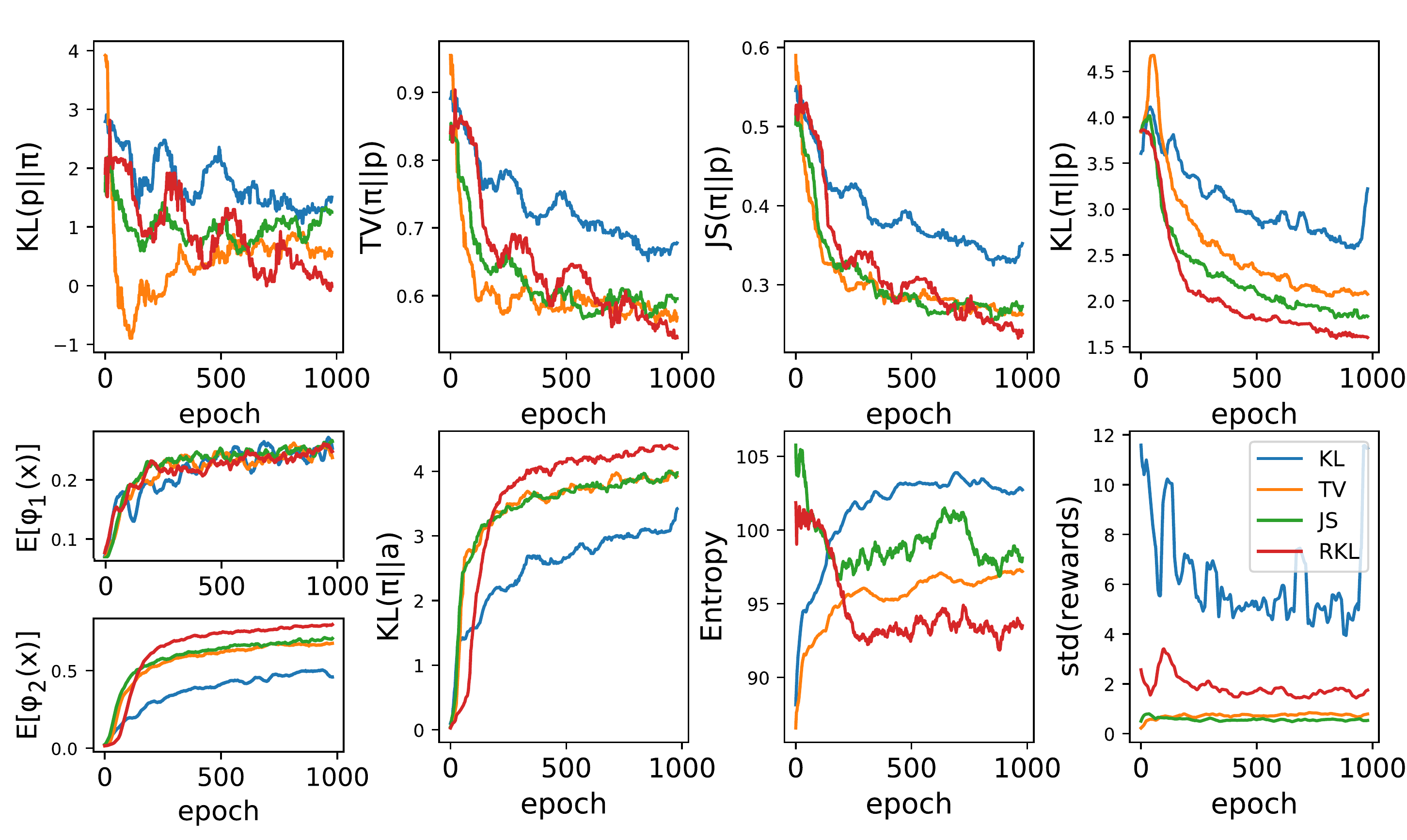}} \\
 {\footnotesize{(b) distributional constraint for regard rebalancing}}\\ 
 \centerline{\includegraphics[width=\columnwidth]{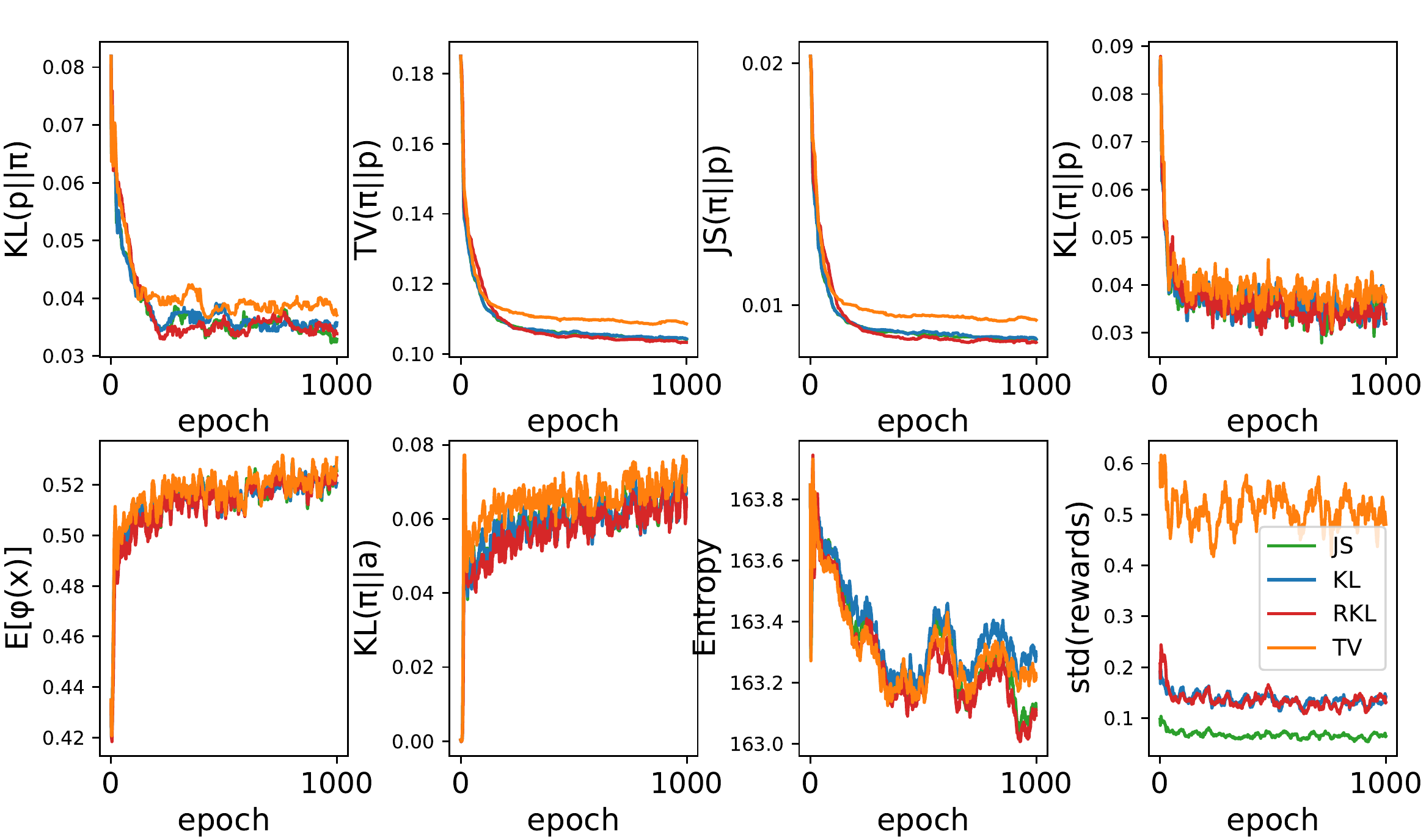}} 
\end{tabular}
\caption{Comparison of $f$-DPG aggregated on distributional constraints. Evaluation metrics: four $f$-divergences $D_f(\pi_\theta||p)$ (↓ better), \ptdyI{alignment score }$\E_{\pi_\theta}[\phi(x)]$ (↑ better), entropy (↑ better), standard deviation of pseudo-reward {$\mathrm{std}(r_\theta(x))$}. }
\label{fig:distributional_result_merge}
\end{figure}
\subsection{Alignment with Conditional Constraint{s}}\label{sec:conditional}
\paragraph{Task} 
We {adopt the conditional task from} %
\citet{korbak2021controlling}, %
{%
{which aims} to constrain {the  T5~\citep{raffel2020exploring}} language model to generate {more} factually faithful {summaries} \cite{MaynezNBM20,nan2021entity}. {Specifically, let} $\text{NER}(\cdot)$ {denote} the set of named entities found in a text. %
{Then, } $b(x,c)=1$ iff $\left[\text{NER}(x)\subseteq\text{NER}(c)\right]\land\left[\left|\text{NER}(x)\right|\ge4\right]${,} and $0$ otherwise. 
{{Following the authors, we }sample source documents from the} {the} CNN/Daily Mail dataset \cite{nallapati2016abstractive}, i.e. $\tau(c)$ is a uniform distribution over a given subset of source documents. %
In addition to the divergences, we evaluate the performance using 
Rouge \cite{lin-2004-rouge}, a measure of summarization quality in terms of unigram overlap between the
source document and ground truth summary (See App. \ref{app:conditional} for additional metrics and more experiment{s} {on code generation with compilability preferences}). %
}  %
\paragraph{Results} 
We present the evolution of metrics in Fig. \ref{fig:summarization_short}. 
The result{s} show that $f$-DPG {increases} the fraction of consistent named entities in summarization{, and 
{interestingly,} this also leads to indirect improvement in the overall quality of
generated summaries compared to ground truth, even though ground truth summaries are not used in training}. 
{As {also} observed in Sec.~\ref{sec:pointwise}, }
JS-DPG leads to better convergence to $p$ than {KL-DPG {as} used} %
in \citet{korbak2021controlling}.

\begin{figure}[ht]
\centering
\centerline{\includegraphics[width=1\columnwidth]{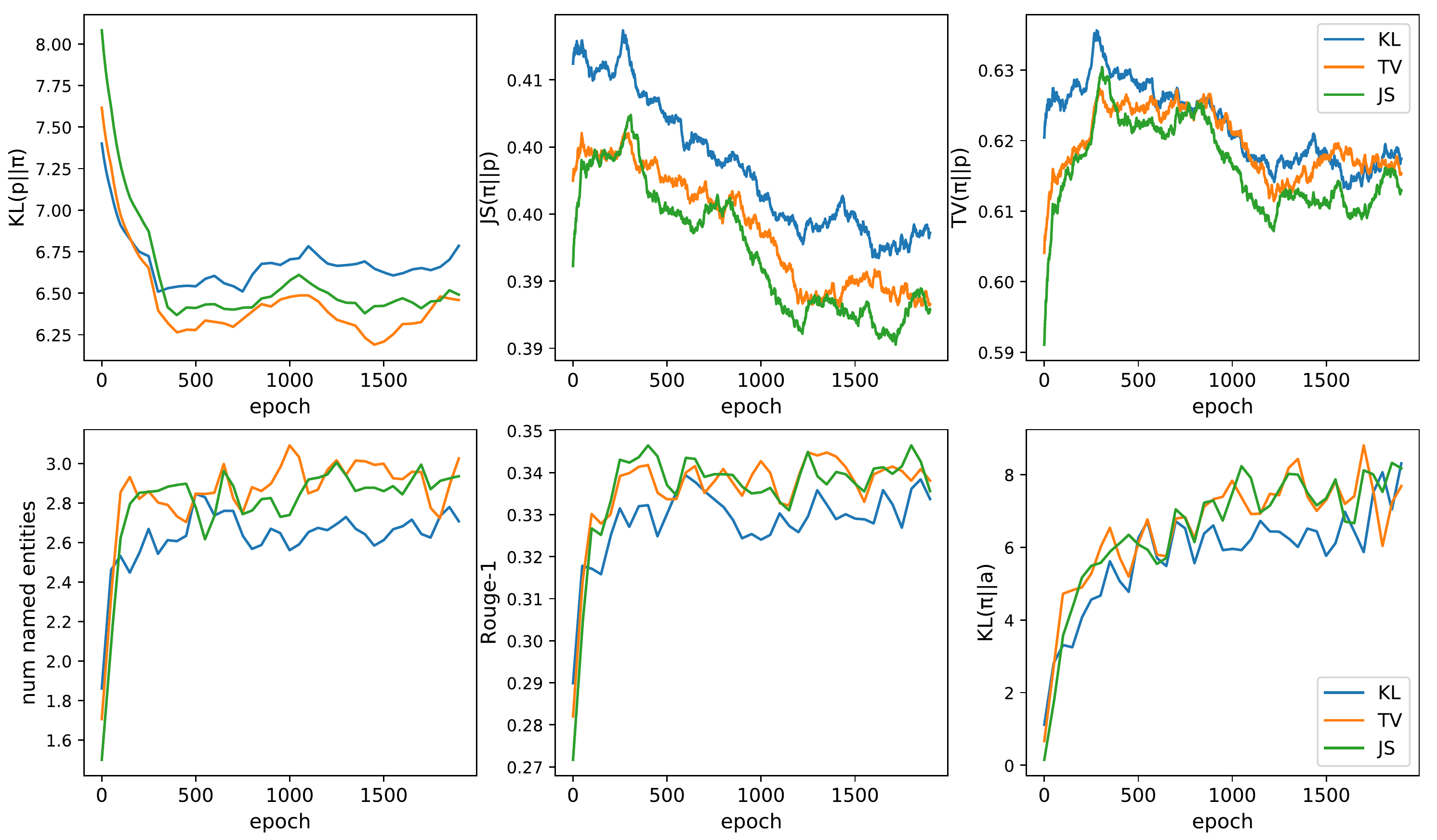}}
\caption{
Comparison of $f$-DPG on factual summarization. Evaluation metrics: 3 $f$-divergences $D_f(\pi_\theta||p)$ (↓ better), number of named entities (↑ better), Rouge (↑ better).}\label{fig:summarization_short}
\end{figure}

\subsection{Scaling Trends of $f$-DPG}\label{sec:scaling_trend}
\ptdyI{We conduct experiments to investigate the effect of model size on our approach using the  scalar preference task described in Sec.~\ref{sec:scalar}. Specifically, we gradually increase the model size from GPT-2 ``small'' (117M \pttkI{parameters}) to ``xl'' (1.5B \pttkI{parameters}) while tracking two important metrics: alignment score, which is measured by the expected reward $\E_{\pi_\theta}[\phi(x)]$, and diversity, which is measured by the entropy.
Figure \ref{fig:scaling-trend} demonstrates that the alignment score steadily improves as the model size increases. However, \ptmdI{we observe} persistent differences between the divergence objectives for different f-DPGs, \pttkI{leaving} the general order between f-DPGs intact with increasing model size (See Fig.~\ref{fig:scaling_learning_curve} \ptmdI{in} App.~\ref{app:additional_figures} for evolution of metrics through training epochs).
The scaling trend of LM alignment\ptmdI{,} characterized by a gradual and predictable increase without sudden shifts in performance, aligns with previous findings in the literature \cite{bai2022training}. Nonetheless, our study further emphasizes the importance of proper divergence objectives, as increasing model size alone does not necessarily bridge the gap between optimal and suboptimal objectives. The smooth and gradual increase of the alignment score as a function of model size suggests that our findings will generalize to even larger LMs.}

\begin{figure*}[ht]
\centering
\centerline{\includegraphics[width=\textwidth]{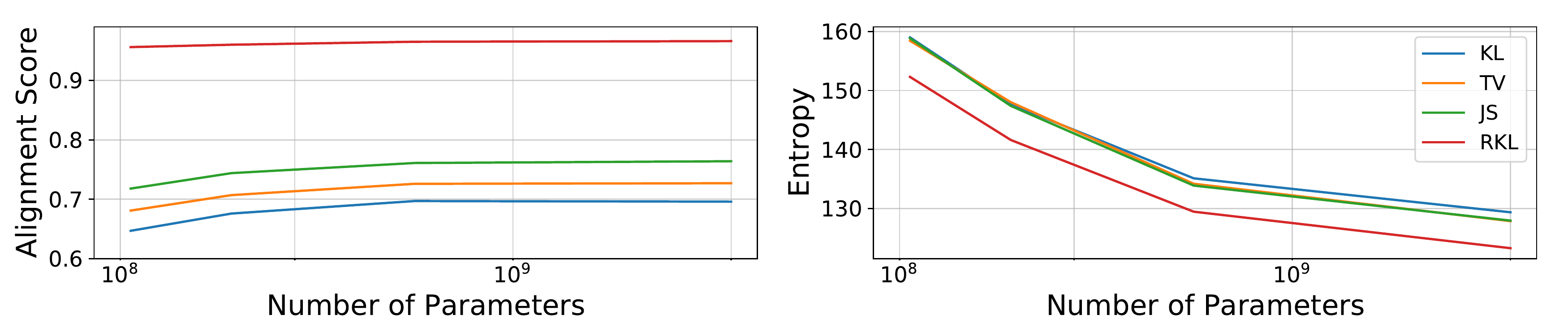}}
\caption{\ptdyI{The scaling trend of $f$-DPG on sentiment preference. 
The $x$-axis denotes number of parameters of the LM $\pi_\theta$ and {the} $y$-axis denotes the alignment score and diversity measured by the expected reward $\E_{\pi_\theta}[\phi(x)]$ and \ptmdI{by} entropy, respectively.}
}\label{fig:scaling-trend}
\end{figure*}

\subsection{Ablation Study}
This section presents just the key findings of our study. %
{Full results} and detailed discussions can be found in {App. \ref{app:ablation}}.%

\paragraph{Effect of parameter family {capacity}}
{All experiments presented so far correspond to {possibly} mis-specified target distribution{s}. To understand whether the observed behavior of different variants $f$-DPG is affected by this factor, we used pre-trained models with the same architecture {as $\pit$ and $p$}.} %
We found 
that KL-DPG again lags considerably {in terms of divergence, while presenting a} high variance of %
{in the} pseudo-reward. %
RKL-DPG shows a significant drop of entropy in the initial phase, but with full capacity of parameter family, the model can recover, and cover the rest of the distribution.
{Additionally, applying {zero-shot} the fine-tuned LMs %
to a summarization task, following \citet{radford2019language}, we found that the they recover to a large extent the quality of {the target distribution.}
}

\paragraph{Effect of training scheme} {We examined different training schemes %
for the lexical constraint on ``amazing'' from Sec.~\ref{sec:pointwise}}. %
{We saw} that the use of a baseline technique improve{s} the performance of the $f$-DPG method, with RKL-DPG showing the greatest benefit. 
Additionally, we found that {even though} a large batch size {is effective at reducing the variance of KL-DPG, we still observe KL-DPG to perform comparatively worse than other divergences.%
} %
{Finally, we observe that our }%
{importance sampling estimates} converged to the true value of $Z$.

\vspace*{-2mm}
\section{Discussion {and Conclusion}}\label{sec:reward_comparision}

\begin{figure}[ht]
\centering
\centerline{\includegraphics[width=\columnwidth]{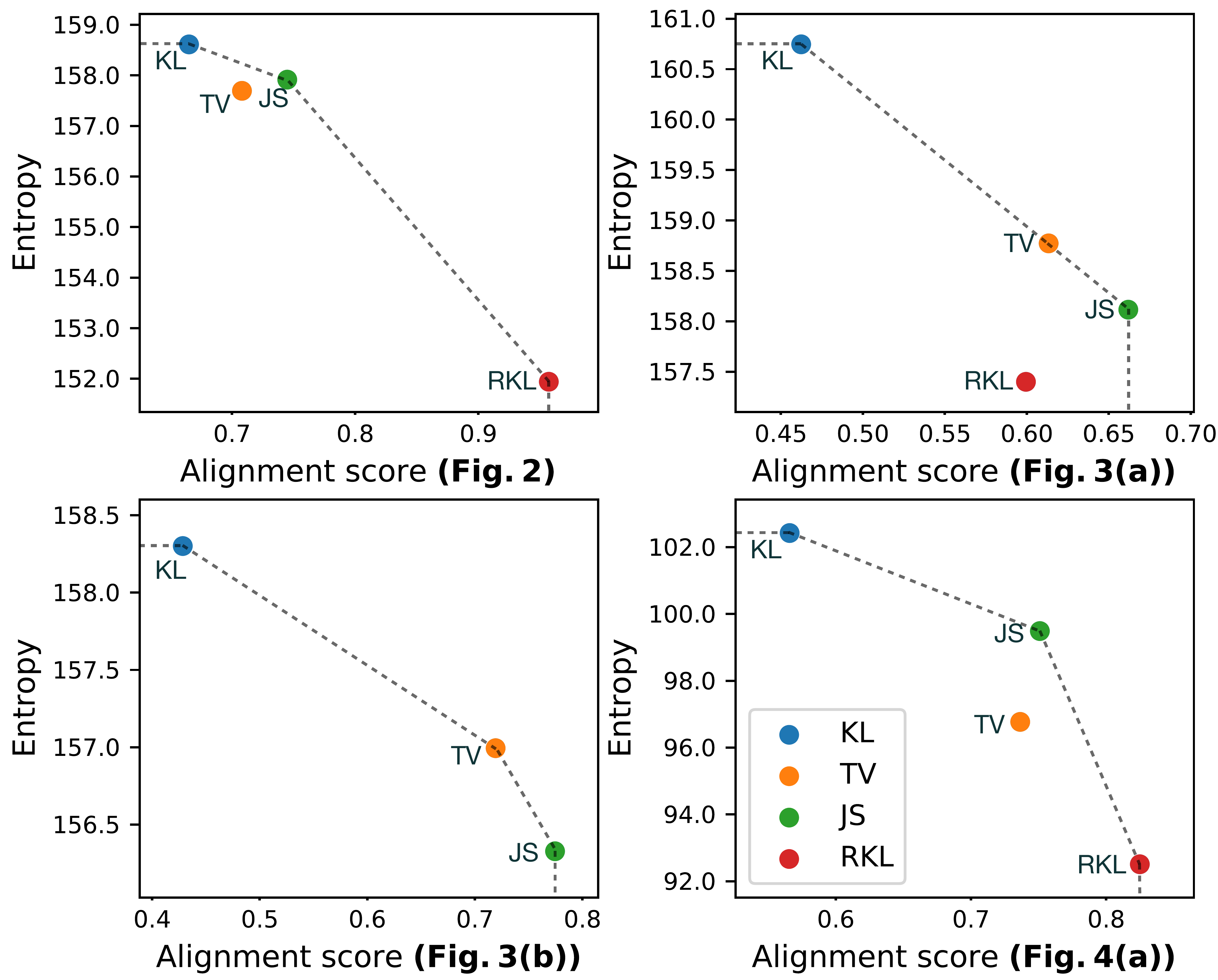}}
\caption{\ptdyI{Pareto frontier of $f$-DPG for different alignment tasks; sentiment preference (Fig.~\ref{fig:scalar-klc_EBM}), lexical constraints (Fig.~\ref{fig:lexical_result_merge}(a), (b)), and distributional constraint for gender prevalence (Fig.~\ref{fig:distributional_result_merge}(a))}}
\label{fig:pareto_frontier}
\end{figure}

\begin{figure}[ht]
\centering
\centerline{\includegraphics[width=\columnwidth]{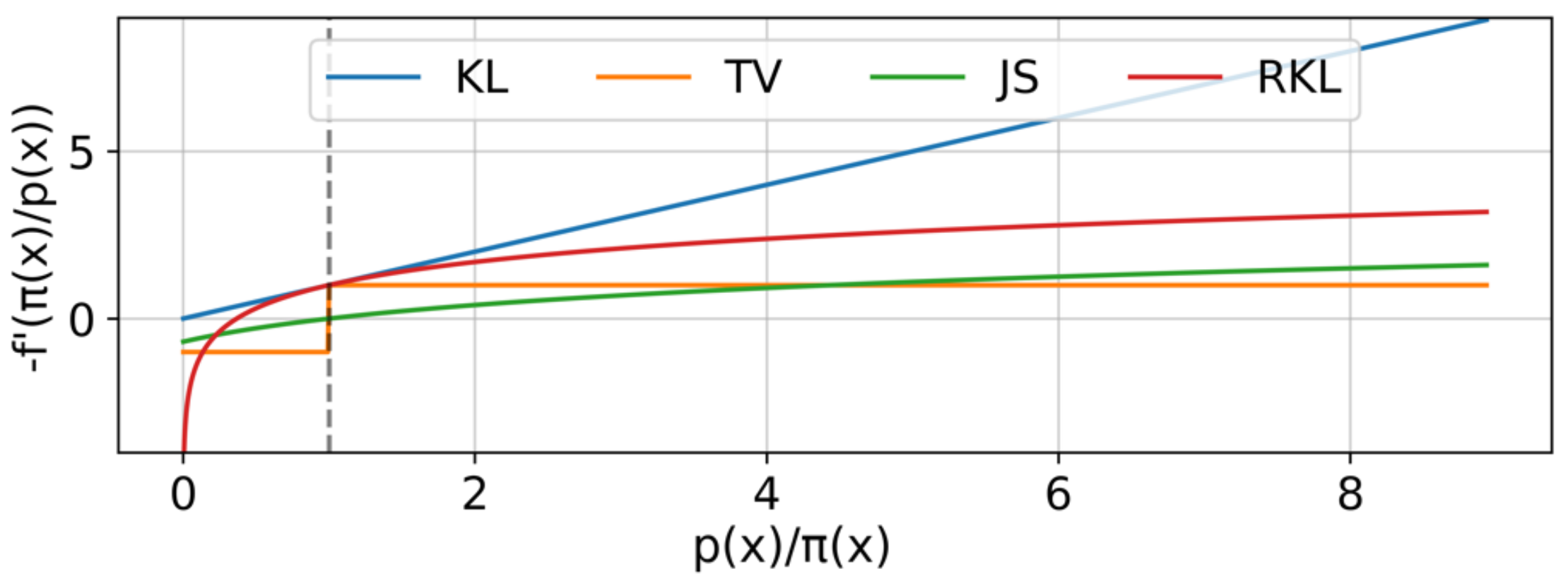}}
\caption{Pseudo-reward{s} for various $f$-divergence{s}. 
The $x$-axis denotes $\frac{p(x)}{\pit(x)}$ and {the} $y$-axis denotes the pseudo-reward. {The dotted line denotes the point where $p(x)=\pit(x)$.}%
}\label{fig:pseudo-reward}
\end{figure}

\vspace*{-2mm}
{A plausible hypothesis would have been}
that each variant of $f$-DPG {is} %
comparatively better 
{at least}
in terms of the $f$-divergence {objective} being optimized. 
{Surprisingly, we found that, save for a few exceptions (Sec.~\ref{sec:scalar}),}
for a given target there is one or a few variants that are the best {across}
all measured divergences.
{Furthermore, we {observed} %
that}
divergence measures can have a significant impact on the performance of the model {depending on the target distribution}.
\ptdyI{Fig.~\ref{fig:pareto_frontier} summarizes the Pareto frontier of the alignment-diversity trade-off of \ptmdI{the} $f$-DPG method. 
The results demonstrate that RKL-DPG and KL-DPG consistently represent two contrasting extremes: RKL-DPG shows high alignment but limited diversity, whereas KL-DPG exhibits low alignment but high diversity. JS-DPG shows a balanced trade-off between alignment and diversity and consistently appeared on the Pareto frontier across all experiments we conducted.
}

Fig. \ref{fig:pseudo-reward} illustrates the {differences between} pseudo-rewards for %
{distinct}
$f$-divergences\ptdyI{, giving a plausible explanation for the observed differences}.
The forward KL loss aims to ensure coverage of the subset where $p(x)>0$, {giving} %
a large pseudo-reward for samples with $p(x)%
{> >}\pi{(x)}$.
However, %
{the optimization}
can be sensitive to sampling noise in {the} finite sample approximation ({see, e.g.,} Sec.~\ref{sec:pointwise}). 
Conversely, the reverse KL loss results in extreme negative rewards for samples with $p(x)
{< <}\pit{(x)}$, leading {$\pit$ to avoid %
{such} regions and resulting in distributional collapse} 
(Sec.~\ref{sec:scalar}).
%
%
%
%
%
\begin{comment}
On the other hand, {the} Jensen-Shannon loss {gives robust rewards in both directions and prevent{s} $\pit$ from heavily relying on a single direction}
(Sec \ref{sec:pointwise}).
Total Variation loss {is also more robust to outliers thanks to the} 
hard-threshold pseudo-reward, but it can lead to {high variance }
when $\pit \approx p$ 
(Sec.~\ref{sec:distributional}).
\end{comment}
Total Variation loss is robust to outliers thanks to its hard-thresholded pseudo-reward, however it can lead to {high variance} behavior when $\pit \approx p$ (Sec.~\ref{sec:distributional}). 
On the other hand, {the} Jensen-Shannon loss {gives smooth and} robust rewards in both directions and prevent{s} $\pit$ from heavily relying on a single direction{, making it a reasonable default choice as confirmed by our experiments}.
{To conclude, we} {propose a flexible framework for {approximating a target distribution} {by minimizing any $f$-divergence, unifying earlier approaches for aligning language models.} 
{Our results on a diverse array of tasks show that minimizing 
{well-chosen}%
$f$-divergences leads to significant gains over previous work.}}
\ptdyI{
The fact that increasing the model size improves the alignment score but does not inherently bridge the gap between objectives\fmdI{(22/05/2023) ``the gap towards the target'' (?)\ptdyI{DY: Here, what I meant is gap between the suboptimal objective and the optimal objective}} underscores the importance of selecting appropriate divergence objectives.}
%
\begin{comment}
{To conclude, we }%
propose a flexible framework for {approximating a target distribution}  {by minimizing any $f$-divergence}
%
that{, in particular,}\fmdI{(23/01/2023) it does not only unify ...} unifies 
DPG {used by GDC} %
and RL with KL penalties {used by RLHF.} %
{Through extensive} {experiments} %
{we} highlight the importance of carefully choosing {an} appropriate divergence function for {a given alignment objective.}%
\end{comment}

%
%

%

%

%
%
%
%
%
%
%
%
%
%
%
%
%

%
%
%
%
%
%
%
%
%

%
%
%
%
%

%
%
%

%
%

%

%
\bibliography{f-DPG-rebibed}
\bibliographystyle{icml2023}

\newpage
\appendix
\onecolumn

\newcommand{\fstar}{{f^{*}}}

\section{Complements on Formal Aspects and Proofs}
\label{app:complements-proofs-formal}

\subsection{Equivalent definitions for $f$-divergences}
\label{app:generalized_divergence}

The definition of $f$-divergences of Eq.~\ref{Eq.:f-divergence} is equivalent to a second definition, in a more ``symmetrical'' format, following \cite{liese2006divergences}, which will help in some derivations, in particular in the proof of Theorem \ref{thm:f-dpg}. 

% % Here we give a general definition of f-divergences between two probability distributions, without restrictions on supports, in a useful ``symmetrical'' format, following \cite{liese2006divergences}. We then show that this definition is equivalent to the definition of Eq.~\ref{Eq.:f-divergence},
% and then
% give examples of common $f$-divergences in the notation of the general definition.
\begin{definition*}[$f$-divergence: ``symmetrical'' format]
The $f$-divergence $D_{f}(p||q)$, where $p$ and
$q$ are distributions over a discrete set $\mathcal{X}$ can be defined as
\begin{align}
D_{f}(p||q)\doteq \sum_{\{x:\ p(x)>0,\ q(x)>0\}}q(x)\ f(\frac{p(x)}{q(x)}) \quad +f(0)\ q(p=0) \quad +\fstar(0)\ p(q=0),
\label{eq:generalized_divergence}
\end{align}
where the generator function $f:(0,\infty) \rightarrow\mathbb{R}$ 
is a convex function satisfying $f(1)=0$.
We denote by $q(p=0)$
the $q$-mass of the set $\{x:p(x)=0\}$, i.e. $q(p=0)=\sum_{\{x:p(x)=0\}}q(x)$
and similarly for $p(q=0)$.  
\end{definition*}
In this definition, the function $\fstar(t)$ is the so-called \emph{perspective transform} of $f$
defined by $\fstar(t)=t\ f(\frac{1}{t})$. It can be shown to  be also a convex function
$\fstar: (0,\infty) \rightarrow\mathbb{R}$ with $\fstar(1)=0$ and
$f^{**}=f$. As we mentioned in the main text, we also have the following important ``swapping'' property: $D_f(p,q)=D_\fstar(q,p)$. 

Following \citet{liese2006divergences, polyanski}, we use the conventions: 
\begin{align}
    &f(0) \doteq \lim_{t\rightarrow0}f(t),\quad \fstar(0) =\lim_{t\rightarrow0}\fstar(t) = \lim_{t\rightarrow0}t\ f(\frac{1}{t}),\\
    &0\ f(0)\doteq 0,\quad 0\ \fstar(0)\doteq 0,\text{\qquad including when } f(0) = \infty \text{ and } \fstar(0)=\infty,\\
    &f'(\infty) \doteq \fstar(0) = \lim_{t\rightarrow0}t\ f(\frac{1}{t}).
\end{align}
For the existence of the limits in these equations, where $f(0)$ and $\fstar(0)$ can take values in $\R \cup \{\infty\}$, as well as for the motivation for defining $f'(\infty) \doteq \lim_{t\rightarrow0}t\ f(\frac{1}{t})$, one may refer to \citep{liese2006divergences} and \citep[\S 2.3]{hiriart2013convex}.

\paragraph{Equivalence of definitions \ref{Eq.:f-divergence} and \ref{eq:generalized_divergence}} In order to prove this equivalence, after noting that $f'(\infty) = \fstar(0)$, it remains to show that 
$\E_{x\sim q}f\left(\frac{p(x)}{q(x)}\right)$
is equal to $\sum_{\{x:\ p(x)>0,\ q(x)>0\}}q(x)\ f(\frac{p(x)}{q(x)}) + f(0)\ q(p=0)$. We have:
\begin{align*}
    \E_{x\sim q}\ f(\frac{p(x)}{q(x)}) &= \sum_{\{x:\ q(x)>0\}} q(x)\ f(\frac{p(x)}{q(x)}) \\
    &= \sum_{\{x:\ q(x) > 0,\ p(x) > 0\}} q(x)\ f(\frac{p(x)}{q(x)})     
    \quad + \sum_{\{x:\ q(x) > 0,\ p(x) = 0\}} q(x)\ f(0)\\
    &= \sum_{\{x:\ q(x) > 0,\ p(x) > 0\}} q(x)\ f(\frac{p(x)}{q(x)})     
    \quad + f(0)\ q(p=0),
\end{align*}
which concludes the proof.

\subsection{Illustrations of a few $f$-divergences}
Let's now see how the notion of $f$-divergence can be applied to a few common cases.

\paragraph*{Forward and reverse KL}

By the standard definition for KL divergence, we have, for $\KL p \pi$, the ``forward KL'' from a model $\pi$ to a target $p$: 
% (often used to characterize the ``forward KL'' from a model $q$ to a target $p$):  
\begin{align}
\KL p \pi = 
            \begin{cases}
			\E_{x\sim p}\ \log\frac{p(x)}{\pi(x)} & \text{if } \Supp(p)\subset \Supp(\pi),\\
            \infty, & \text{otherwise}.
		  \end{cases}
    \label{eq:KL-illustration}
\end{align}

If we take $f(t) = - \log t$, as in Table \ref{tab:f-dpg-loss}, then we have $f(0) = \infty$. On the other hand we see that $\fstar(t) = t\ \log t$ and $\fstar(0) = 0$.
We can then write, using \eqref{eq:generalized_divergence}:
\begin{align*}
    D_{f}(\pi||p)
    &= \sum_{\{x:\ \pi(x)>0,\ p(x)>0\}} - p(x)\ \log(\frac{\pi(x)}{p(x)}) \quad + \infty\ p(\pi=0) + 0\ \pi(p=0)\\
    &= \sum_{\{x:\ \pi(x)>0,\ p(x)>0\}} p(x)\ \log(\frac{p(x)}{\pi(x)}) \quad + \infty\ p(\pi=0),
\end{align*}
where $\infty\ p(\pi=0)$ is null for $\Supp(p)\subset \Supp(\pi)$ and infinite otherwise. Hence $D_{f}(\pi||p) = \KL p \pi$, the forward KL from $\pi$ to $p$.

Now, consider the ``reverse KL'' from $\pi$ to $p$, namely $\KL \pi p$. Based on the previous derivation, and with the same $f(t) = - \log t$ we can write it as $\KL \pi p = D_{f}(p||\pi)$, but using the perspective function $\fstar(t) = t \log t$, we can also write it (as we actually do in Table \ref{tab:f-dpg-loss}) as $D_\fstar(\pi||p) = D_{t\,\log t}(\pi||p)$.

\paragraph*{Total Variation divergence}
The Total Variation divergence between $p$ and $\pi$ is standardly defined as $\TV p \pi = \frac{1}{2}\ \sum_{x\in \mathcal{X}} |p(x)-\pi(x)|$. We then have $\TV p \pi = \TV \pi p$. 
Let's then define $f(t) = \frac{1}{2} |1-t|$. We have $f(0)=1/2$, $\fstar(t)=f(t)$, and $\fstar(0)=1/2$. Then, using \eqref{eq:generalized_divergence}:
\begin{align*}
    D_{f}(\pi||p)
    &= \sum_{\{x:\ \pi(x)>0,\ p(x)>0\}} \frac{1}{2}\ p(x)\ \left|1-\frac{\pi(x)}{p(x)}\right| \quad + \frac{1}{2}\ p(\pi=0) + \frac{1}{2}\ \pi(p=0)\\
    &= \sum_{\{x:\ \pi(x)>0,\ p(x)>0\}} \frac{1}{2}\  \left|p(x)-\pi(x)\right| \quad + \frac{1}{2}\ p(\pi=0) + \frac{1}{2}\ \pi(p=0)\\
    &= \sum_{\{x:\ \pi(x)>0,\ p(x)>0\}} \frac{1}{2}\  \left|p(x)-\pi(x)\right|  
   \quad + \frac{1}{2}\ \sum_{\{x:\ \pi(x)=0,\ p(x)>0\}} \left|p(x)-\pi(x)\right| \\
   &\qquad\qquad\qquad\qquad + \frac{1}{2}\  \sum_{\{x:\ \pi(x)>0,\ p(x)=0\}} \left|p(x)-\pi(x)\right| \\
    &= \frac{1}{2}\ \sum_{x\in \mathcal{X}}  \left|p(x)-\pi(x)\right|, 
\end{align*}
and therefore $\TV p \pi = D_{f}(\pi||p)$, and also $\TV p \pi = \TV \pi p = D_\fstar(p||\pi) = D_f(p||\pi)$.

\subsection{Proof of Theorem \ref{thm:f-dpg}}\label{app:proofs}
We restate the theorem here for convenience.
\begin{thm*}[Theorem \ref{thm:f-dpg}]
Let $p$ and $\pit$ be distributions over a discrete set $\mathcal{X}$ such that at least one of the following  conditions holds: 
(i) $\forall \theta \in \Theta, \Supp(p)\subset \Supp(\pit)$, or 
(ii) $\Supp(\pit)$ does not depend on $\theta$.
Then:
\begin{align}
    \nabt D_{f}(\pit||p)=\E_{x\sim\pit} \left[f^{'}\left(\frac{\pit(x)}{p(x)}\right)\nabt\log\pit(x)\right].
\end{align}
\end{thm*}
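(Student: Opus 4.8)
The plan is to start from the ``symmetrical'' form of the $f$-divergence given in Eq.~\eqref{eq:generalized_divergence} of App.~\ref{app:generalized_divergence}, instantiated with the arguments $(\pit,p)$, so that
\[
D_f(\pit||p) = \sum_{\{x:\ \pit(x)>0,\ p(x)>0\}} p(x)\, f\!\left(\frac{\pit(x)}{p(x)}\right) + f(0)\, p(\pit=0) + f'(\infty)\, \pit(p=0),
\]
recalling the convention $\fstar(0)=f'(\infty)$. I would then partition $\mathcal{X}$ according to the supports of $p$ and $\pit$ into the cells $A=\{x:\ p(x)>0,\ \pit(x)>0\}$, $B=\{x:\ p(x)>0,\ \pit(x)=0\}$, $C=\{x:\ p(x)=0,\ \pit(x)>0\}$, and the irrelevant $\{x:\ p(x)=0,\ \pit(x)=0\}$, so that the three terms above are indexed respectively by $A$, by $B$ (via $p(\pit=0)=\sum_{x\in B}p(x)$), and by $C$ (via $\pit(p=0)=\sum_{x\in C}\pit(x)$).

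The core computation is to differentiate cell by cell. On $A$ the ratio $\pit(x)/p(x)$ is a finite positive number, so pushing $\nabt$ inside the sum and applying the chain rule together with the log-derivative identity $\nabt\pit(x)=\pit(x)\,\nabt\log\pit(x)$, the $p(x)$ factors cancel and I obtain $\sum_{x\in A} f'(\pit(x)/p(x))\,\pit(x)\,\nabt\log\pit(x)$. The $C$ term is $f'(\infty)\sum_{x\in C}\pit(x)$, whose gradient is $\sum_{x\in C} f'(\infty)\,\pit(x)\,\nabt\log\pit(x)$; invoking the stated convention that $f'(\pit(x)/p(x))$ denotes $f'(\infty)$ when $p(x)=0$, this has exactly the same integrand as the $A$ contribution. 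Since $A\cup C=\{x:\ \pit(x)>0\}$ and the summand vanishes wherever $\pit(x)=0$, extending the sum to all of $\mathcal{X}$ is harmless and the two contributions combine into $\E_{x\sim\pit}\!\left[f'(\pit(x)/p(x))\,\nabt\log\pit(x)\right]$, which is the claim. It therefore remains only to show that the $B$ term contributes nothing to the gradient and that the term-by-term differentiation is legitimate.

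This last point is the main obstacle, and it is precisely where the two alternative hypotheses enter. Under (i), $\Supp(p)\subset\Supp(\pit)$ forces $B=\emptyset$, so that term is absent outright. Under (ii), $\Supp(\pit)$ is independent of $\theta$, hence all of $A,B,C$ are fixed index sets: the $B$ term $f(0)\sum_{x\in B}p(x)$ is then constant in $\theta$ and differentiates to zero, the partition over which we differentiate does not move with $\theta$, and $\nabt\pit(x)=0$ off the (fixed) support, which again legitimizes extending the final sum to all of $\mathcal{X}$. I would then dispatch the remaining analytic technicalities: the interchange of $\nabt$ with the (possibly countably infinite) sum is justified by dominated convergence, and at the countably many points where the convex function $f$ fails to be differentiable one replaces $f'$ by a subgradient, exactly as flagged in the remarks following the theorem statement.
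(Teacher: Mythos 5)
Your proposal is correct and follows essentially the same route as the paper's proof in App.~\ref{app:proofs}: both start from the ``symmetrical'' definition in Eq.~\eqref{eq:generalized_divergence} instantiated as $D_f(\pit||p)$, differentiate term by term, use condition (i) or (ii) to annihilate the $f(0)\,p(\pit=0)$ contribution, and fold the $f'(\infty)\,\pit(p=0)$ piece into the expectation via the convention $f'(\pit(x)/p(x))=f'(\infty)$ when $p(x)=0$. Your explicit handling of the sum--derivative interchange and of subgradients is slightly more careful than the paper's main derivation, which defers those points to footnotes and App.~\ref{app: non-differentiability}.
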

\begin{proof}
Based on definition \eqref{eq:generalized_divergence} we have:
\begin{align*}
\nabt &D_{f}(\pit ||p) =\sum_{\{x:p(x)>0,\pit(x)>0\}}\ p(x)\ \nabt f(\frac{\pit(x)}{p(x)})+f'(\infty)\nabt \pit (p=0)+f(0)\nabt p(\pit =0)\\
 & =\sum_{\{x:p(x)>0,\pit(x)>0\}}\ p(x)\ f'(\frac{\pit(x)}{p(x)})\ \nabt \frac{\pit(x)}{p(x)}+f'(\infty)\nabt \pit (p=0)\\
 & =\sum_{\{x:p(x)>0,\pit(x)>0\}}\pit(x)f'(\frac{\pit(x)}{p(x)})\nabt \log\pit(x)+f'(\infty)\nabt \pit (p=0)\\
 & =\sum_{\{x:p(x)>0,\pit(x)>0\}}\pit(x)f'(\frac{\pit(x)}{p(x)})\nabt \log\pit(x)+f'(\infty)\nabt \left[\sum_{\{x:p(x)=0,\pit(x)>0\}}\pit(x)\right]\\
 & =\sum_{\{x:p(x)>0,\pit(x)>0\}}\pit(x)f'(\frac{\pit(x)}{p(x)})\nabt \log\pit(x)+\sum_{\{x:p(x)=0,\pit(x)>0\}}\pit(x)f'(\infty)\nabt \log\pit(x)\\
 & =\sum_{\{x:\pit(x)>0\}}\pit(x)f'(\frac{\pit(x)}{p(x)})\nabt \log\pit(x)\\
 &= \E_{x\sim \pit}\ f'(\frac{\pit(x)}{p(x)})\ \nabt \log\pit(x).
\end{align*}
In the first line of this derivation, we use the previously introduced notation $f'(\infty) \doteq \fstar(0)$, employed in particular by \cite{polyanski}, which is motivated by the fact that $\lim_{t\rightarrow\infty}f^{'}(t)=\lim_{t\rightarrow\infty}\frac{1}{t}f(t)=\fstar(0)$ (See \citep{hiriart2013convex}). 
In the second line, we employ a variant of the chain-rule for derivatives of multivariate functions. We also exploit the fact that the condition (i) stating that the support of $p$ is contained in the support of $\pit $ for all $\theta \in \Theta$ implies that $\nabt p(\pit =0)=\nabt 0=0$, and that the condition (ii) that the support of $\pit $ does not depend on $\theta$ also implies that $\nabt p(\pit =0)=0$.
In the fourth line, we write $\pit (p=0)$ as a sum.
In the sixth line, we allow the notation $f'(\frac{\pit(x)}{p(x)})$ instead of $f'(\infty)$ when $p(x)=0$ and $\pit(x)>0$.
\end{proof}

\paragraph{Working with the opposite divergence $D_f(p||\pit)$} In case one may prefer to work with a divergence $D_f(p||\pit)$ having the opposite argument order, then one can use the identity $D_f(p||\pit) = D_{\fstar}(\pit||p)$ to conclude that under the exact same conditions (i) or (ii) as previously, we have: 
\[
\nabt D_{f}(p||\pit) = \nabt D_{\fstar}(\pit||p)  =  \E_{x\sim\pit }\left[\fstar^{'}\!\!\left(\frac{\pit(x)}{p(x)}\right)\ \nabt \log\pit(x)\right],
\]
where the derivative is applied to the perspective transform of $f$.
%\fmdI{(16/01/2023) I found your (DY's) second proof using the opposite direction (``Theorem \ref{thm:f-dpg} with $D_f(p||\pit)$'') quite interesting as a confirmation of the first one, but I am not sure whether we need two proofs, and also IMO is somewhat less intuitive than the first. The result with the opposite direction is however useful, and is stated above. As a minor point about the second proof, which is probably possible to fix, at one point you allow the expression $f'(\frac{p(x)}{\pit(x)})$ with $\pit(x)>0, p(x) \ge 0$, therefore the expression $f'(0)$, but this derivative is not defined (although it might be as a limit, but it would need some care, and I am not sure it would be what we need).} % Got it! I agree with removing unneccesary proof

\subsection{About non-differentiability of $f$}
\label{app: non-differentiability}
In practice when sampling from $\pit$ in Eq~\eqref{Eq.:fdpg-gradient}, the problem of non-differentiability can be neglected, and recourse to subgradients is typically unnecessary, even for $f$'s that have non-differentiability points (such as e.g. the generator $f(t)= 0.5 |1-t|$ for the Total Variation divergence). Indeed, let $T_{nd} \doteq \{t: f(t) \text{ is non differentiable at } t\}$, and let $\Theta_{nd} \doteq \{\theta: \exists x \in \mathcal{X}:\frac{\pit(x)}{p(x)} \in T_{nd}\}$ be the set of $\theta$'s for which $f^{'}\left(\frac{\pi_{\theta}(x)}{p(x)}\right)$ is undefined on at least one $x$. Then $\Theta_{nd}\subset \R^d$ (with $d$ the parameter dimension) is the countable union of countable sets, hence is countable, and therefore of null measure inside $\R^d$. This means that, almost surely over $\theta$, the RHS of Eq~\ref{Eq.:fdpg-gradient} is well-defined for all $x$'s.

\subsection{$f$-DPG algorithm}
%\fmdI{(17/01/2023) This algorithm needs a few more comments. Note that I replaced the previous (correct) version of the moving average with a more standard one.} 

\begin{algorithm}[ht]
   \caption{$f$-DPG}
   \label{alg:f_dgp}
\begin{algorithmic}
   \STATE {\bfseries Input:}  unnormalized target distribution $P(\cdot)$, initial model $a(\cdot)$, $D_f$ generator $f(\cdot)$
   % \FOR{$i=1$ {\bfseries to} $m-1$}
   \STATE {\bfseries Initialize:} $\pit(\cdot) \gets a(\cdot)$, $Z \gets 0$, $N \gets 0$
   \qquad \COMMENT{initialize model $\pit$, partition $Z$, sample size $N$ for moving average}
   \FOR{each iteration}
   \FOR{each episode}
   \STATE sample $x$ from $\pit(\cdot)$
   \STATE $N \gets N+1$
   %\STATE $Z \gets Z+\frac{P(x)/\pit(x)-Z}{N}$
   %\STATE $Z \gets \frad{(N-1)\ Z \ +\ \frac{(P(x)/\pit(x))}{N}$
   \STATE \ptmdI{$Z \gets \frac{(N-1) Z \ +\ (P(x)/\pit(x))}{N}$}
   \qquad \COMMENT{Estimate $Z$ with historical samples, using a moving average}
   \STATE $p(\cdot) \gets P(\cdot)/Z$
   \STATE $\theta \gets \theta + \alpha^{(\theta)}f^{'}\left(\frac{\pit(x)}{p(x)}\right)\nabt\log\pit\ptmdI{(x)}$
   \qquad \COMMENT{Update $\pit$ according to Thm. \ref{thm:f-dpg}}
   \ENDFOR
   \ENDFOR
   \STATE {\bfseries Output:}  $\pit$
\end{algorithmic}
\end{algorithm}

\subsection{Baseline: alternative derivation}
\label{sec:baseline-alternative-derivation}

The generator function is not uniquely determined for a given $f$-divergence: 
\begin{fact}
For generators $f,\ g$ such that $f(t)=g(t)+c(t-1),\ c\in\mathbb{R}$, $D_{f}(p_1||p_2)=D_{g}(p_1||p_2)$.
\end{fact}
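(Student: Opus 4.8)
The plan is to work directly from the definition in Eq.~\eqref{Eq.:f-divergence}, namely $D_{f}(p_1||p_2)=\E_{x\sim p_2}[f(p_1(x)/p_2(x))]+f'(\infty)\,p_1(p_2=0)$, and to track how each of its two terms transforms when $f$ is replaced by $g$, where $f(t)=g(t)+c(t-1)$. First I would confirm that $g$ is itself a legitimate generator, so that $D_g$ is even defined: since $c(t-1)$ is affine, $g=f-c(t-1)$ is convex if and only if $f$ is, and $g(1)=f(1)-c(1-1)=0$. Thus the comparison $D_f(p_1||p_2)=D_g(p_1||p_2)$ makes sense.

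Next I would expand the expectation term by linearity, writing $\E_{x\sim p_2}[f(p_1/p_2)]=\E_{x\sim p_2}[g(p_1/p_2)]+c\,\E_{x\sim p_2}[p_1(x)/p_2(x)-1]$. The extra contribution collapses to a boundary quantity, because $\E_{x\sim p_2}[p_1(x)/p_2(x)-1]=\sum_{\{x:p_2(x)>0\}}(p_1(x)-p_2(x))=p_1(p_2>0)-1=-p_1(p_2=0)$, where the final equality uses that $p_1$ is a probability distribution so $p_1(p_2>0)+p_1(p_2=0)=1$. Hence the expectation term picks up exactly $-c\,p_1(p_2=0)$ relative to its $g$-counterpart.

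Then I would treat the boundary term. From the convention $f'(\infty)\doteq\lim_{t\to 0} t f(1/t)$, substituting $f(1/t)=g(1/t)+c(1/t-1)$ gives $t f(1/t)=t g(1/t)+c(1-t)$, so passing to the limit yields $f'(\infty)=g'(\infty)+c$. Plugging both pieces back into Eq.~\eqref{Eq.:f-divergence}, the boundary term becomes $(g'(\infty)+c)\,p_1(p_2=0)$, and the $+c\,p_1(p_2=0)$ here cancels the $-c\,p_1(p_2=0)$ from the expectation term. What remains is $D_f(p_1||p_2)=\E_{x\sim p_2}[g(p_1/p_2)]+g'(\infty)\,p_1(p_2=0)=D_g(p_1||p_2)$, which is the claim.

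The main obstacle, and the only place care is really needed, is the bookkeeping at the support boundary: one must verify that the affine perturbation shifts the bulk expectation and the $f'(\infty)$ coefficient by opposite amounts that match exactly, and that the degenerate convention ``if $p_1(p_2=0)=0$ the last term is set to $0$ regardless of $f'(\infty)$'' remains consistent. In that degenerate case, however, the induced correction $-c\,p_1(p_2=0)$ to the expectation term also vanishes, so no spurious contribution survives and the equality still holds.
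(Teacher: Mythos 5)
Your proof is correct and complete. The paper itself states this Fact without proof (it is invoked as a known property of $f$-divergences in order to derive the change-of-generator baseline theorem that follows it), so there is no in-paper argument to compare against; yours is the standard one, carried out carefully under the paper's general definition in Eq.~\eqref{Eq.:f-divergence}. The two checks you make are exactly the ones that matter there: the affine perturbation shifts the bulk expectation by $c\,\E_{x\sim p_2}[p_1(x)/p_2(x)-1]=-c\,p_1(p_2=0)$ while shifting $f'(\infty)$ by $+c$, and these cancel; and in the degenerate case $p_1(p_2=0)=0$ both corrections vanish, so the convention of dropping the boundary term causes no inconsistency. One could add a one-line remark that when $g'(\infty)=\infty$ and $p_1(p_2=0)>0$ both divergences are $+\infty$ and the identity holds trivially in the extended reals, but this is implicit in your limit computation and does not affect the validity of the argument.
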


We provide here an alternative way to introducing baselines, based on a change of generator.
\begin{thm*}[Baseline based on change of generator]
If $D_{f}(\pit ||p)$ is a divergence with any generator $f$, and $B \in \R$, 
there exists a generator $g$ with the same divergence $D_{f}(\pit ||p)=D_{g}(\pit ||p)$ such that
\begin{align*}
\nabt D_{g}(\pit ||p) & =\E_{x\sim\pit }\left[\left(f^{'}\left(\frac{\pit(x)}{p(x)}\right)-B\right)\nabt \log\pit(x)\right]\\
 & =\nabt D_{f}(\pit ||p).
\end{align*}
\end{thm*}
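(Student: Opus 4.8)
The plan is to construct the generator $g$ explicitly from $f$ and $B$ by offsetting $f$ with a suitable affine term, and then to invoke the two results already available: the Fact that generators differing by a term $c(t-1)$ induce the same divergence, and Theorem~\ref{thm:f-dpg} giving the gradient of any $f$-divergence. No genuinely new computation is required; the work is in checking that the offset generator remains legitimate and that the boundary conventions line up.

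First I would define $g(t)\doteq f(t)-B\,(t-1)$. Since adding an affine function to a convex function preserves convexity, $g$ is convex on $(0,\infty)$, and $g(1)=f(1)-B\cdot 0=0$, so $g$ is a legitimate generator. Moreover $f(t)=g(t)+B\,(t-1)$ is precisely of the form required by the Fact (with $c=B$), so $D_{f}(\pit||p)=D_{g}(\pit||p)$ as functions of $\theta$; differentiating both sides gives $\nabt D_{f}(\pit||p)=\nabt D_{g}(\pit||p)$, which is the second equality in the statement.

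Next I would apply Theorem~\ref{thm:f-dpg} to the generator $g$. The hypotheses (i)/(ii) of that theorem constrain only the supports of $p$ and $\pit$, not the choice of generator, so they transfer verbatim to $g$; hence $\nabt D_{g}(\pit||p)=\E_{x\sim\pit}\left[g'\!\left(\frac{\pit(x)}{p(x)}\right)\nabt\log\pit(x)\right]$. Because $g'(t)=f'(t)-B$, substituting $t=\frac{\pit(x)}{p(x)}$ yields $g'\!\left(\frac{\pit(x)}{p(x)}\right)=f'\!\left(\frac{\pit(x)}{p(x)}\right)-B$, which is exactly the claimed pseudo-reward with the baseline subtracted. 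Chaining this with the equality from the previous paragraph produces the stated identity.

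The only delicate point is the bookkeeping at the boundary where $p(x)=0$ and $\pit(x)>0$: there $g'$ must be read as $g'(\infty)$, and one must confirm $g'(\infty)=f'(\infty)-B$. This follows from $g'(\infty)=\lim_{t\to\infty}g'(t)=\lim_{t\to\infty}\big(f'(t)-B\big)=f'(\infty)-B$, using that $f'(\infty)$ is defined as this limit (and remaining consistent even when $f'(\infty)=\infty$). I expect this to be the main, albeit minor, obstacle: one should verify that the substitution $g'=f'-B$ holds also at the infinite argument, so that the two invocations of Theorem~\ref{thm:f-dpg} for $f$ and for $g$ agree term by term and the subtraction of $B$ does not spuriously alter the contribution of the $p=0$ region.
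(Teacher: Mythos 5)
Your proposal is correct and follows essentially the same route as the paper's own (very terse) proof: define $g(t)=f(t)-B(t-1)$, invoke the Fact that generators differing by $c(t-1)$ yield the same divergence, and apply Theorem~\ref{thm:f-dpg} to $g$ with $g'=f'-B$. Your additional checks --- that $g$ remains a legitimate generator (convex with $g(1)=0$) and that $g'(\infty)=f'(\infty)-B$ so the $p(x)=0$ boundary term is handled consistently --- are details the paper leaves implicit, and they are verified correctly.
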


\begin{proof}
Recall that %the facts of $f$-divergence that 
$D_{f}(\pit ||p)=D_{g}(\pit ||p)$
%if and only if exists $b\in\mathbb{R}$ such that 
when
$g(x)=f(x)-B(x-1)$. %,$ with arbitrary $b\in\mathbb{R}$.
Therefore, $\nabt D_{f}(\pit ||p)=\nabt D_{g}(\pit ||p)$
with $g^{'}\left(\frac{\pi_\theta(x)}{p(x)}\right)=f^{'}\left(\frac{\pi_\theta(x)}{p(x)}\right)-B$. 
\end{proof}

\section{Extended Related Work}

\paragraph{RL for LMs}
There is a large reinforcement learning inspired literature about steering an autoregressive sequential model towards optimizing some global reward over the generated text. This includes REINFORCE \cite{williams1992simple} for Machine Translation \cite{RanzatoCAZ15}, actor critic for Abstractive Summarization \cite{PaulusXS18}, Image-to-Text \cite{liu2016not}, Dialogue Generation \cite{LiMRJGG16}, and Video Captioning \cite{PasunuruB17}. With respect to rewards, some approaches for Machine Translation and Summarization \cite{RanzatoCAZ15, BahdanauBXGLPCB17} directly optimize end task rewards such as BLEU and ROUGE at training time to compensate for the mismatch between the perplexity-based training of the initial model and the evaluation metrics used at test time. Some others use heuristic rewards as in \cite{LiMRJGG16, tambwekar2019controllable}, in order to improve certain a priori desirable features of generated stories or dialogues.

Several studies, have considered incorporating a distributional term inside the reward to be maximized. In particular \citet{KL_Jaques17, KL_jaquesK19, ziegler2019fine, stiennon2020} have applied variations of KL-control \cite{todorov2006linearly, KappenGO13} which adds a penalty term to the reward term so that the resulting policy does not deviate too much from the original one in terms of KL-divergence. The overall objective with the KL-penalty is maximized using an RL algorithm of choice including: PPO \cite{schulman2017proximal} as in \citet{ziegler2019fine} or Q-learning  \cite{MnihKSGAWR13} as in \citet{KL_Jaques17}. This approach recently get a huge attention with its impact with using the human data to train aligned language models in LaMDA \cite{LaMDA}, InstructGPT \cite{ouyang2022training}, Sparrow \cite{glaese2022improving}, and CAI \cite{bai_constitutional}. Similar work involving model self-critique and natural language feedback includes \cite{zhao2021ethical, scheurer2022, saunders2022self}

\paragraph{$f$-divergence objectives for generative models}

In the literature, there have been several studies exploring the use of $f$-divergences in generative models. \citet{Goodfellow} introduced the concept of GANs and their connection to the Jensen-Shannon divergence. \citet{nowozin2016f} proposed a variational expression of $f$-divergences as a loss function for GANs. Theoretical insight on the relationship between divergence choice and the convergence of probability distributions was provided by \citet{ArjovskyCB17}.  Additionally, \citet{TheisOB15} discussed potential drawbacks of forward KL divergence in generative models and \citet{Huszar15} proposed a generalization of Jensen-Shannon divergence that interpolates between KL and reverse KL and has Jensen-Shannon as its midpoint. 

The connections between RL and divergence minimization have also been explored, with studies showing that entropy regularization in RL can be viewed as minimizing reverse KL divergence between reward-weighted trajectory and policy trajectory distributions \cite{KappenGO13, Sergey}. Other studies have also explored the use of forward KL divergence in RL \cite{PetersS07, NorouziBCJSWS16}. Additionally, a unified probabilistic perspective on $f$-divergence minimization in imitation learning has been presented for both discrete and continuous control environments \cite{KeC00LS21,ghasemipour20a}.

\ptdyI{\citet{wang2018variational} introduced \ptmdI{variational inference with adaptive $f$-divergences and demonstrated its effectiveness in RL, with focus on continuous sample spaces. Their Proposition 4.2.1 is similar to our theorem \ref{thm:f-dpg}. However, our result} exhibits greater generality by defining $D_f(\pi_\theta||p)$ without requirements of absolute continuity in either direction \cite{polyanski, liese2006divergences}. We note that this generalization is crucial for LM alignment, as the case of $p(x)=0,\ \pit(x)>0$ \ptmdI{can} easily occur.}\fmdI{(22/05/2023) I reformulated a bit this paragraph.}

\section{Implementation Details}\label{app:implement_details}

All models were implemented using PyTorch \cite{paszke2019pytorch} and HuggingFace Transformers \cite{wolf2020transformers} with the Adam optimizer \cite{kingma2014adam}. Training was performed on Nvidia V100 GPU, with the longest run taking approximately 2 days. Hyperparameter details are listed in Tab. \ref{tab:implement_details}. Pretrained models are available on the Huggingface Model Hub under the specified model names. 
\ptdyI{We focused on searching for hyperparameters based on KL-DPG, which served as the baseline method we aimed to improve upon, providing it with an initial advantage. To ensure that all methods were evaluated under comparable settings, we tuned the hyperparameters once for all $f$-DPG methods.}

\begin{table}
\begin{centering}
\begin{tabu}{cl}
\tabucline[1.5pt]{-}
Experiment & Hyperparameters\tabularnewline
\tabucline[1.5pt]{-}
Common & 
\begin{tabular}{@{}l@{}}
{\fontfamily{qcr}\selectfont
batch size = 258, optimizer = Adam,}\\
{\fontfamily{qcr}\selectfont
learning rate schedule = constant with warmup (100 epochs)}
\end{tabular}
\tabularnewline

\hline 
Sentiment preference & 
\begin{tabular}{@{}l@{}}
{\fontfamily{qcr}\selectfont
original model = gpt2, learning rate = $1\times10^{-5}$}\\
{\fontfamily{qcr}\selectfont
maximum length = 40, batch size = 2048, total epochs=1000}
\end{tabular}
\tabularnewline

\hline 
Lexical(RLKL)  & 
\begin{tabular}{@{}l@{}}
{\fontfamily{qcr}\selectfont
original model = gpt2, learning rate = $1\times10^{-5}$,}\\
{\fontfamily{qcr}\selectfont
maximum length = 40, total epochs=5000}
\end{tabular}
 \tabularnewline

 \hline 
Lexical(GDC)  & 
\begin{tabular}{@{}l@{}}
{\fontfamily{qcr}\selectfont
original model = gpt2, learning rate = $1.41\times10^{-5}$,}\\
{\fontfamily{qcr}\selectfont
maximum length = 40, total epochs=5000}
\end{tabular}
 \tabularnewline

\hline 
Female50\% Science100\% & 
\begin{tabular}{@{}l@{}}
{\fontfamily{qcr}\selectfont
original model = mkhalifa/gpt2-biographies, }\\
{\fontfamily{qcr}\selectfont
learning rate = $1.41\times10^{-5}$, maximum length = 40, }\\
{\fontfamily{qcr}\selectfont
total epochs=1000}\end{tabular}
\tabularnewline

\hline 
Regard balancing & 
\begin{tabular}{@{}l@{}}
{\fontfamily{qcr}\selectfont
original model = gpt2, learning rate = $5\times10^{-6}$,}\\
{\fontfamily{qcr}\selectfont
maximum length = 40, batch size = 2048, total epochs=1000}
\end{tabular}
\tabularnewline

\hline
Summarization & 
\begin{tabular}{@{}l@{}}
{\fontfamily{qcr}\selectfont
original mode=t5-small, learning rate = $1\times10^{-4}$,}\\
{\fontfamily{qcr}\selectfont
maximum length = 128, total epochs=2000}
\end{tabular}
\tabularnewline

\hline 
Code generation & 
\begin{tabular}{@{}l@{}}
{\fontfamily{qcr}\selectfont
original mode=gpt-neo-125M, learning rate = $1\times10^{-4}$,}\\
{\fontfamily{qcr}\selectfont
maximum length = 128, total epochs=2000}
\end{tabular}
\tabularnewline

\hline 
GPT2 approximation & 
\begin{tabular}{@{}l@{}}
{\fontfamily{qcr}\selectfont
original model = lvwerra/gpt2-imdb, learning rate = $5\times10^{-6}$}\\
{\fontfamily{qcr}\selectfont
maximum length = 40, total epochs=8000}
\end{tabular}
\tabularnewline

\tabucline[1.5pt]{-}
\end{tabu}
\par\end{centering}
\caption{Hyperparameters used throughout all experiments}\label{tab:implement_details}
\end{table}

\section{Additional Experiments}
\subsection{Generation Quality}\label{app:gen_quality}
\paragraph{Metrics} To see if different objective affects the quality of the generated sentences, we report the following metrics on experiment in Sec.~\ref{sec:scalar}, Sec.~\ref{sec:pointwise}.
\vspace{-5px}
\begin{enumerate}
\itemsep0em 
\item Distinct-n~\citep{li2015diversity}, a measure of text diversity in terms of the frequency of {repeated n-grams }%
within a single sample $x$.
\item Self-BLEU-n~\citep{zhu2018texygen}, a measure of text diversity on a distributional level
across samples.
\item Perplexity, a measure of text fluency with exponentiation of the negative average per-token log-probability under a language model. We use a separate model Distil-GPT-2 \cite{wolf2020transformers} to calculate perplexity to avoid inflated estimates \cite{liu2016not}.
\end{enumerate}

\paragraph{Results}
Tab. \ref{tab:scalar_gen_quality} provides additional metrics for the generated sentences and their diversity on scalar preferences. The notably low entropy and high Self-BLEU of RKL-DPG again indicate low diversity of RKL-DPG at the distributional level, {whereas} other $f$-DPGs have similar values to each other. On the other hand, in quality for individual samples as measured by the perplexity metric, RKL-DPG shows better quality,
which suggests that RKL-DPG captures a subset of the target distribution, an observation that is frequently discussed in other generative models \cite{Huszar15,CheLJBL17,mescheder2018training}.
We provide metrics for the generated sentences aggregated on lexical constraint in Tab. \ref{tab:gen_quality_pointwise}.  
We found no significant difference in diversity among the generated sentences.

\begin{table}
    \begin{centering}
    {}%
    \begin{tabular}{ccccc}
    \hline 
     {{L}oss} & {Entropy} & {Self-BLEU-5} & {Dist-1} & {Perplexity}\tabularnewline
     \hline 
    {KL} & {159.09 (9.58)} & {0.62 (0.01)} & {0.88 (0.01)} & {58.87 (7.48)}\tabularnewline
    {TV} & {157.60 (8.91)} & {0.65 (0.01)} & {0.88 (0.01)} & {59.48 (5.25)}\tabularnewline
    {JS} & {158.04 (8.62)} & {0.64 (0.01)} & {0.88 (0.01)} & {59.67 (6.23)}\tabularnewline
    {RKL} & {151.04 (7.99)} & {0.70 (0.01)} & {0.87 (0.01)} & {53.15 (4.14)}\tabularnewline
    \hline 
    \end{tabular}
    \par\end{centering}
\caption{Quality of the generated text metrics~{for the experiment on scalar preferences (Sec.~\ref{sec:scalar})}.  entropy (↑ better), Self-BLEU-5 (↓ better), Distinct-1 (↑ better), and Perplexity (↓ better).}\label{tab:scalar_gen_quality}
\end{table}

\begin{table}[H]
    \begin{centering}
    {}%
    \begin{tabular}{ccccc}
    \hline 
     {Loss} & {$\E\left[b(x)\right]$} & {Self-BLEU-5} & {Dist-1} & {Perplexity}\tabularnewline
     \hline 
    {KL} & {0.45 (0.09)} & {0.66 (0.02)} & {0.96 (0.00)} & {90.59 (11.74)}\tabularnewline
    {TV} & {0.60 (0.12)} & {0.67 (0.01)} & {0.96 (0.01)} & {80.52 (8.79)}\tabularnewline
    {JS} & {0.66 (0.14)} & {0.67 (0.01)} & {0.95 (0.01)} & {79.53 (8.80)}\tabularnewline
    {RKL} & {0.60 (0.20)} & {0.66 (0.02)} & {0.95 (0.01)} & {79.49 (7.79)}\tabularnewline
    \hline 
    \end{tabular}
    \par\end{centering}
    \caption{Quality of the generated text metrics for the experiment on lexical constraint (Sec.~\ref{sec:pointwise}).  $\E_{\pit}[b(x)]$ (↑ better), Self-BLEU-5 (↓ better), Distinct-1 (↑ better), and Perplexity (↓ better).}\label{tab:gen_quality_pointwise}
    \end{table}

\section{$f$-DPG on Conditional Target Distributions}\label{app:conditional}
Let $C$ be a discrete (potentially infinite) set of conditions $c$.
The problem of fine-tuning a pretrained model $a(x|c)$ to satisfy
a control objective (e.g. generating factually correct summaries)
can be seen as a constraint satisfaction problem: finding a model
$p_{c}(x)$ that meets the demands of the control objective but at
the same time stays as close as possible to the original pretrained
model $a(x|c)$. A control objective can be defined in terms of a
binary scorer $b(x,c)$ such that $b(x,c)=1$ if a sample $(c,x)$
satisfies a constraint given by a control objective (e.g. $x$ is
factually correct with respect to $c$) and $b(x,c)=0$ otherwise. 

For each $c\in C$, we can frame the problem of finding the unique
model $p_{c}(x)$ such that (i) $b(x,c)=1$ for all samples $x\sim p_{c}(x)$,
and (ii) $p_{c}(\cdot)$ has minimal KL divergence from $a(\cdot|c)$
as an instance of the unconditional case already considered by \citet{khalifa2021distributional}. Following our example, $p_{c}$ could be a distribution
over factually correct summaries of $c$ as similar as possible to
a distribution over summaries which the original model $a$ would
produce for a document $c$. Therefore, $p_{c}$ can be represented
as {a} distribution $p_{c}(x)$ of the following form:
\[
p_{c}(x)=1/Z_{c}a(x|c)b(x,c).
\]

Let $\mathcal{P}$ a conditional distribution over $C$ {which} is defined
as a function from $C$ to the set of unconditional distribution{s} $p_{c}$
over $\mathcal{X}$. While $\mathcal{P}$ represents the target conditional
model optimally reconciling distance from $a(x|c)$ and the control
objective, direct use of $\mathcal{P}$ for sampling is intractable
for two reasons. First, $\mathcal{P}$ actually represents a potentially
infinite collection of unconditional models of the form $p_{c}(\cdot)$.
Second, each of these unconditional models still cannot be easily
sampled from because 
{it does} not admit an autoregressive factorization.
To address this problem, \citet{korbak2021controlling} instead try to find a generative
model $\pit$ approximating $p$ on average across context{s}
by minimizing the expected $\KL{p_{c}}{\pit}$ or equivalently
expected cross-entropy $\text{CE}(p_{c},\pit)$ between $\pit$
and multiple $p_{c}$’s:

\[
\E_{c\sim\tau(c)}\left[\text{CE}(p_{c}(\cdot),\pit(\cdot|c))\right],
\]
with its gradient taking the following form:
\begin{align*}
\E_{c\sim\tau(c)}\left[\nabla_{\theta}\text{CE}(p_{c}(\cdot),\pit(\cdot|c))\right] & =\E_{c\sim\tau(c)}\left[\E_{x\sim p_{c}(x)}\left[\nabla_{\theta}\log\pit(x|c)\right]\right]\\
 & =\E_{c\sim\tau(c)}\left[\E_{x\sim\pit(x|c)}\left[\frac{p_{c}(x)}{\pit(x|c)}\nabla_{\theta}\log\pit(x|c)\right]\right].
\end{align*}
This can be seen as a conditional extension of Eq. \ref{eq:dpg}. 

A natural extension of this objective for $f$-DPG
{is}%
$\E_{c\sim\tau(c)}\left[D_{f}(\pit(\cdotp|c)||p_{c}(\cdotp))\right]$,
an extension that 
{includes}
expected $\KL{p_{c}}{\pit{(\cdot|c)}}$.
Thm.~\ref{thm:f-dpg} implies that {the} gradient of this objective takes the following
form:
\[
\E_{c\sim\tau(c)}\left[\nabla_{\theta}D_{f}(\pit(\cdotp|c)||p_{c}(\cdotp))\right]=\E_{c\sim\tau(c)}\left[\E_{x\sim{\pit(x|c)}}\left[f^{'}\left(\frac{\pit(x{|c})}{p_{{c}}(x)}\right)\nabla_{\theta}\log\pit(x{|c})\right]\right]{.}
\]

{
\subsection{Additional Conditional Preferences Experiments and Details}
\label{app:conditional-additional-info}
}
\paragraph{Task}
{Here, we also} {%
evaluate the conditional task on code generation. For that, we condition on Python {function signatures in the} Python150 dataset \cite{raychev2016probabilistic} which consists of Python source code obtained from GitHub}. We again split {disjoint} train/test set{s of function signatures} and set $\tau(c)$ as a uniform distribution. %
With given prompt $c$, we check compilability of a Python function {definition} %
obtained by concatenating $[c, x]$ and trying to execute it. $b(x,c) = 0$ iff the Python interpreter raises an exception.  For the initial model we use GPT-Neo-125, a variant of GPT-Neo \cite{black2021gpt} on Hugging-face Transformers \cite{wolf2020transformers}.

\paragraph{Metrics for summarization} In addition to the divergences, we evaluate the quality and factual consistency of generated summaries using the following metrics:
\vspace{-5px}
\begin{enumerate}
\itemsep0em 
\item Precision-source \cite{nan2021entity}, defined as 
$\left[|\text{NER}(x)\cap\text{NER}(c)\right|]/|\text{NER}(x)|$
, the percentage
of named entities in the summary that can be found
in the source. Low precision-source indicates severe
hallucination.
\item Recall-target \cite{nan2021entity}, defined as $\left[|\text{NER}(x)\cap\text{NER}(c)\right|]/|\text{NER}(t)|$, the percentage of named entities in the target summary $t$ that can be found in the generated summary $x$.
\item Rouge \cite{lin-2004-rouge}, a measure of summarization quality in terms of unigram overlap between the
source document and ground truth summary.
\end{enumerate}

\paragraph{Metrics for code generation}  We evaluate the quality of generated Python
functions using the following metrics:
\vspace{-5px}
\begin{enumerate}
\itemsep0em 
\item PEP8 error count, the average number of violations of
PEP8.
\item Compilability, the fraction of samples $[c, x]$ that compile.
\end{enumerate}

\paragraph{Results}
Fig. \ref{fig:codegen} presents the evolution of metrics in Code generation. Consistent with the result on factual summarization, $f$-DPG {increases} the fraction of compilable functions, {while decreasing} 
the average number of PEP8 violations. Again, JS-DPG leads to better convergence to $p$ than {KL-DPG used} in \citet{korbak2021controlling}.

\begin{figure}[H]%
\centering
\centerline{\includegraphics[width=\columnwidth]{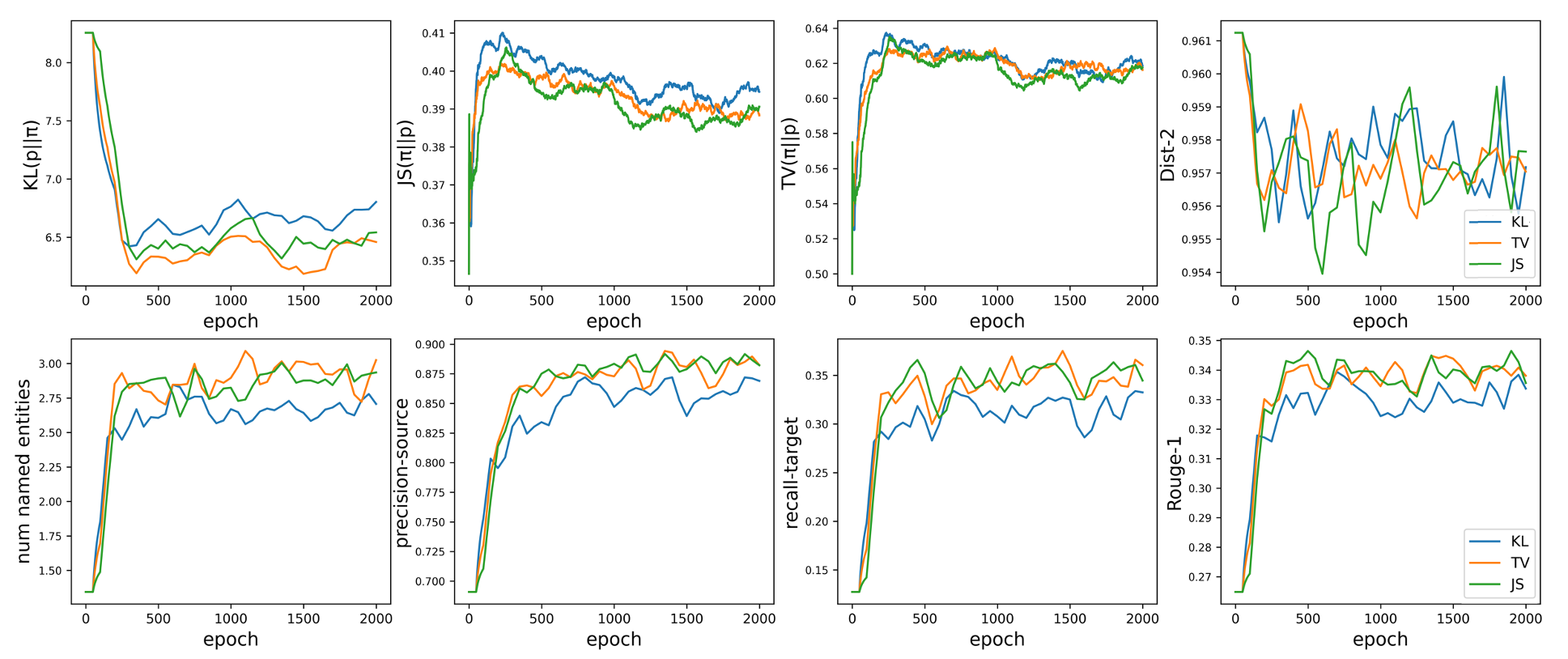}}
\caption{Summarization}\label{fig:summarization}
\end{figure}

\begin{figure}[H]%
\centering
\centerline{\includegraphics[width=\columnwidth]{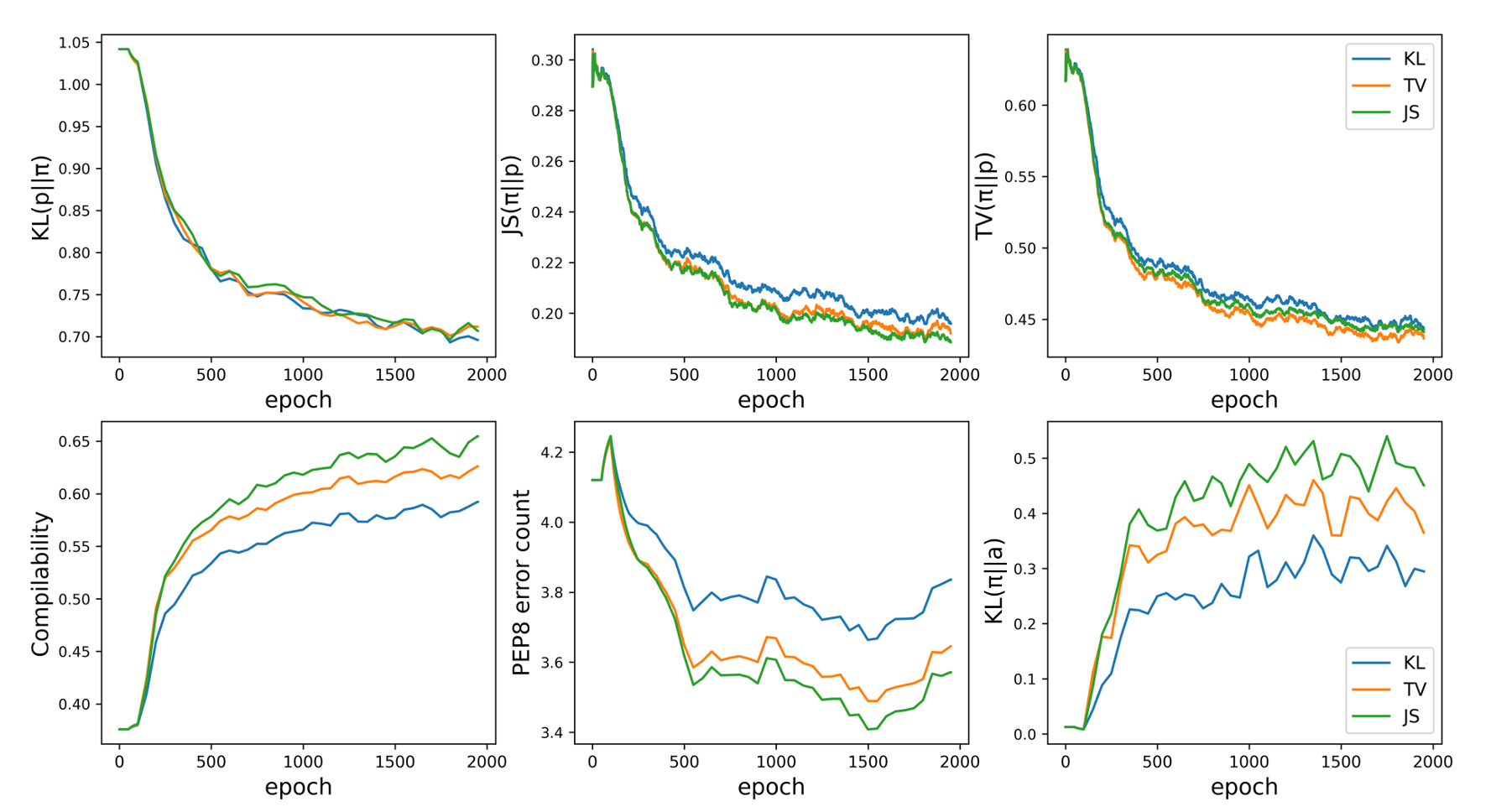}}
\caption{Code generation}\label{fig:codegen}
\end{figure}

\section{Optimal Reward Model for a Decision Maker with a Categorical Distribution}
\label{app:choice_model}

Let’s assume we have a dataset $\mathcal{D}$ containing $M$ tuples $(x_1, ..., x_n)$ of samples and a choice function $h(x_1, ..., x_n) \in \{0,1\}^n$ that returns a one-hot vector to signal the preferred sample. The reward model $r$ in RLHF is trained by first defining a discrete choice model $f_r$ parametrized by the reward model we want to learn:
$$
f_r(x_1, ..., x_n) = \mathrm{softmax}(r(x_1), ..., r(x_n))
$$
and then learning the reward model by minimizing the loss
\begin{align}
\mathrm{loss}(r) &= \mathbb{E}_{(x_1, ..., x_n) \sim \mathcal{D}} ~\mathrm{CE}(h, f_r) \label{eq:rhlf_reward_loss}\\
&= - \mathbb{E}_{(x_1, ..., x_n) \sim \mathcal{D}} ~ h(x_1, ..., x_n) \cdot\log f_r\left( x_1, ..., x_n\right),
\end{align}
Thus, the optimal reward model is given by the function $r$ such that $h(x_1, ..., x_n) = f_r(x_1, ..., x_n)$ as it minimizes the CE in Eq. \ref{eq:rhlf_reward_loss}. 
Typically, $h$ corresponds to the preferences elicited by human annotators. However, let's make a simplifying assumption that humans make choices according to an internal scoring function $\phi(x)$ so that $h_\phi(x_1, ..._, x_n) \sim \mathrm{Categorical}(\phi(x_1), ..., \phi(x_n)),$ or in other words, 
$$
h_\phi(x_1, ..., x_n) = 1 \mathrm{~at~index~}i \mathrm{~with~probability~}\frac{\phi(x_i)}{\sum_{j=1}^n{\phi(x_j)}}.
$$

Now, let’s suppose we have access to $\phi$. Then, we note that if we set 
$$
r_\phi(x) = \log\phi(x),
$$
we get 
\begin{align}
f_{r_\phi}(x_1, ..., x_n) &= \mathrm{softmax}(\log (\phi(x_1)), ..., \log (\phi(x_n)))\\
&= \mathrm{categorical}(\phi(x_1), ..., \phi(x_n)),
\end{align}

and thus, $r_\phi$ is an optimal reward model for $h_\phi$.

\section{Additional Figures}\label{app:additional_figures}

\begin{figure}[H]%
\centering	
\begin{tabular}{cc}
\centerline{\includegraphics[width=\columnwidth]{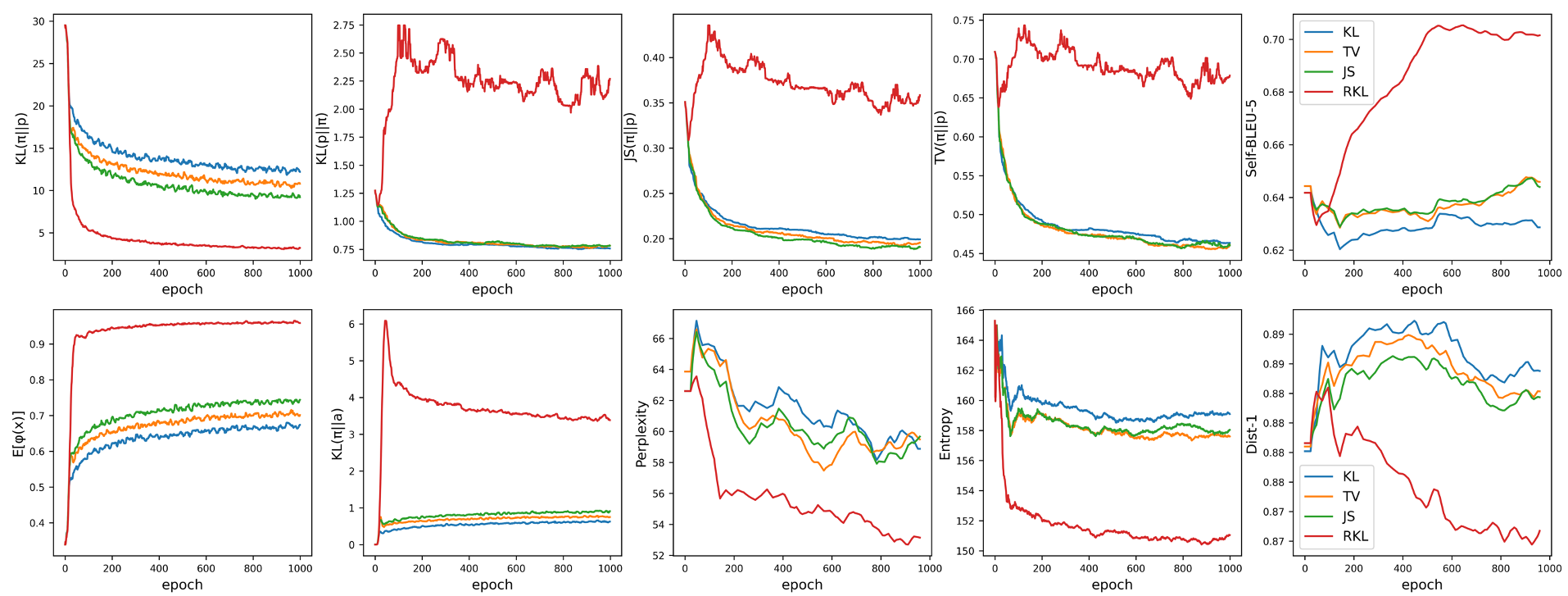}}
\end{tabular}
\caption{Evaluation of metrics in sentiment preference}
\label{fig:scalar_preference_all}
\end{figure}

\begin{figure}[H]
\centering
\includegraphics[scale=0.9]{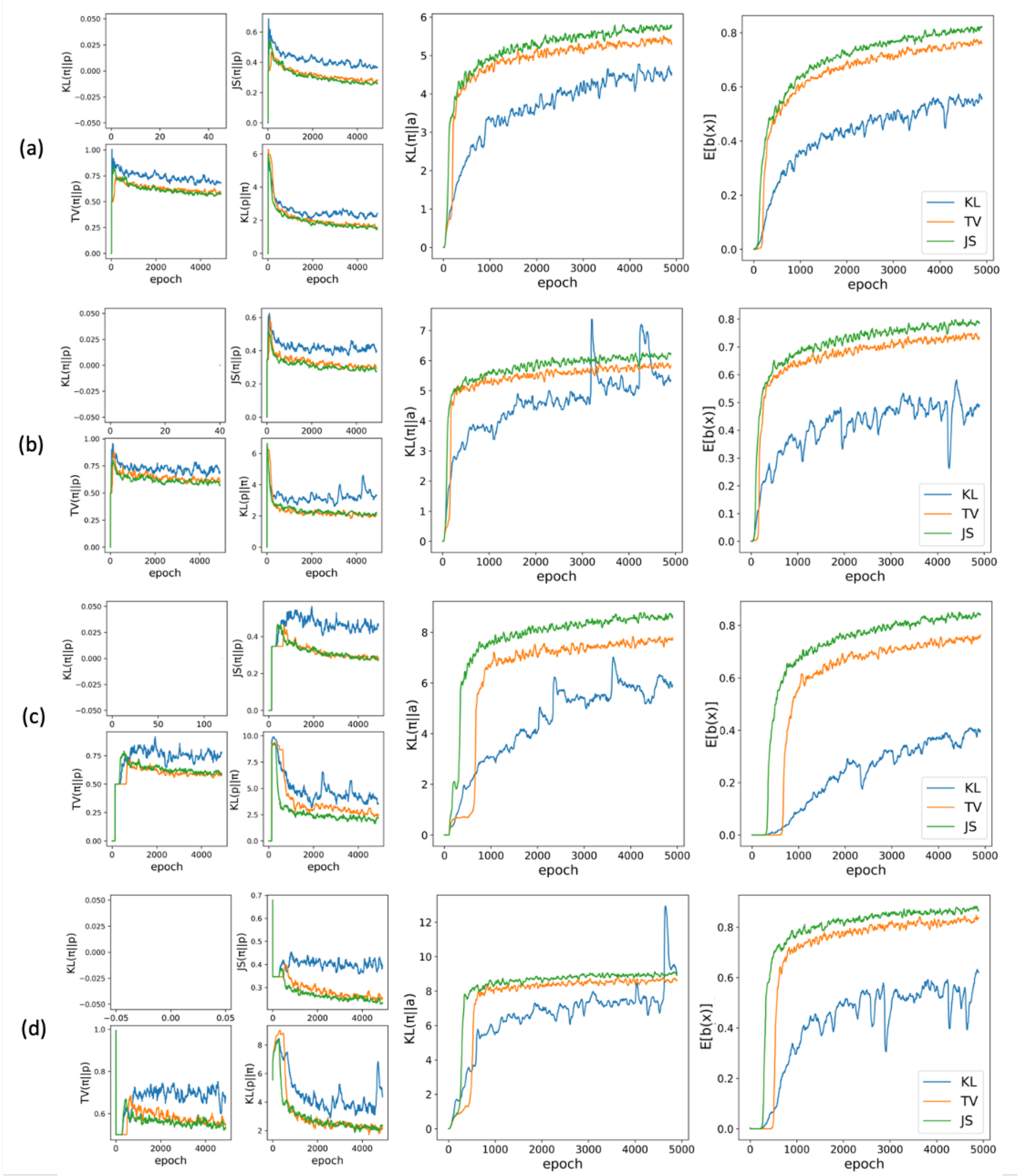} 
\caption{Evaluation metrics: four $f$-divergences $D_f(\pi_\theta||p)$ (↓ better), $\E_{\pi_\theta}[\phi(x)]$ (↑ better), $\text{KL}(\pi_\theta||a)$ (↓ better) with target distribution induced from GDC framework to constrain the existence of single word, (a) amazing, (b) restaurant, (c) amusing, (d) Wikileaks. Note that reverse KL cannot be defined in this case in which $p(x)=0$ for some points}\label{fig:single_gdc_EBM}
\end{figure}

\begin{figure}[H]
\centering
\includegraphics[scale=0.9]{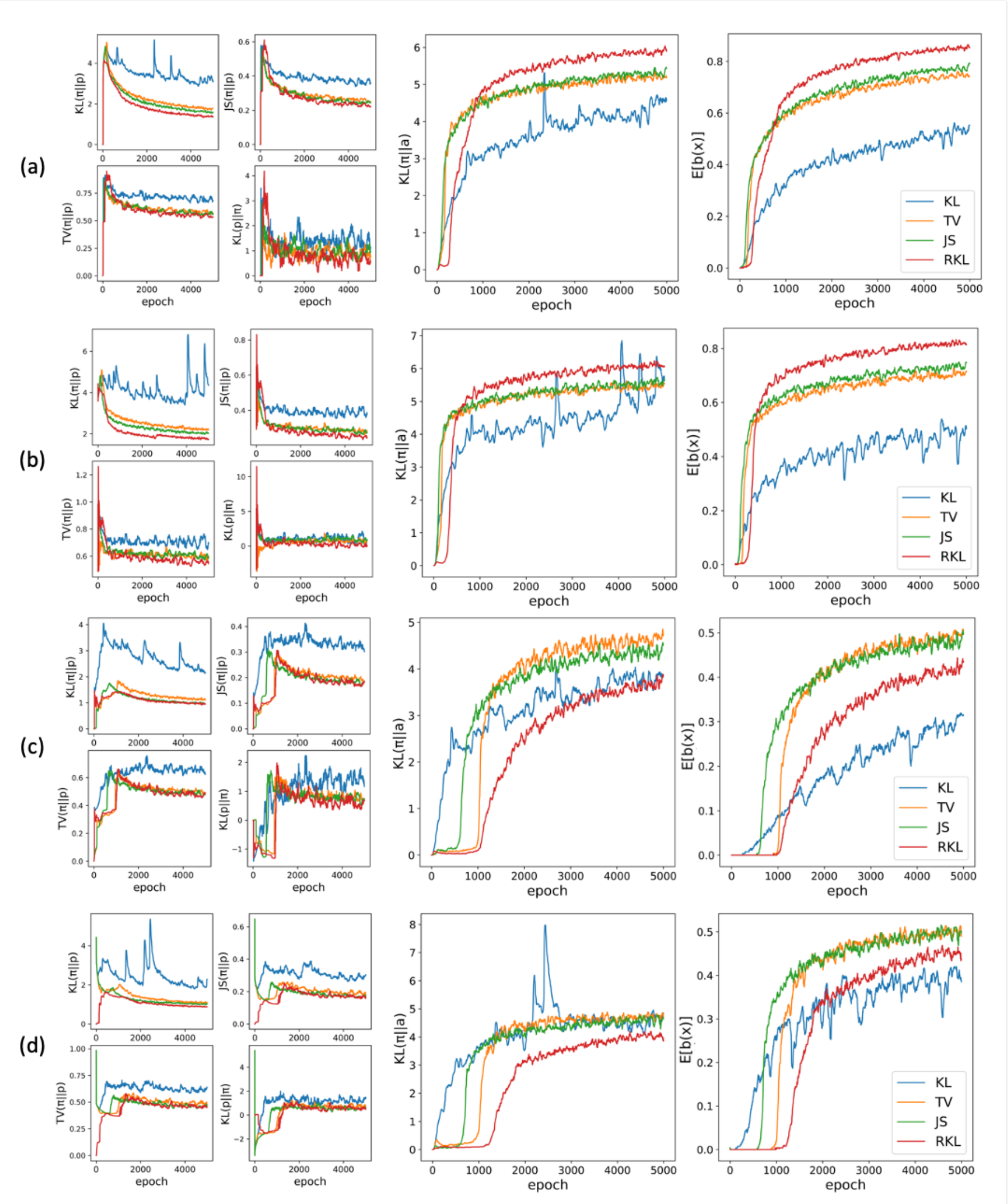} 
\caption{Evaluation metrics: four $f$-divergences $D_f(\pi_\theta||p)$ (↓ better), $\E_{\pi_\theta}[\phi(x)]$ (↑ better), $\text{KL}(\pi_\theta||a)$ (↓ better) 
 with target distribution $\pRL$ to constrain the existence of single word, (a) amazing, (b) restaurant, (c) amusing, (d) Wikileaks.}\label{fig:single_klc_EBM}
\end{figure}

\begin{figure}[H]%
\centering	
\begin{tabular}{cc}
(a) \\ 
\includegraphics[width=0.93\columnwidth]{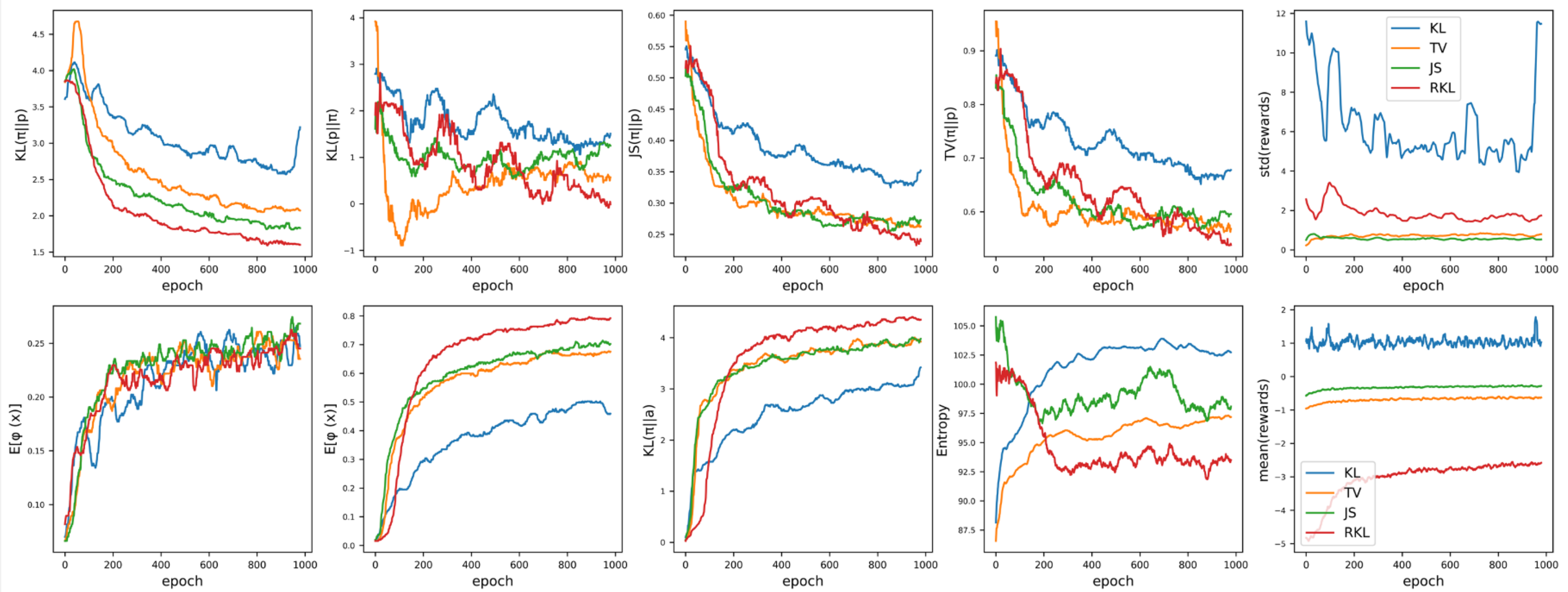} \\
 (b)\\ 
 \includegraphics[width=0.93\columnwidth]{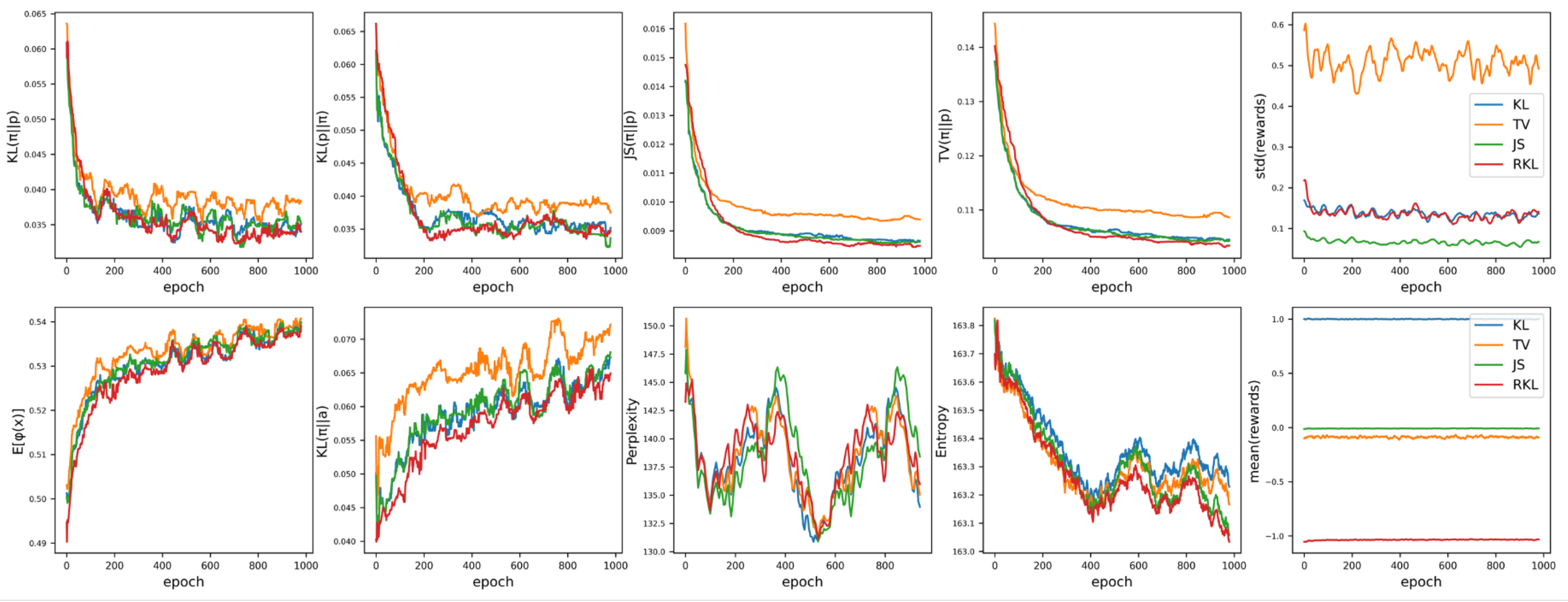} 
\end{tabular}
\caption{
(a) Experiments with female 50\% and science 100\%, (b) Experiments with regards score matching}
\label{fig:distributional}
\end{figure}

\begin{figure}[H]
\centering
\centerline{\includegraphics[width=\columnwidth]{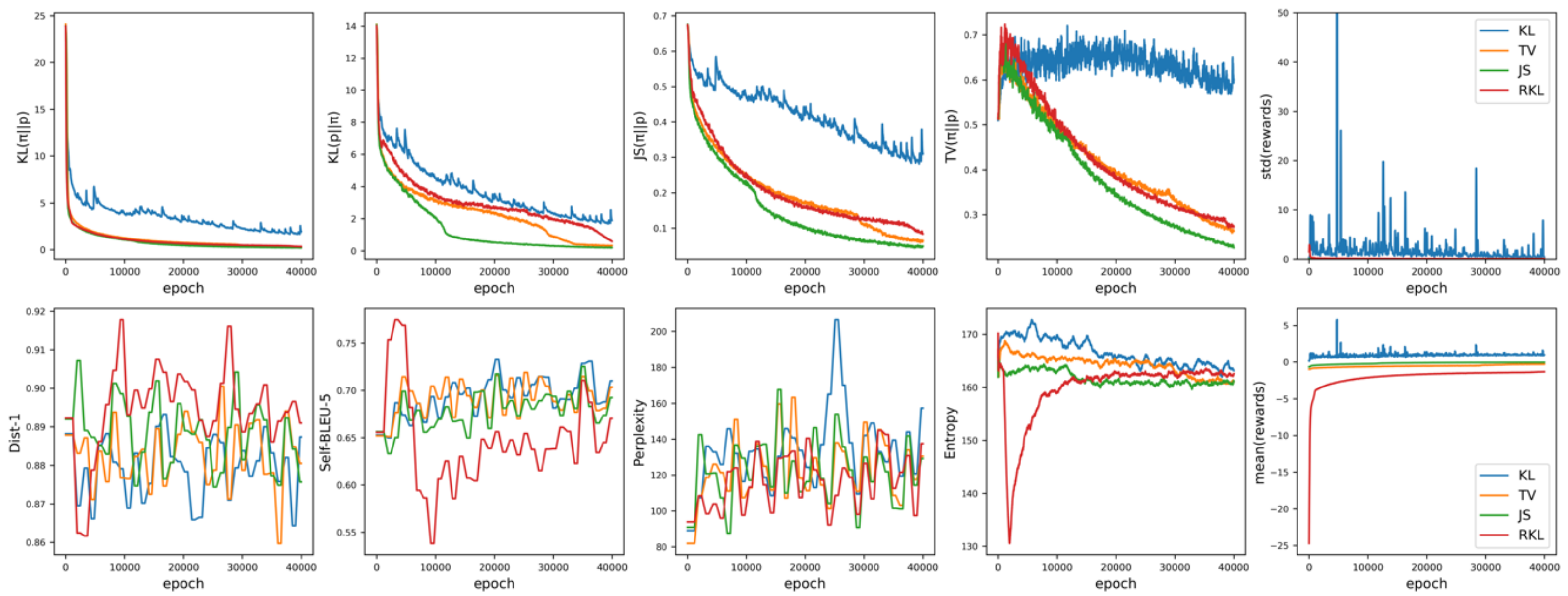}}
\caption{Approximating the distribution of GPT-2 fine-tuned on IMDB dataset, with initial model GPT-2. Evaluation metrics: four $f$-divergences $D_f(\pi_\theta||p)$ (↓ better), Distinct-1 (↑ better), Self-BLEU-5 (↓ better), Perplexity (↓ better), entropy (↑ better) and summary statistic for pseudo-reward}
\end{figure}

\begin{figure*}[ht]
\centering
\includegraphics[width=\textwidth]{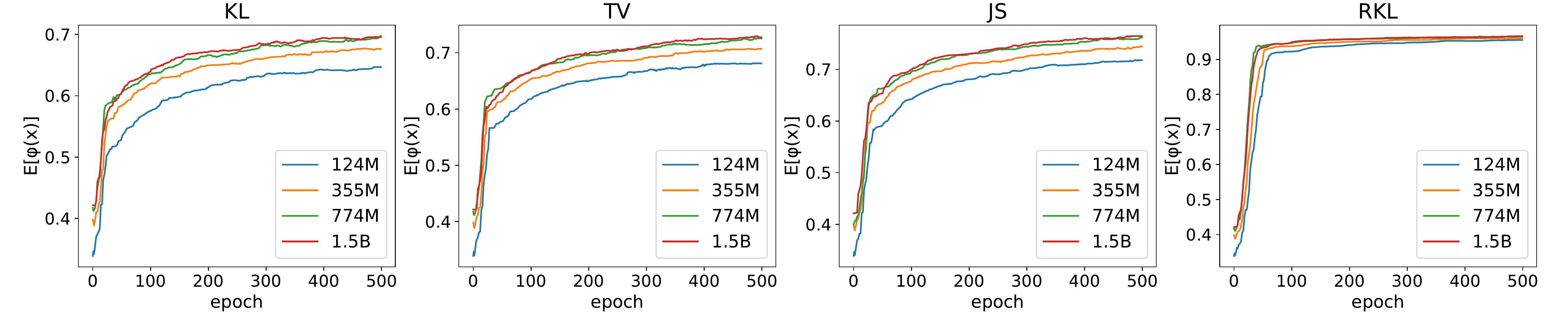} 
\caption{Comparison of alignment scores 
$\E_\pit[\phi(x)]$ of $f$-DPG with different model sizes on sentiment preference. }\label{fig:scaling_learning_curve}
\end{figure*}

\section{Ablation Studies}
\label{app:ablation}

\subsection{Matching Other Language Model within Parameter Family}\label{sec:LM_matching}
The optimal model $\pit(x)$ can be heavily dependent on the choice of the divergence function $f$ when the parameter family is %
{mis-}specified and doesn't include $p(x)$. 
As a sanity check, in order to disentangle the capacity of parameter family and better understand the behavior of different loss functions, 
we use as $p$ and $\pit$ two pretrained models {having the same architecture}. %
{Specifically, we set $\pit$ as a GPT-2 with 117M parameters model fine-tuned on the IMDB dataset \cite{maas2011learning}, and train it to revert the fine-tuning by setting $p$ to the original GPT-2 model.}
\footnote{See App. \ref{app:additional_figures} for the experiment in the opposite direction.}

We present the evolution of our metrics in Fig. \ref{fig:gpt2_approx} averaged over three independent seeds. 
{First, we observe that while TKL-DPG, TV-DPG, and JS-DPG make quick and steady progress toward the target, KL-DPG lags considerably, making slow progress in terms of forward KL, reverse KL and JS divergence, and even regressing in terms of TV distance. We link this to the high variance of the KL-DPG pseudo-reward, which might be producing high-variance gradient estimates (see Sec \ref{sec:reward_comparision} for an interpretation of this phenomenon).}
More interestingly, RKL-DPG shows a significant drop of entropy in the initial phase, {but still converges} to the distribution of $p(x)$. %
{In line with the experiments in Sec.~\ref{sec:scalar}, we link the drop to the mode-seeking behaviour. However, since we are not in the mis-specified scenario, the model can recover, and cover the rest of the distribution.
Finally, in Fig. \ref{fig:gpt2_rouge} from App. \ref{app:gpt2_summarization} we show that the resulting models recover to a large extent the quality of the original GPT-2 by applying it zero-shot to the summarization task, following \citet{radford2019language}\fgkI{I wonder if we can show in Fig. 6 the Dits-1, Self-BLEU-5, perplexity and entropy \emph{targets} as horizontal lines by measuring them on GPT-2}.}

\begin{figure*}[ht]
\centering
\centerline{\includegraphics[width=\columnwidth]{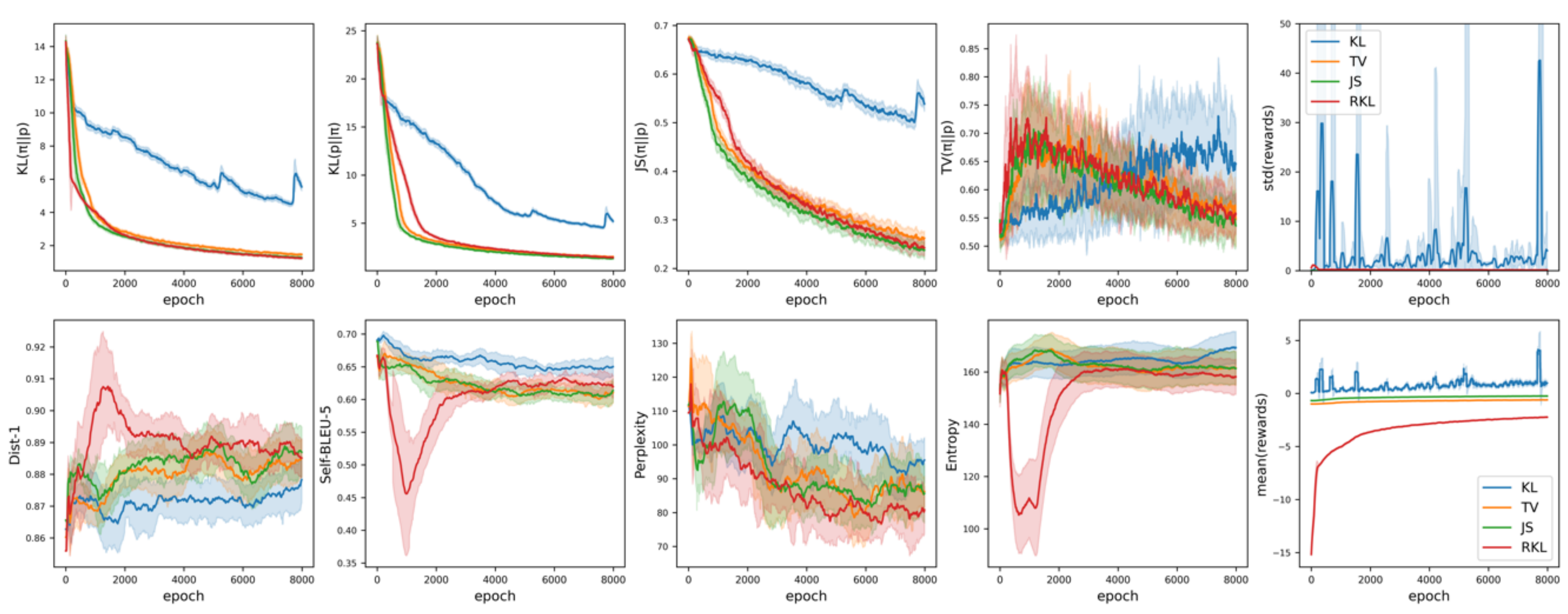}}
\caption{Approximating the distribution of GPT-2. Evaluation metrics: four $f$-divergences $D_f(\pit||p)$ (↓ better), Distinct-1 (↑ better), Self-BLEU-5 (↓ better), Perplexity (↓ better), entropy (↑ better) and summary statistic for pseudo-reward, aggregated over three independent experiment of approximating GPT-2.}
\label{fig:gpt2_approx}
\end{figure*}

\subsection{Checking Fluency in Unseen Downstream Task}\label{app:gpt2_summarization}
To figure out that distributional matching is sufficient to do other natural language processing tasks, we evaluate the model $\pi_\theta$ trained by $f$-DPG to approximate the distribution of target $p$ set as GPT-2. Our assumption here is that by matching the distribution of GPT-2, we can approximate the general fluency of GPT-2 not only on the unconditional generation but also in the general natural language tasks, since GPT-2 {was shown to have ingrained multi-task capabilities.}%

Following \citet{radford2019language}, we use CNN/Daily Mail dataset \cite{nallapati2016abstractive} and add the text ``{\fontfamily{qcr}\selectfont TL;DR:}'' after the article to encourage summarization behavior. We generate 100 tokens with top-$k$ sampling \cite{fan2018hierarchical} with $k = 2$ for model $\pi_\theta$ trained to match $p$. We use the first 3 generated sentences in these 100 tokens as the summary. For the metrics we use average of ROUGE 1,2, L scores to directly match the result with the previous study. Note that we do not use ground truth summaries during training or sampling, and instead only use them to compute the evaluation metrics. 

The Fig. \ref{fig:gpt2_rouge} shows learned model's ability of summarization on the CNN and Daily Mail dataset.
It shows that although the initial model {has lost} its ability of summarization through additional {fine-tuning}, optimizing $\pi_\theta$ to approximate the distribution of $p$ though $f$-DPG can successfully recover its ability of summarization. Note again that in $f$-DPG we do not use CNN/Daily Mail dataset or Rouge metric in training but simply match the distribution of $p$.

\begin{figure}[ht]
\centering
\includegraphics[scale=0.5]{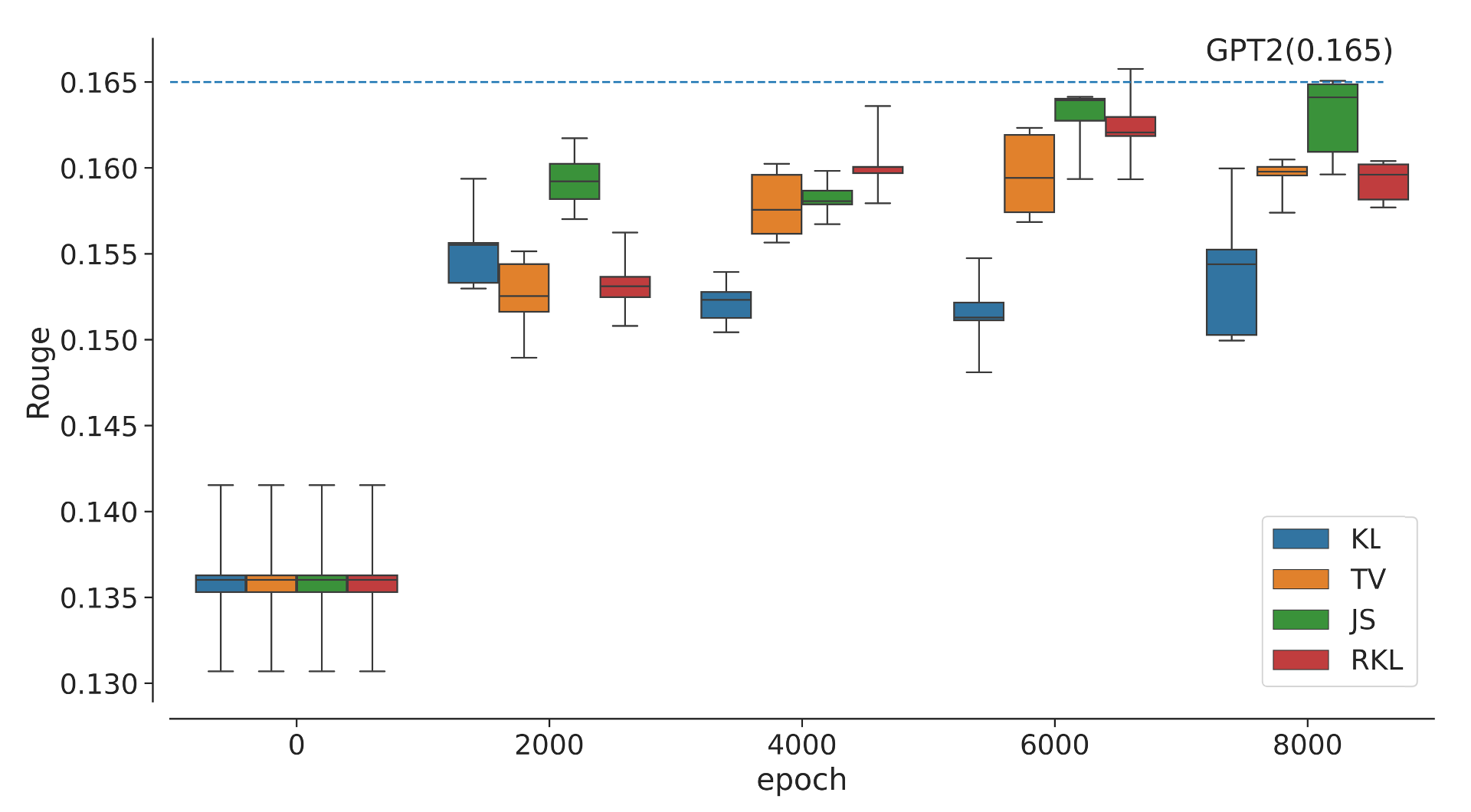} 
\caption{Evolution of average score of Rouge-1,2,L with $f$-DPG through training epochs}\label{fig:gpt2_rouge}
\end{figure}

\subsection{Ablation Studies on Training Scheme}
\fdyI{I tried to keep it as short as possible}
In ablation study, we evaluate the impact of various factors on the performance of the $f$-DPG method, using a {scalar preference with $r(x) = 1$ if $x$ contains ``amazing'', and 0 otherwise. We focus on this experiment from Sec.~\ref{sec:pointwise} because of the simplicity of the target distribution.} %
\paragraph{Effect of baseline}\fgkI{Are these for batch size 256? I'm surprised the effect on KL is not as great as I expected}
The use of a baseline technique improved the performance of all $f$-DPG methods, with RKL-DPG showing the greatest benefit (Fig. \ref{fig:baseline_abal}). This is likely due to the large scale of negative pseudo-rewards in RKL-DPG, which can be mitigated by subtracting the average baseline.
\paragraph{Effect of batchsize}
We show that the use of an large batch is necessary to address the high variance of KL-DPG, which is consistent with the findings in \cite{khalifa2021distributional}. 
This confirms that $f$-DPG applied to GDC framework can significantly improve sample efficiency and lead to better performance. The higher batch size doesn't change {our conclusions}. (Fig. \ref{fig:batchsize_abal})

\begin{figure}[H]%
\centering	
\begin{tabular}{cc}
\includegraphics[scale=0.5]{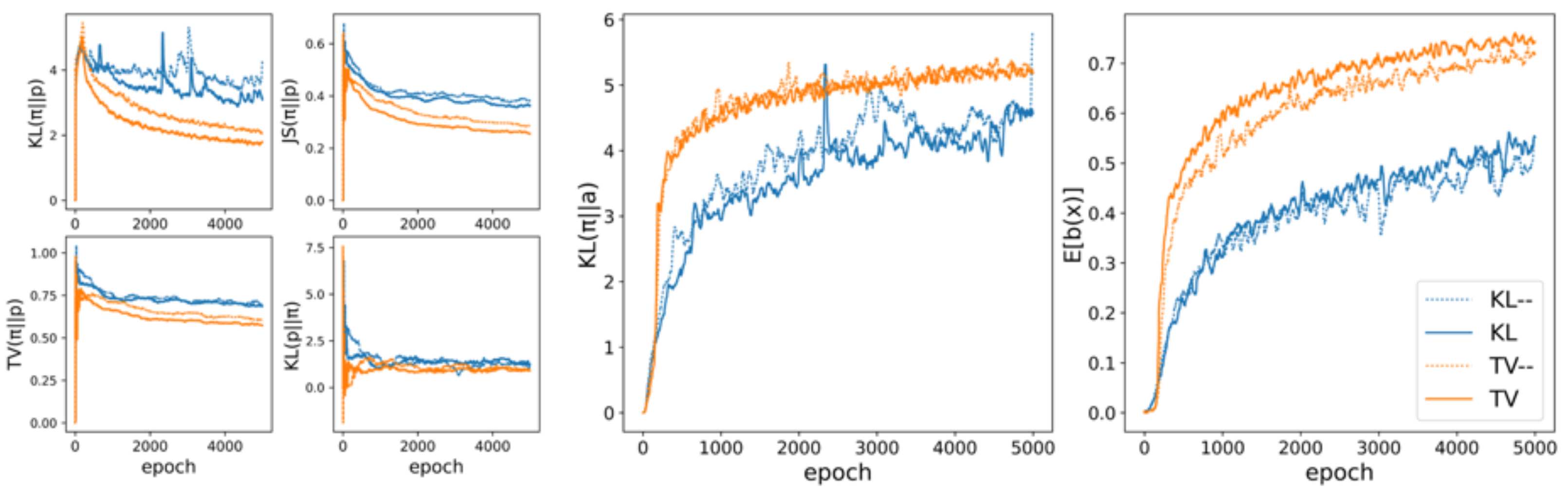}  \\
 \includegraphics[scale=0.5]{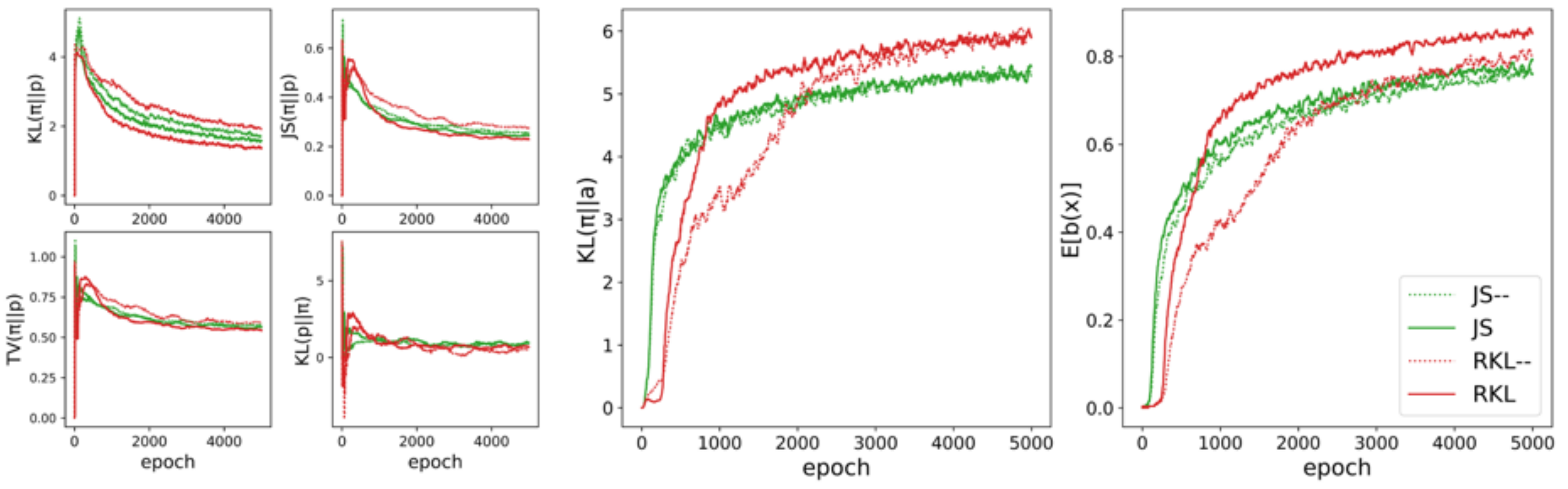} 
\end{tabular}
\caption{Ablation for the baseline technique. `- -' is added to refer method in without baseline. The use of a baseline technique significantly improves the performance of RKL-DPG, which has a large scale of pseudo-reward}\label{fig:baseline_abal}
\end{figure}

\begin{figure}[H]%
\centering	
\begin{tabular}{cc}
(a) \\ 
\includegraphics[scale=0.5]{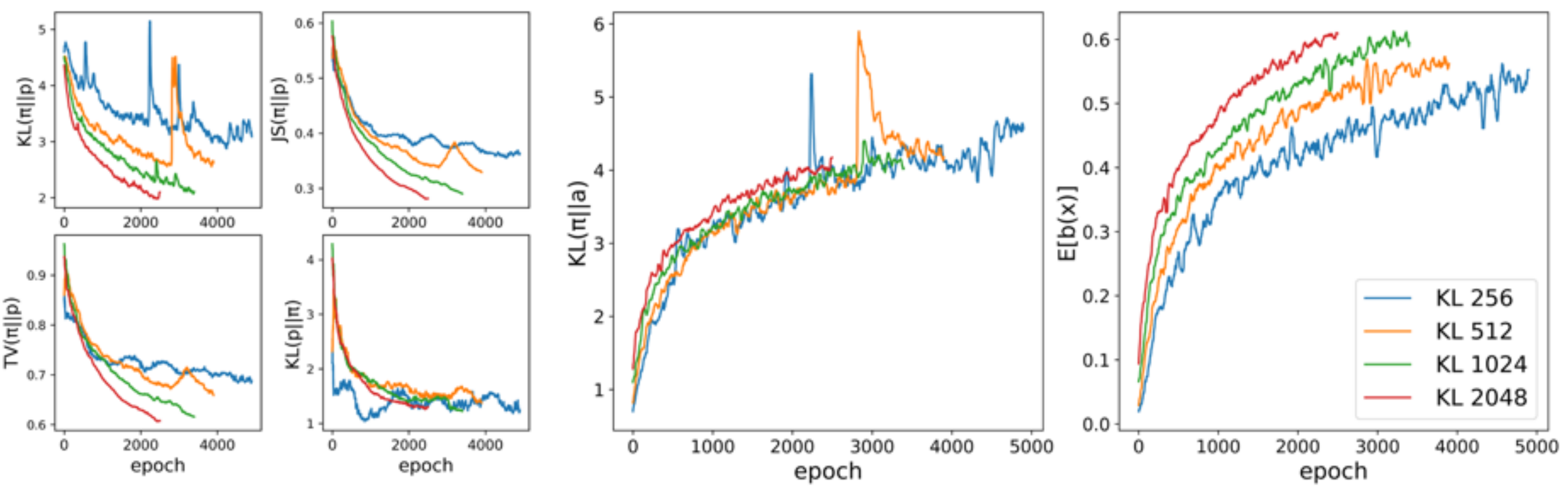} \\
 (b)\\ 
 \includegraphics[scale=0.5]{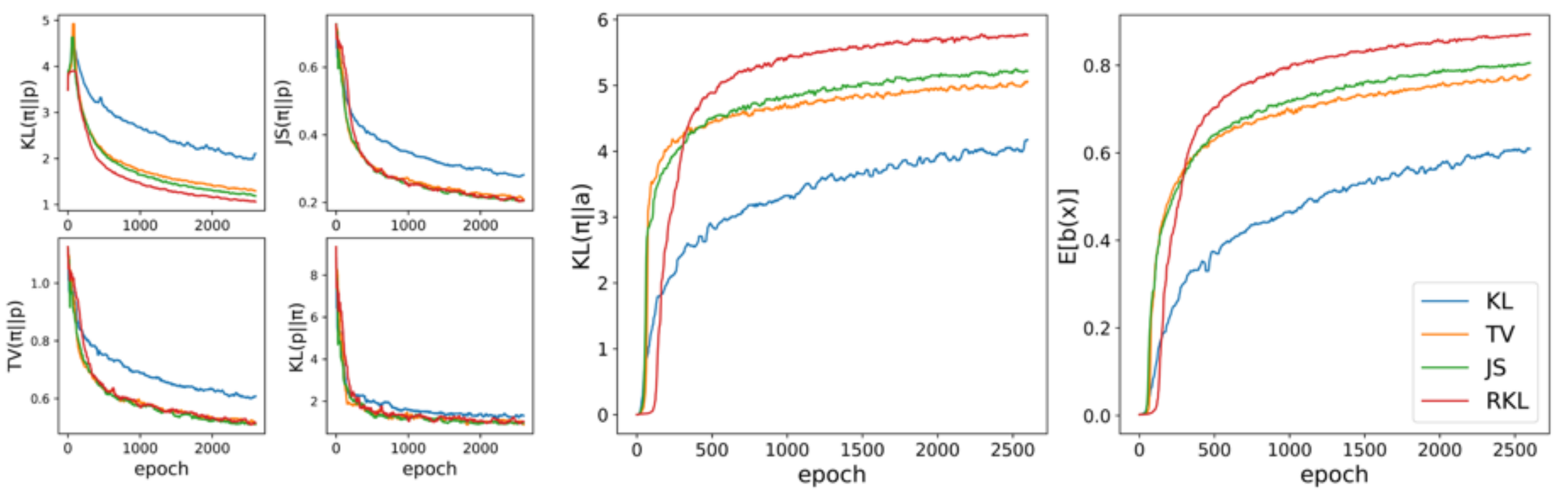} 
\end{tabular}
\caption{Ablation for the batch size, (a) Experiments with different batch size in KL-DPG, (b) Experiments with different f-DPG in batch size 2048.}
\label{fig:batchsize_abal}
\end{figure}

\paragraph{$Z$ estimation}\label{app:Z_ablation}

For $f$-DPG to approximate unnormalized distribution $p(x)\propto P(x)$, we need to estimate the partition function $Z=\sum_{x\in \mathcal{X}}P(x)$. In most practice case $Z$ cannot be known in advance, while in target distribution $\pRL$ with binary feature constraint we can calculate $Z$ easily. For $\pRL(x)\propto a(x)\exp(\frac{b(x)}{\beta})$, $Z=\sum_{x\in \mathcal{X}}a(x)\exp(\frac{b(x)}{\beta})=\E_{x\sim a}\left[\exp(\frac{b(x)}{\beta})\right]$.
If $b(x)$ is the binary feature such as constraint in single word
task, we can treat $b(x)$ as Bernoulli random variable with its parameter
$r$ the initial frequency $r=\E_{x\sim a}\left[b(x)\right]$. As the
initial frequency is already given, we can estimate $Z$ with bootstrap
estimate using $b(x)\sim Bernoulli(r)$. 
Fig. \ref{fig:z_abla} shows the evolution of the estimation of $Z$, and comparison of each $f$-DPG using $Z$ as true value. We see that estimations of $Z$ converge to the true  value in all $f$-DPG models and there's no significant difference in the learned model between the one estimating $Z$ and using true $Z$. 

\begin{comment}
\begin{figure}[htbp]
\centering	
\begin{tabular}{cc}
\includegraphics[scale=0.505]{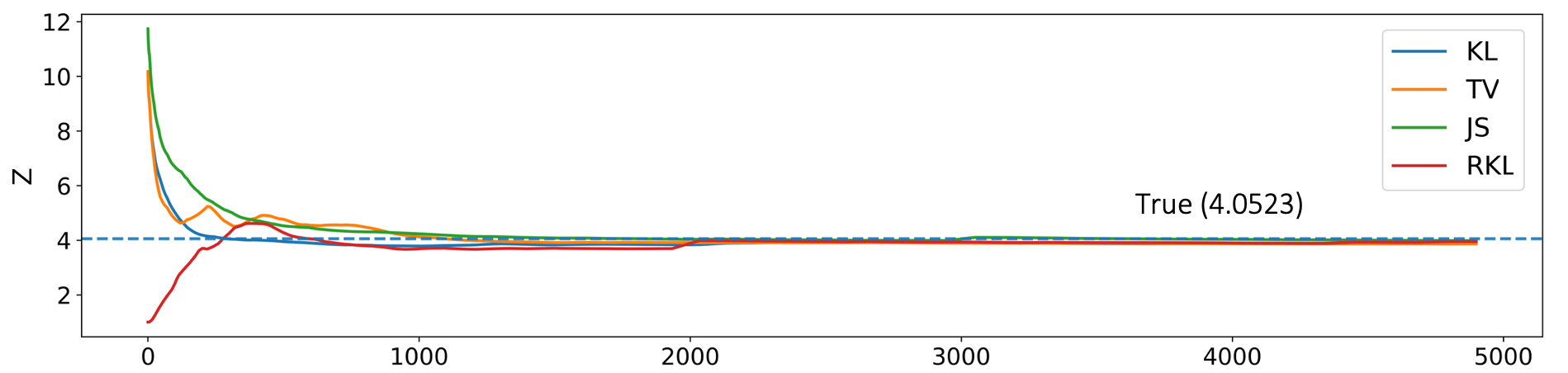}  \\
%
\includegraphics[scale=0.51]{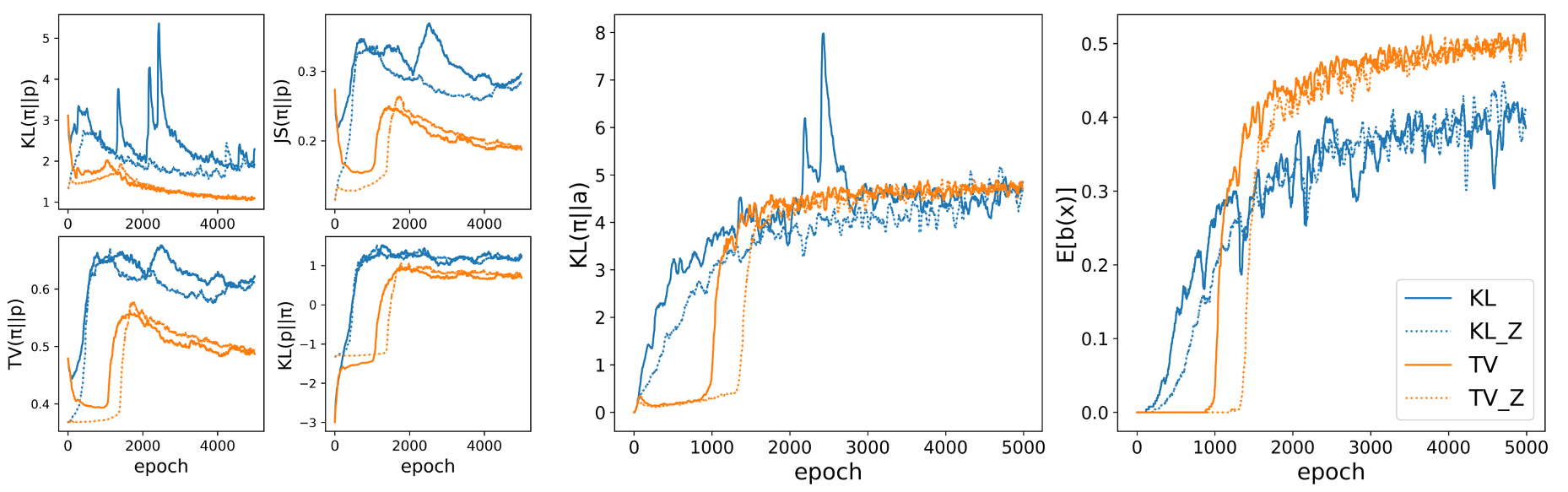}  \\
 %
 \includegraphics[scale=0.51]{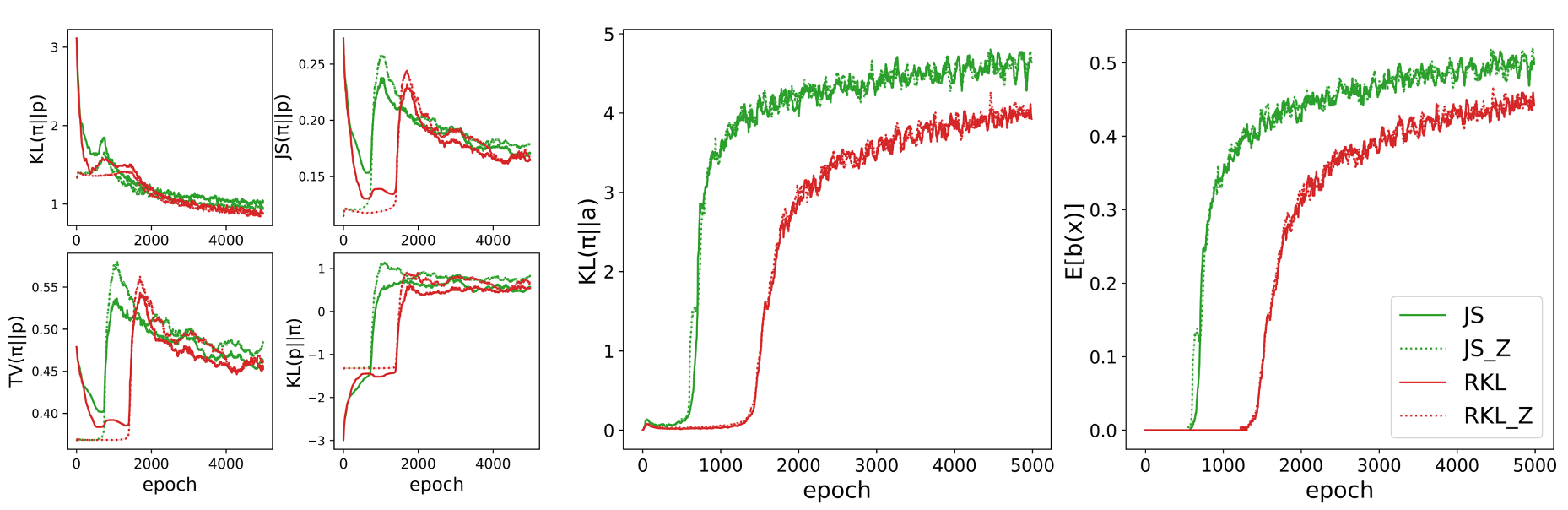} 
\end{tabular}
\caption{Ablation for Z estimation. (Top) The evolution of the estimated value of $Z$ compared to its true value. (Middle, Bottom) Comparison of the convergence curve of different $f$-DPG with model using true $Z$ value}\label{fig:z_abla}
\end{figure}
\end{comment}

\begin{figure}[htbp]
\centering	
\begin{tabular}{cc}
\includegraphics[width=\columnwidth]{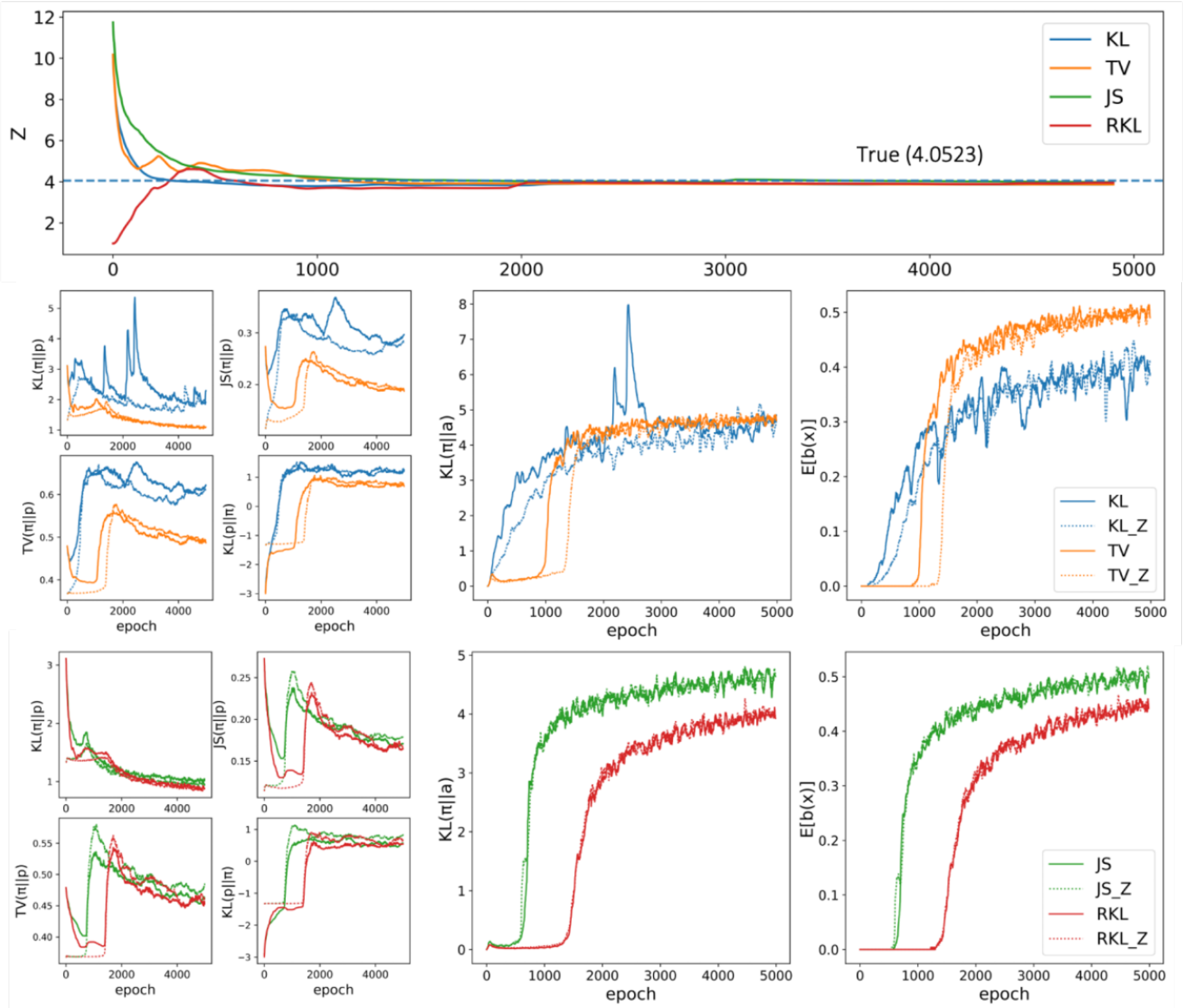}
\end{tabular}
\caption{Ablation for Z estimation. (Top) The evolution of the estimated value of $Z$ compared to its true value. (Middle, Bottom) Comparison of the convergence curve of different $f$-DPG with model using true $Z$ value}\label{fig:z_abla}
\end{figure}

\section{Samples}

\begin{table}[htb]
    \begin{centering}
        {\footnotesize{}}%
        \begin{tabularx}{\textwidth}{cX}
        \hline 
        {\footnotesize{}$\phi(x)$} & {\footnotesize{}generation}\tabularnewline
        \hline 
        \multicolumn{2}{c}{\textbf{\footnotesize{}KL-DPG}}\tabularnewline
        {\footnotesize{}1.00} & {\footnotesize{}The drum waves of the 1990s began blowing up in more
        than one way at Seattle's Melrose Park waterfront. The all-ages feel
        was a reminder that in Seattle, the greener you live}\tabularnewline
        {\footnotesize{}0.06} & {\footnotesize{}2017\textbackslash n\textbackslash nMade 30 starts
        for 776 PA between RFK and a.340 average.\textbackslash n\textbackslash n20
        starts\textbackslash n\textbackslash nVoted 3rd-least MVP player
        in baseball after a 1,}\tabularnewline
        {\footnotesize{}1.00} & {\footnotesize{}After we get back from wrapping up our interview with
        Nick Whitten on Eightam About America, we should enjoy our very first
        interview with him now before mid-January, when we'll be back with}\tabularnewline
        {\footnotesize{}0.85} & {\footnotesize{}This build worked with my Windows 10 build 300cyona-onset
        7s 30sta 3\textbackslash n\textbackslash nClick to expand...}\tabularnewline
        {\footnotesize{}0.79} & {\footnotesize{}rhakus and co Thomas the Great\textbackslash nfarmer
        and award-winning clothing designer The R look perfect for both men
        and women\textbackslash nthink of threesomes as fabulous - make some
        random faux fest}\tabularnewline
        {\footnotesize{}0.88} & {\footnotesize{}Last year, ABC called on Pasco City Council to pass
        a school board resolution ensuring that Orlando Community Schools
        and the cities of Grenholm, Whittier, South Orlando and Monson proceed
        with their}\tabularnewline
        \multicolumn{2}{c}{\textbf{\footnotesize{}TV-DPG}}\tabularnewline
        {\footnotesize{}0.40} & {\footnotesize{}A Skid Row Red tek-rat\textbackslash n\textbackslash n\textbackslash nHistory\textbackslash n\textbackslash n1969
        - vintage English tek-trounx\textbackslash n\textbackslash n1974
        - no model, still 3s2ed, fresh style}\tabularnewline
        {\footnotesize{}0.02} & {\footnotesize{}In 2017, North Korea said it had successfully launched
        its fifth nuclear bomb. Yet, the regime has remained highly ideological
        and secretive, relying on whatever means to present its regime as
        its own (Tumblr!)}\tabularnewline
        {\footnotesize{}1.00} & {\footnotesize{}\textbackslash nThe Crew's legend 20-year-old Tim
        Cahill has been selected as Arjen Robben's starting berth at Elland
        Road for next year's campaign. The Portugal international will play
        43}\tabularnewline
        {\footnotesize{}0.99} & {\footnotesize{}Uh oh I'd like to email you all email when you're
        ready next week. Please keep in mind I'm giving this a BUNCH of quotes
        from the day ago. These quote give you an}\tabularnewline
        {\footnotesize{}1.00} & {\footnotesize{}The Virtual Hallways hosted by Rhys Bloody, Charlotte
        longtime, driving fan and about hiking enthusiast and author Sraveen
        talk about their development plans as they organize their 2017 Virginia
        Tour Views. This season}\tabularnewline
        {\footnotesize{}0.98} & {\footnotesize{}iStock/Deron Adam Austria And Germany Joined in 2009
        by Frau von Krissevan - same engenage\textbackslash n\textbackslash n16
        Jun 2013 by Alex Jones\textbackslash n\textbackslash nNSW Governing
        body wants}\tabularnewline
        \multicolumn{2}{c}{\textbf{\footnotesize{}JS-DPG}}\tabularnewline
        {\footnotesize{}0.01} & {\footnotesize{}Rated 2 out of 5 by roche from Solid Very good did
        it what I expected but usually would have tried cheaper and did not
        like anything it was a solid piece. If you are working 175 across}\tabularnewline
        {\footnotesize{}0.06} & {\footnotesize{}rhakus and co Thomas the Great\textbackslash nfaroe
        and co graphe, Josh The McNall Book\textbackslash neyMoleton\textbackslash nRhipp
        thomctn Castle - William Fairfax's Castle Island}\tabularnewline
        {\footnotesize{}1.00} & {\footnotesize{}Tech Recognitions with the following Green Awards
        of Honor These are industry recognitions based on level of competition
        (professional, technical). Computer Science is showcased very broadly,
        with book awards available with ultimate participation in}\tabularnewline
        {\footnotesize{}0.00} & {\footnotesize{}She's not fully dressed. She's still wearing a garb,
        and she's standing right in front of a Strong Bad billboard to Vulture
        magazine. The renown mechanical star will be watching be paid}\tabularnewline
        {\footnotesize{}1.00} & {\footnotesize{}1.16.1 We've got a bunch of breaking events coming
        one by one. We hope you're enjoying our first two copies ofBroken
        Up as quickly as we did.Also in future}\tabularnewline
        {\footnotesize{}1.00} & {\footnotesize{}With Mt. Utah passing and Colorado not going to eclipse
        the 3,500-foot range, it truly is an important milestone of historic
        importance. Since 1996, Bears Ears Mountain Policy has been facilitating}\tabularnewline
        \multicolumn{2}{c}{\textbf{\footnotesize{}RKL-DPG}}\tabularnewline
        {\footnotesize{}1.00} & {\footnotesize{}\textbackslash nBarbland, West Virginia is featuring
        Krista Walton as the ultimate apple pro! She is a best-selling author
        and plays apple play-partner Judith.\textbackslash n\textbackslash nOctober
        2018, 11}\tabularnewline
        {\footnotesize{}1.00} & {\footnotesize{}Mikata Japan Limited, is said to be the pioneer of
        mobile, proprietary and decentralized art, culture and art promotion
        with its JTC Group Group projects along with ArtDB, Micronet and M}\tabularnewline
        {\footnotesize{}1.00} & {\footnotesize{}Friends were invited by Trips, a company of designers
        who bring together collaboration projects to create ever-evolving
        graphic projects. With their products tested in 2015 for participation
        in Hazard and Project Axis want to}\tabularnewline
        {\footnotesize{}1.00} & {\footnotesize{}Rated a 4.5 out of 5 by Solid Jenni from A good cereal!
        Now I have Superfish! They are amazing and craving it.\textbackslash n\textbackslash nRated
        4 out of 5 by 175area}\tabularnewline
        {\footnotesize{}1.00} & {\footnotesize{}Emmett Gold teaches blockchain in Future\textbackslash n\textbackslash nWe
        are delighted this 10 minute video by Emett Gold demonstrates how
        Efficient and Secure Trading Bitcoin opens up a new business sector
        that is well designed and}\tabularnewline
        {\footnotesize{}1.00} & {\footnotesize{}'s best television series (in August 2012), the premiere
        feature darn right picked the Sounders, turning FC Dallas into an
        all-time best supporting actor. The character of Sigi Schmid that
        nine months}\tabularnewline
        \hline 
        \end{tabularx}{\footnotesize\par}
        \par\end{centering}
\caption{Generation samples for sentiment preference}
\end{table}

\begin{table}[htb]
\begin{centering}
    {\footnotesize{}}%
    \begin{tabularx}{\textwidth}{cX}
    \hline 
    {\footnotesize{}$b(x)$} & {\footnotesize{}generation}\tabularnewline
    \hline 
    \multicolumn{2}{c}{\textbf{\footnotesize{}KL-DPG}}\tabularnewline
    {\footnotesize{}1} & {\footnotesize{}Sultry Liaisons wanna win fun romp!!\textbackslash n\textbackslash nW-Oh,
    that was amazing\textbackslash n\textbackslash nSpecial shout out
    to NCF magazine – why would you not want them doing that}\tabularnewline
    {\footnotesize{}0} & {\footnotesize{}I grew up with Dakota in Salish Valley in Arizona
    at one time. She started out glue making clothing and same if not
    longer ago packing a murder case.. she got super stuck talking about
    lucha}\tabularnewline
    {\footnotesize{}1} & {\footnotesize{}- Product quality check -\textbackslash n\textbackslash n-
    Refinement is amazing - The particular rogue model has survived over
    400 m= and Manila's amazing quality (= due to quality checks)\textbackslash n\textbackslash n-
    The armor Poly}\tabularnewline
    {\footnotesize{}1} & {\footnotesize{}I've been trying to find some builds lately, and the
    build work has been amazing. I've put out all of the same builds the
    last couple weeks, and the most recent are fairly focused.}\tabularnewline
    {\footnotesize{}0} & {\footnotesize{}by Shilam\textbackslash n\textbackslash nWhy is
    the UK TV industry so influential to each other? Why do our universities
    have big broadcasting deals?\textbackslash n\textbackslash nFor
    good or ill, British broadcasting qualifies as the world}\tabularnewline
    {\footnotesize{}1} & {\footnotesize{}offensive needles! he raped me?! don't afford me that!!
    she was amazing!!!there was such a going crazy with it after me!!!
    -gratin facewar!! of the kind of girl}\tabularnewline
    \multicolumn{2}{c}{\textbf{\footnotesize{}TV-DPG}}\tabularnewline
    {\footnotesize{}0} & {\footnotesize{}Flock and lock away all the fun and brighter rewards
    for your lifetime on our new Steam Store!\textbackslash n\textbackslash n\textbackslash nFlock
    and unlock all the fun and brighter rewards for your lifetime on our
    new Steam Store}\tabularnewline
    {\footnotesize{}1} & {\footnotesize{}Isn't that amazing? …\textbackslash n\textbackslash nThis
    is deemed frightening and unpleasant – in short, terrifying and unpleasant
    for the Chinese people.\textbackslash n\textbackslash nIn fact,
    it's the same kind of discomfort and abuse}\tabularnewline
    {\footnotesize{}1} & {\footnotesize{}LINKS\textbackslash n\textbackslash nRejoice, coffee!
    You've hit this amazing perk. If you missed the SMA Mirror boys once
    again I made a list of the 2 greatest reaper mirrors}\tabularnewline
    {\footnotesize{}1} & {\footnotesize{}This photo showed the hidden way the internet works
    together with some amazing construction work that gave important encouragement
    to other creatives. A perpetuation of this myth here is the 8 day
    old women's bulky black}\tabularnewline
    {\footnotesize{}1} & {\footnotesize{}I'm really glad that my sofa didn't get demolished
    (it's amazing to see how big you can get in a fire). You can set up
    the table to sit on inside (}\tabularnewline
    {\footnotesize{}1} & {\footnotesize{}This father was amazing! He looked so cute when she
    waited for him to pass so he's mine right now! The cocksure son was
    being spanked 10 times now my}\tabularnewline
    \multicolumn{2}{c}{\textbf{\footnotesize{}JS-DPG}}\tabularnewline
    {\footnotesize{}1} & {\footnotesize{}The power companies continued to pour into it with
    a great deal this year, an amazing increase over last year's record
    8.82 billion-dollar final revenue figure – which the regulators order
    the companies to}\tabularnewline
    {\footnotesize{}1} & {\footnotesize{}Observations of the Origin of Februrary Premature
    Bacteria\textbackslash n\textbackslash nA state of amazing survival
    is actually in the ascension of the organism to some degree. Each
    of biological species has}\tabularnewline
    {\footnotesize{}1} & {\footnotesize{}Oct 19, 2015\textbackslash n\textbackslash nSo what's
    awesome about the website – different art and animations – is that
    it's packed with amazing content and much, much more than traditional
    icons like H1Z1}\tabularnewline
    {\footnotesize{}0} & {\footnotesize{}It was the culmination of five years recently, when
    a joint venture between Hammer Films and DropBox North and Gabriel
    Garrido, Internet Entertainment's 2-film productions entity officially
    announced that 75\% of these}\tabularnewline
    {\footnotesize{}0} & {\footnotesize{}What is grunge?\textbackslash n\textbackslash nGrunge
    is an almost all American dance music that was first used by the Fifties
    when Abbey Road was booming: it's the closest thing the world has}\tabularnewline
    {\footnotesize{}1} & {\footnotesize{}Huge THANK you to our loyal fans! Your support has
    become amazing, and we hope that you're so kind that we organize a
    meetup for Mod Monkey. A meetup will be held in}\tabularnewline
    \multicolumn{2}{c}{\textbf{\footnotesize{}RKL-DPG}}\tabularnewline
    {\footnotesize{}1} & {\footnotesize{}I hope he's being compared to my amazing friends at
    JRK.\textbackslash n\textbackslash nHey, there's one more issue
    that needs to be talked of: ME fags.I mean, falling into HELL}\tabularnewline
    {\footnotesize{}1} & {\footnotesize{}kk {[}20:42:48{]} <@memegen> a \textasciicircum\textasciicircum\textasciicircum{}
    moderator I'm glad i ended that discussion on civilize liking this
    amazing stuff chat, I put it up because of}\tabularnewline
    {\footnotesize{}1} & {\footnotesize{}What is Anona MS Word? Anona MS Word is an amazing,
    comprehensive Word document. This document will include all of the
    most important details about letters for our school, typical high
    school principals,}\tabularnewline
    {\footnotesize{}1} & {\footnotesize{}and remember\textbackslash n\textbackslash nThis
    father was amazing! He did so much for his son!}\tabularnewline
    {\footnotesize{}1} & {\footnotesize{}No I don't know... In Woody Allen's music.\textbackslash n\textbackslash nI
    got guys talking about poo coming out of his pinkie and their interest
    in it, it is amazing.\textbackslash n\textbackslash nYoung}\tabularnewline
    {\footnotesize{}1} & {\footnotesize{}LINKS\textbackslash n\textbackslash nI'm excited
    to lend a paw for this amazing family member. They were both born
    with a boys body but I'm happy to show of 2 of them with their}\tabularnewline
    \hline 
    \end{tabularx}{\footnotesize\par}
\par\end{centering}
\caption{Generation samples for amazing preference}
\end{table}

\begin{table}[htb]
\begin{centering}
{\footnotesize{}}%
\begin{tabularx}{\textwidth}{cX}
\hline 
{\footnotesize{}$\phi_{1}(x),\ \phi_{2}(x)$} & {\footnotesize{}generation}\tabularnewline
\hline 
\multicolumn{2}{c}{\textbf{\footnotesize{}KL-DPG}}\tabularnewline
{\footnotesize{}$0,\ 0$} & {\footnotesize{}thousands were among the great english music-making
and arts establishments in london during the first 20th century. as
early as 1930, with the entry of jean-luc godard in his}\tabularnewline
{\footnotesize{}$0,\ 1$} & {\footnotesize{}phyllis ruklóschne ( ; born 30 may 1945 in bern
) is a german jurist, historian, politician and professor, solely
responsible}\tabularnewline
{\footnotesize{}$0,\ 0$} & {\footnotesize{}vows fourende ( born june 22, 1975 ) is a
former american football defensive tackle.\textbackslash n}\tabularnewline
{\footnotesize{}$0,\ 1$} & {\footnotesize{}febatun mutamaza ( ) was a senior civilian
administrator who was the vice president of student government for
the university of student state, a post he held for 17}\tabularnewline
{\footnotesize{}$1,\ 0$} & {\footnotesize{}1976. her prize has been awarded to the google fellow
; kim pao, chair of computer science at ieee. her recent book, ``
a new approach for computation : bridging human span}\tabularnewline
{\footnotesize{}$0,\ 1$} & {\footnotesize{}therese ( 8 november 1904 -{}- 22 october 1998
) was a german archaeologist, palaeontologist, stonemasonry pioneer,
academic,}\tabularnewline
{\footnotesize{}$1,\ 0$} & {\footnotesize{}upchurnehunnah was also known by her nickname naskannah,
i.e. `` the queen queen '' ; a reference to a labor official with
the similarly named name posting the}\tabularnewline
\multicolumn{2}{c}{\textbf{\footnotesize{}TV-DPG}}\tabularnewline
{\footnotesize{}$1,\ 0$} & {\footnotesize{}twiechen is a japanese mycologist and educator.
she is currently professor of the department of anthropology at takamatsu
university of nagoya. starting}\tabularnewline
{\footnotesize{}$0,\ 1$} & {\footnotesize{}nottingham, 7 july 1898 -{}- 18 january 1975 )
was an english player, player, manager, journalist and historian.
he served as assistant coach to george brooke}\tabularnewline
{\footnotesize{}$0,\ 1$} & {\footnotesize{}born 1955july 19, scandinavia ) is a polish journalist,
activist, writer and academic. from 2009 to 2011. and party secretary
of the civic party lub}\tabularnewline
{\footnotesize{}$1,\ 1$} & {\footnotesize{}critic, memoirist, historian and dean of providence
college. schlozman began writing about academic writing in book form
in the mid-1980s until she graduated from rutgers university in 1993}\tabularnewline
{\footnotesize{}$0,\ 0$} & {\footnotesize{}he was an instructor of the kagai marathon. his monogram-style
training was suspended on 15 march 1963 for several years and he was
suspended again on 25 september 1960 the same year}\tabularnewline
{\footnotesize{}$1,\ 0$} & {\footnotesize{}himine khalo-gidiane ( born february 13, 1948
) is a finnish political scientist. her research concerns the
welfare and defence of the}\tabularnewline
{\footnotesize{}$0,\ 1$} & {\footnotesize{}`` milagros polika '' madhavan ( born 10 june
1924 ) is a croatian academic, diplomat and writer. milagros polika}\tabularnewline
\multicolumn{2}{c}{\textbf{\footnotesize{}JS-DPG}}\tabularnewline
{\footnotesize{}$1,\ 1$} & {\footnotesize{}in 1868 she was accepted as a rook student with
fellow banker and labour activist mr poormans in chelsea. purialy
appeared in issues of the `` weibo tribune ''}\tabularnewline
{\footnotesize{}$0,\ 0$} & {\footnotesize{}todtemos johannes schleicher ( 25 january 1895
-{}- 13 april 1949 ) was a dutch jesuit priest and mathematician.}\tabularnewline
{\footnotesize{}$0,\ 0$} & {\footnotesize{}thomas murray parker, jr. ( september 4, 1917
-{}- april 28, 1999 ) was an american actor, character actor,
and}\tabularnewline
{\footnotesize{}$0,\ 1$} & {\footnotesize{}eifard eisel '' (, ; 1 february 1877 -{}- 17 august
1947 ) was an influential bulgarian philosopher and peace activist
who is one}\tabularnewline
{\footnotesize{}$1,\ 1$} & {\footnotesize{}the last gentle sally was a student in washington
state, where she performed sylvia long in partnership with a medical
doctor, scientist and educator. washington state state university
faculty member, and}\tabularnewline
{\footnotesize{}$0,\ 1$} & {\footnotesize{}andré anhalt twoork ( born 1977 ), also known
as a. anhalt, is a prolific c-span astronomer, blogger and historian.
he}\tabularnewline
{\footnotesize{}$0,\ 0$} & {\footnotesize{}'( september 27, 1969 ), was a new york-based
r\&b-folk singer-songwriter.\textbackslash n}\tabularnewline
\multicolumn{2}{c}{\textbf{\footnotesize{}RKL-DPG}}\tabularnewline
{\footnotesize{}$0,\ 1$} & {\footnotesize{}-- may 18, 1926 -{}- june 25, 2013 ) was a jewish
chemist who was the first direct participant in the investigation
of several hallmarks of iodine toxicity. dr. w}\tabularnewline
{\footnotesize{}$0,\ 0$} & {\footnotesize{}carlo lumet ( born 16 june 1965 ) is an argentine-born
belgian computer scientist best known for his work in computational
cinematography.\textbackslash n}\tabularnewline
{\footnotesize{}$0,\ 1$} & {\footnotesize{}captain roberto silva flores, c.g, was a responsible
huntingman in the spain, australian historian.\textbackslash n}\tabularnewline
{\footnotesize{}$1,\ 1$} & {\footnotesize{}editith galloom is an american author, academic, professor,
and educator, best known as the co-author of the ebook `` decade four.
'' she is also the academic chef for}\tabularnewline
{\footnotesize{}$0,\ 1$} & {\footnotesize{}eifard eisel ( 31 august 1806 -{}- 20 august 1902
) was a swedish chemist and organometallic chemist. he was born
at his}\tabularnewline
{\footnotesize{}$0,\ 1$} & {\footnotesize{}'philip thomas fitzgerald'( born september 1970
) is an irish historian, historian, and visiting lecturer in archaeology
at durham university}\tabularnewline
{\footnotesize{}$0,\ 0$} & {\footnotesize{}- an american archaeologist known for his work on
late antiquity and ancient british history. he has taken an interest
in archaeology and can take a more in depth look at ancient brit}\tabularnewline
\hline 
\end{tabularx}{\footnotesize\par}
\par\end{centering}
\caption{Generation samples for female 50\% science 100\% preference}
\end{table}

\begin{table}[htb]
\begin{centering}
\textbf{\small{}source document $c$}{\small\par}
\par\end{centering}
\begin{raggedright}
{\scriptsize{}A Russian submarine close to the coast of Britain may
have dragged a trawler violently backwards after snagging in its nets,
a fishermen's organisation has claimed. The Karen was towed at 10
knots during yesterday's incident 18 miles from Ardglass on the south-east
shore of Northern Ireland and the vessel was badly damaged. Ardglass
is one of Northern Ireland's main fishing ports and local trawlermen
are usually more concerned about hitting their quotas than Cold War-style
intrigue. Violently dragged: Captain Paul Murphy\textbackslash xa0of
the\textbackslash xa0Karen, a fishing trawler, holds up a snapped
steel cable aboard his boat. The damage is thought to have been caused
by a Russian submarine . The incident happened off the coast of Northern
Ireland and is the second time in two months that fishermen have reported
being dragged by a suspected submarine (file picture) The 60-foot
boat's captain Paul Murphy was pictured holding a snapped steel cable
on board his boat following the alarming incident. Nato exercises
were held this week in northern Scotland and Ardglass fishing representative
Dick James said the alliance's drills may have attracted Russian interest.
This week RAF Typhoons were launched to intercept two Russian aircraft
near UK air space, the Ministry of Defence has confirmed. Mr James
said: 'Our defence forces are not up to much if a rogue submarine
of unidentified nationality is tearing around the Irish Sea.' Last
month a trawler captain claimed his boat was nearly dragged down by
a Russian submarine while fishing off the Scottish coast. The Karen
was towed at 10 knots during yesterday's incident 18 miles from Ardglass
on the south-east shore of Northern Ireland . Alarming episode: The
Karen was towed at 10 knots during yesterday's incident and was badly
damaged . The trawler's captain Paul Murphy points to an on-board
computerised tracking system that shows his boat's unusual movements
during the incident . Angus Macleod, 46, was fishing for haddock and
skate when he became convinced that a hostile vessel was caught up
below his boat Aquarius. The submarine attempted to free itself, taking
the 65ft vessel and his two-ton catch with it. Recently Russian warships
reportedly used the English Channel en route to military exercises
in the North Atlantic. The coastguard said the Karen reported a collision
at a point known as the Calf of Man not far from the Isle of Man.
The skipper said the boat had been snagged and dragged backwards at
speed. Mr James added: 'You don't need to go long at that until you
go under.' The four crew members scrambled to release wires connecting
the net to the out-of-control trawler, which had been moving slowly
forward but was suddenly sent careering backwards through the water.
As the ship steadied the shaken seamen stopped to catch their breath
but there was no sign of the cause. The vessel made its way back to
Ardglass and part of the deck had to be lifted because it was so badly
damaged, and another section was ripped off. Mr James added: 'It is
a bl{*}{*}{*}y mess.' He said Royal Navy protocols mean an incident
like this would not happen involving a British submarine. He said:
'It is possible that it was a Russian submarine. Another recent alert:
This week RAF Typhoons were launched to intercept two Russian aircraft,
believed to be 'Bear' bombers, (stock image) near UK air space . No
explanation: Experts said Russian President Vladimir Putin's move
to send planes capable of carrying cruise missiles so close to British
shores could be seen as an act of aggression . 'You cannot always
prevent it but if an incident like this did happen the (Royal Navy)
protocols said that the submarine would immediately surface to check
on the health and welfare of the people involved and this one did
not. 'Paul Murphy, the skipper, said that he sat for five to 10 minutes
catching his breath to see if the submarine would surface. 'It was
a submarine, it had to be, it could not have been anything else.'
The incident came as Britain hosted a Nato exercise in northern Scotland
involving more than 50 warships. Separately, the MoD has said RAF
Typhoons, from RAF Lossiemouth, were deployed 'after Russian aircraft
were identified flying close to UK air space'. It said it could not
comment on Royal Navy submarine movements or the fishing vessel incident.
Tensions over the Ukraine conflict have soured relations between the
West and Russia, which is suffering from US and EU sanctions imposed
because the Kremlin is backing separatists in eastern Ukraine. A map
showing how jets were launched from RAF Lossiemouth on Monday, shortly
after HMS Argyll was deployed to monitor a Russian destroyer and two
other ships as they passed through the English Channel . Mr James,
chief executive of the Northern Ireland Fish Producers' Organisation,
said: 'There has been Russian activity. There have been Allied exercises
going on, the Russians have been taking an interest in it. 'The question
mark now is what kind of a submarine was it? 'If it was Allied it
should have been following an agreed protocol where this should not
have happened.' Sinn Fein Northern Ireland Assembly member Chris Hazzard
said the community was angry. He said: 'The fact that this submarine
didn't even surface to make sure the fishermen were safe has caused
considerable resentment in the area. It is totally unacceptable that
a submarine would show such contempt for maritime workers. 'The skipper
and his crew on the Karen, and indeed all of the local fleet, deserve
to know the truth about what happened. 'Whether this is a British
vessel attached to the hugely controversial Trident system or a Nato
submarine in training, our local fishermen deserve justice.'}{\scriptsize\par}
\par\end{raggedright}
\begin{raggedright}
{\scriptsize{}}%
\begin{tabularx}{\textwidth}{cX}
{\scriptsize{}$b(x,c)$} & {\scriptsize{}$x$}\tabularnewline
\hline 
\multicolumn{2}{c}{\textbf{\scriptsize{}KL-DPG}}\tabularnewline
{\scriptsize{}1} & {\makecell[l]{\scriptsize{}the Karen was towed at 10 knots during yesterday's incident
18 miles from Ardglass on the south-east shore of Northern Ireland.
the vessel was badly damaged \\\scriptsize{}and is believed to be caused by a Russian
submarine. trawler's captain, Paul Murphy pictured holding a snapped
steel cable on board his boat. last month a \\\scriptsize{}trawler captain claimed
his boat was nearly dragged down by a Russian submarine while fishing
off the Scottish coast.}}\tabularnewline
\multicolumn{2}{c}{\textbf{\scriptsize{}TV-DPG}}\tabularnewline
{\scriptsize{}1} & {\makecell[l]{\scriptsize{}the Karen was towed 18 miles from Ardglass on the south-east
shore of Northern Ireland. estranged boat's captain Paul Murphy was
pictured carrying a typical \\\scriptsize{}cable on his boat. a Russian submarine
may have caused the dramatic incident.}}\tabularnewline
\multicolumn{2}{c}{\textbf{\scriptsize{}JS-DPG}}\tabularnewline
{\scriptsize{}1} & {\makecell[l]{\scriptsize{}the Karen was towed at 10 knots during yesterday's incident
18 miles from Ardglass on the south-east shore of Northern Ireland.
trawler is thought to have caused \\\scriptsize{}his boat to snagging backwards and
was badly damaged following the incident. last month a trawler captain
claimed his boat was nearly dragged down by a \\\scriptsize{}Russian submarine while
fishing off the Scottish coast.}}\tabularnewline
\hline 
\end{tabularx}{\scriptsize\par}
\par\end{raggedright}
{\scriptsize{}\caption{Generation samples for summarization}
}{\scriptsize\par}
\end{table}

\begin{table}[htb]
\begin{centering}
\textbf{\small{}source document $c$}{\small\par}
\par\end{centering}
\begin{raggedright}
{\scriptsize{}Freddie Roach insisted on Saturday that Floyd Mayweather
does not deserve to be ranked alongside Manny Pacquiao as the leading
fighters of their generation as the two boxers put the finishing touches
to their preparations for the Fight of the Century in Las Vegas a
week on Saturday. Roach, Pacquiao’s trainer, said he rated super-middleweight
star Andre Ward and middleweight sensation Gennady Golovkin above
Mayweather despite the American’s unbeaten record and his status as
hot favourite for the May 2 showdown against Pacquiao. ‘Mayweather
is undefeated so you have to give him a little credit for that,’ said
Roach, ‘but he has picked and chosen his opponents and I don’t think
he’s fought enough competition to be considered the best. You have
to fight the best to be the best, I feel. He’s ducked a lot of guys.
Manny Pacquiao's trainer Freddie Roach says that Floyd Mayweather
cannot be considered best ever . Roach says that Mayweather has picked
and chosen his fights during his career . ‘Manny has had some devastating
losses in his career but he is a realist. He understands that losing
is part of the game and if you don’t think you are going to get knocked
out in this sport, you have picked the wrong sport. I’d put Ward and
Triple G above Mayweather right now. They are very talented guys and
very polished boxers.’ Roach has been vocal in an otherwise low-key
and surprisingly respectful build-up to the welterweight showdown
next month and knows that his tactics have made the stakes even higher
for him. But he is adamant Pacquiao will spring a surprise in the
most eagerly awaited fight for years. ‘The fight will be won and lost
on the ropes,’ said Roach. ‘If Mayweather goes to the ropes and tries
to rest his legs, he will get beat. If he has good movement the entire
night and his legs don’t give out on him, he’ll probably win. It’s
about outscoring him. If he sits on the ropes, we can outscore him.
If he stays in the middle of the ring and boxes all the time he could
possibly outscore us.’ Roach claimed Pacquiao is in the best shape
of his life, moving faster and punching harder than ever before. ‘He
trains harder than any fighter I have ever had in my life,’ said Roach.
‘I have had 33 world champions and nobody can touch him for his work
ethic. His attitude is good. ‘This fight can send out a message about
what we need to do in boxing. When the best fights the best, do you
see how big this is? Someone needs to wake up and put the best with
the best all the time. Because when that happens, we have the best
sport in the world. I don’t care who likes who, you can still negotiate
business. Roach rates super middleweight Andre Ward higher than he
does undefeated Mayweather . Roach, who has trained Pacquiao for 15
years, says the Filipino is in the best shape of his life . ‘It’s
a better fight today than it was five years ago because they were
both a lot faster and more mobile five years ago and it might have
been more of a boxing match but now they’re a little bit older, it’s
going to be a better fight. ‘I believe Manny can win this fight. I
kind of have to win this fight. I’ve been talking a lot. It’s more
important than anything to me. It’s more important to me than getting
my girlfriend, Maya, back. It’s that big because, I mean, I really
like this girl. ‘My mother told me, “You must like her; you put her
Christmas tree up for her, you bought her a car and those earrings
you bought her cost £14,000 each”. ‘After this fight maybe I’ll try
to get her back.’}{\scriptsize\par}
\par\end{raggedright}
\begin{raggedright}
{\scriptsize{}}%
\begin{tabularx}{\textwidth}{cX}
{\scriptsize{}$b(x,c)$} & {\scriptsize{}x}\tabularnewline
\hline 
\multicolumn{2}{c}{\textbf{\scriptsize{}KL-DPG}}\tabularnewline
{\scriptsize{}1} & {\scriptsize{}Freddie Roach, the trainer of Manny Pacquiao, says Floyd
Mayweather is impossible to be considered best ever. he says he rated
andre Ward and Gennady Golovkin above Mayweather despite the American's
unbeaten record.}\tabularnewline
\multicolumn{2}{c}{\textbf{\scriptsize{}TV-DPG}}\tabularnewline
{\scriptsize{}1} & {\scriptsize{}Floyd Mayweather is fighting for Manny Pacquiao in a
series and is one of the top fighters of their generation. trainer
Freddie Roach says he rated super-middleweight star Andre Ward and
middleweight sensation Gennady Golovkin above Mayweather despite his
unbeaten record and his celebrity status as hot favourite.}\tabularnewline
\multicolumn{2}{c}{\textbf{\scriptsize{}JS-DPG}}\tabularnewline
{\scriptsize{}1} & {\scriptsize{}Freddie Roach says Floyd Mayweather can't be ranked
alongside Manny Pacquiao as leading fighters of their generation.
the american's trainer says that he rated super-middleweight star
Andre Ward and middleweight sensation Gennady Golovkin above Mayweather.}\tabularnewline
\hline 
\end{tabularx}{\scriptsize\par}
\par\end{raggedright}
{\scriptsize{}\caption{Generation samples for summarization}
}{\scriptsize\par}
\end{table}

\begin{table}[htb]
\begin{centering}
\textbf{\small{}source document $c$}{\small\par}
\par\end{centering}
\begin{raggedright}
{\scriptsize{}The son of a Labour councillor who was detained in Turkey
after apparently trying to sneak into Syria with eight family members
was seen grinning as he began his journey back to Britain. Waheed
Ahmed, 21, who is the son of Rochdale politician Shakil Ahmed, was
arrested with eight relatives – including four children – in a remote
Turkish border town earlier this month. However, it is understood
he is now returning to the UK and will fly from Dalaman into Manchester
Airport later this evening. Scroll down for video . All smiles:\textbackslash xa0Waheed
Ahmed \textbackslash xa0looks relaxed as he begins his journey back
to the UK after being caught trying to sneak into Syria with eight
family members . On the way home: The 21-year-old, sporting a shaved
head, was filmed being escorted from a vehicle . Sky News reported
that the remaining eight members of his family will remain in Turkey
until Tuesday. The majority of the family flew to Turkey on March
27 from Manchester Airport and are accused of having plans to try
and sneak across the border into Syria. Waheed did not fly out with
his family but joined them three days later on a flight from Birmingham.
Mohammed Shafiq, a friend of Waheed's father, said there were concerns
about his behaviour in the months leading up to his arrest. He told
Sky News: 'There were concerns in the last six months to a year about
a change in his behaviour. 'And a change in his attitude towards various
different issues. 'That was causing concern for people in the community
and his family.' Earlier this month, Waheed's father spoke of his
shock after being told that his son is suspected of being a militant
Islamist. He said: 'All I know is that they were on holiday and then
the next thing I am told is that they have been arrested.' Mr Ahmed
was with his aunt, two cousins and one of their wives when they were
stopped in Turkey, near the Syrian border . Waheed Ahmed, the 21-year-old
son of Labour Councillor Shakil Ahmed, is understood to be returning
to the UK on a flight to Manchester from Dalaman tonight following
his arrest for allegedly trying to sneak into Syria . Waheed Ahmed,
21, \textbackslash xa0is the son of Rochdale Labour councillor Shakil
Ahmed (pictured above with Ed Miliband) The nine Britons, who include
three men, two women and four children aged between one and 11, were
seized in Hatay province, in southern Turkey. It shares a border with
part of Syria controlled by rebel factions including those linked
to Al Qaeda and ISIS. All of those held are from Rochdale and are
the biggest family group caught attempting to enter the unstable territory.
Counter terrorism officers at Greater Manchester Police began an investigation
into their movements and the extended family group were detained at
a checkpoint in Ogulpinar earlier this month. A senior officer questioned
why anyone would take children so young 'and vulnerable' into a warzone.
The three men and two women, aged between 21 and 47, were taken to
a hospital with their children, aged one, three, eight and 11. Waheed
(pictured) was detained in Turkey alongside his aunt, two cousins
and one of his cousin's wives . The nine Britons - four of them children
– were seized by Turkish security forces as they tried to slip into
Syria . Officials said they would be photographed and fingerprinted
before being deported back to the UK. At the time, photographs showed
Waheed, dressed in traditional robes and wearing heavy boots, leading
the group from a minibus into a police station. Several women, all
wearing headscarfs which covered their faces, could be seen carrying
children. Most of the party were wearing walking boots, perfect for
trekking across the rugged region. Shakil Ahmed, a bakery delivery
driver, is a councillor in Kingsway and served alongside Karen Danczuk,
wife of Rochdale MP Simon Daczuk, until her resignation in January.
Speaking as he delivered election campaign leaflets earlier this month,
he said he recognised his son in online newspaper reports of his arrest.
He said the others arrested included Waheed's aunt, Zadia Bi, 50,
and two of her sons and one of their wives. He said: 'I don't know
why they have been arrested. We have no information. Until they ring
we will not know what has happened.' He said that he had seen his
son's photograph and when asked about one picture of his son laughing,
he replied: 'Well, they went on holiday so they shouldn't be crying
on holiday should they?' He added: 'I don't believe my son was on
his way to join Islamic State. I was shocked, worried and extremely
upset to hear that my son has been arrested. During their arrest,
the family were fingerprinted and taken to a police station where
they have been held since . One of the family members, holding a child,
is seen arriving at a Turkish hospital to undergo medical checks .
The family are pictured arriving at a police station in Turkey's southern
Hatay province earlier this month . 'It's a total mystery to me why
he's there, as I was under the impression he was on a work placement
in Birmingham. 'My son is a good Muslim and his loyalties belong to
Britain. If I thought for a second that he was in danger of being
radicalised, I would have reported him to the authorities.' The councillor
added: 'He's studying a degree in politics and sociology at Manchester
University and has a good future ahead of him. I just want to speak
to my son and get him home as soon as possible.' Waheed apparently
called his devastated father to break the news he had been arrested.
Sorry we are not currently accepting comments on this article.}{\scriptsize\par}
\par\end{raggedright}
\begin{raggedright}
{\scriptsize{}}%
\begin{tabularx}{\textwidth}{cX}
{\scriptsize{}$b(x,c)$} & {\scriptsize{}x}\tabularnewline
\hline 
\multicolumn{2}{c}{\textbf{\scriptsize{}KL-DPG}}\tabularnewline
{\scriptsize{}1} & {\scriptsize{}waheed Ahmed, 21-year-old was arrested with eight relatives
in a remote northern Turkish border town this month. it is understood
he is now returning to the UK and will fly from Dalaman into Manchester
Airport later this evening. majority of family flew to Turkey on march
27 and are accused of having plans to try and sneak across the border
into Syria.}\tabularnewline
\multicolumn{2}{c}{\textbf{\scriptsize{}TV-DPG}}\tabularnewline
{\scriptsize{}1} & {\scriptsize{}Waheed Ahmed, 21, is the son of Rochdale Labour councillor
Shakil Ahmed. he was arrested with eight relatives in a remote border
town earlier this month. he is now returning to the UK and will fly
from Dalaman into Manchester Airport later this evening.}\tabularnewline
\multicolumn{2}{c}{\textbf{\scriptsize{}JS-DPG}}\tabularnewline
{\scriptsize{}1} & {\scriptsize{}waheed Ahmed, 21, is the son of Rochdale politician.
he was being held after apparently trying to sneak into Syria. he
was arrested with eight relatives – including four children. but it
is understood he is now returning to the UK.}\tabularnewline
\hline 
\end{tabularx}{\scriptsize\par}
\par\end{raggedright}
{\scriptsize{}\caption{Generation samples for summarization}
}{\scriptsize\par}
\end{table}

\newpage

\end{document}

